%% file: neurips_2026.tex
\documentclass{article}

\PassOptionsToPackage{numbers, compress}{natbib}
\usepackage[preprint]{neurips_2026}

\usepackage[utf8]{inputenc} 
\usepackage[T1]{fontenc}    
\usepackage{hyperref}       
\usepackage{url}            
\usepackage{booktabs}       
\usepackage{amsfonts}       
\usepackage{nicefrac}       
\usepackage{microtype}      
\usepackage{xcolor}         

\usepackage{graphicx}
\usepackage{amsmath}

\usepackage{amsthm}

\newtheorem{theorem}{Theorem}[section]
\newtheorem{lemma}[theorem]{Lemma}
\newtheorem{proposition}[theorem]{Proposition}
\newtheorem{corollary}[theorem]{Corollary}

\theoremstyle{remark}
\newtheorem{remark}[theorem]{Remark}

\usepackage{subcaption}

\title{S2T-RLHF: Hierarchical Credit Assignment for Stable Preference-Based RLHF}

\author{%
  Wei Chen$^{1}$ \quad
  Guanghui Zhu$^{1}$\thanks{Corresponding author.} \quad
  Yafei Li$^{2}$ \quad
  Limin Wang$^{1}$ \quad
  Yihua Huang$^{1}$ \\
  $^{1}$State Key Laboratory for Novel Software Technology, Nanjing University \\
  $^{2}$School of Computer Science and Artificial Intelligence, Zhengzhou University \\
  \texttt{ischenwei@outlook.com, zgh@nju.edu.cn, ieyfli@zzu.edu.cn,}  \\\texttt{lmwang@nju.edu.cn, yhuang@nju.edu.cn}
}

\begin{document}

\maketitle

\begin{abstract}
      
      Reinforcement learning from human feedback (RLHF) with preference-based reward models often exhibits unstable training dynamics. A key contributing factor is that standard RLHF relies on a single sequence-level scalar reward, which is propagated to token-level policy updates and leaves credit assignment within a response inherently ambiguous. Recent work has attempted to address this issue by refining rewards into denser token-level supervision, often relying on the implicit assumption that finer-grained credit assignment improves optimization. We argue that this assumption is incomplete: when preference signals are noisy and only defined at the response level, overly fine-grained reward refinement can amplify reward uncertainty and destabilize learning. To address this problem, we propose a granularity-aware principle for hierarchical credit assignment, emphasizing stability-oriented reward design rather than maximal allocation precision. Under this principle, sentences serve as a natural intermediate granularity, balancing semantic coherence with robustness to token-level noise. Guided by this view, we introduce S2T-RLHF. This sentence-to-token reward decomposition framework first allocates sequence-level preference rewards across sentences and then applies bounded token-level refinement within each sentence, without reward-model retraining or token-level supervision. Experiments across multiple datasets and optimization settings show that S2T-RLHF improves training stability and robustness while maintaining competitive preference alignment.
\end{abstract}

\input{chapters/introduction}

\input{chapters/related}

\input{chapters/methdology}

\input{chapters/token}

\input{chapters/experiment}

\input{chapters/discussion}
\input{chapters/conclusion}

\bibliographystyle{unsrtnat}

\bibliography{reference}

\newpage
\appendix

\input{appendix/app_preliminarie}

\input{appendix/app_proof}

\input{appendix/app_method}

\input{appendix/app_analysis}

\input{appendix/app_end}

\input{appendix/app_details}

\input{appendix/app_metrics_stable}

\input{appendix/app_stable}

\input{appendix/app_ex_tab}

\input{appendix/app_case}

\input{appendix/app_compute}



\end{document}

%% file: chapters/introduction.tex
\section{Introduction}
\label{Introduction}

Reinforcement learning from human feedback (RLHF) has become a central paradigm for preference-based alignment of large language models (LLMs), enabling substantial improvements in helpfulness, harmlessness, and human preference alignment~\cite{christiano2017deep, ziegler2019fine, ouyang2022training, sharma2025rlhf}. Despite this empirical success, RLHF often exhibits pronounced instability in long-horizon language generation. Common pathologies include length bias, reward concentration, and occasional reward collapse~\cite{chai2024ma, moskovitz2024confronting, yang2024regularizing, miao2024inform}. These issues persist despite careful hyperparameter tuning and regularization, suggesting underlying causes beyond implementation artifacts.

A key limitation of the standard RLHF pipeline lies in its reward formulation. The learned reward model assigns a single scalar score to an entire generated response, providing supervision only at the sequence level. While effective for optimizing global preferences, this scalar feedback offers no guidance on how credit should be assigned within a long generation trajectory~\cite{zhong2025dpo}. Consequently, reinforcement learning must propagate a global preference signal across many decoding steps, often resulting in high-variance gradients, unstable policy updates, and limited controllability over fine-grained generation behavior.

We observe that many of these failures arise from a structural mismatch between the semantic granularity of language and the granularity at which rewards are assigned and optimized. At its core, this mismatch reflects a tension between how human preferences are expressed and how language models generate text. Human raters typically judge responses through sentences or semantic units, such as whether a claim is correct, whether an instruction is followed, whether a refusal is appropriate, or whether the tone is safe and helpful. In contrast, LLMs generate discrete tokens whose isolated contributions are sub-symbolic and context-dependent, and often lack interpretable meaning outside their surrounding linguistic structure. Directly deriving token-level rewards from a sequence-level scalar is therefore ill-posed. The global score entangles sentence-level semantic quality with token-level realizations, making fine-grained credit assignment inherently ambiguous and unstable.

Recent efforts~\cite{mao2025information, xie2025capo} have largely treated RLHF instability as a problem of insufficiently dense supervision, motivating increasingly detailed forms of token-level credit assignment. This line of work is often driven by the implicit assumption that finer-grained credit assignment is inherently beneficial for optimization. However, this assumption conflates precision with stability. When preference signals are noisy and defined only at the response level, refining rewards beyond the semantic resolution at which preferences are expressed can amplify reward uncertainty, propagate local artifacts from the reward model, and destabilize learning dynamics rather than improve them.

We therefore suggest that stabilizing RLHF does not hinge on pursuing maximal credit-assignment resolution, but rather on designing reward signals whose granularity aligns with the semantic structure of language. From this perspective, reward decomposition should prioritize semantically meaningful units that support stable optimization, rather than maximizing allocation precision. This granularity-aware perspective reframes dense reward design in RLHF as a problem of semantic alignment and learning stability, rather than one of credit-assignment precision alone.

To operationalize this perspective, we adopt a hierarchical view of credit assignment that respects the semantic structure of language. Rather than treating dense reward design as a flat allocation problem over tokens, we emphasize that credit assignment should reflect the inherent hierarchy of natural language: sentences express coherent semantic intent, while tokens refine that intent through lexical and syntactic choices.

Based on this principle, we present Sentence-to-Token RLHF (S2T-RLHF), a concrete instantiation of hierarchical credit assignment for stable preference-based RLHF. S2T-RLHF first allocates a sequence-level preference reward over semantically stable sentence-level units through a structured cooperative allocation mechanism, and then applies bounded token-level refinement within each sentence. The method uses only sequence-level preference rewards and requires no reward-model retraining or token-level supervision. Importantly, S2T-RLHF is intended not as the only possible realization of this principle, but as a simple and practical instantiation of stability-oriented hierarchical credit assignment. We further provide theoretical discussion in Appendices~\ref{app:discussion}--\ref{app:end_to_end}, analyzing how sequence-level rewards can induce global advantage alignment, long-horizon variance amplification, and length-dependent gradient scaling, and how the two-stage S2T design mitigates these effects.
We summarize our main contributions as follows:

$\bullet$ We propose a credit assignment design principle for RLHF, emphasizing that training stability should be achieved by aligning reward granularity with the semantic structure of language, rather than by pursuing increasingly fine-grained credit assignment.
\vskip -0.05in

$\bullet$ Guided by this principle, we develop S2T-RLHF, a hierarchical semantic credit assignment method that decomposes a sequence-level preference signal into sentence-level allocations followed by token-level refinement, enabling dense and semantically grounded supervision without requiring additional annotations or reward-model retraining.
\vskip -0.05in

$\bullet$ We empirically evaluate S2T-RLHF across multiple datasets and optimization settings, showing that hierarchical credit assignment improves training stability and robustness while maintaining competitive preference alignment against existing credit assignment methods.

The project page is available at \href{https://pasalab.github.io/S2T-RLHF}{\texttt{https://pasalab.github.io/S2T-RLHF}}

%% file: chapters/related.tex
\vspace{-0.2cm}
\section{Related Work}
\label{related work}
\vspace{-0.2cm}

\paragraph{Large Language Models and Preference Alignment.}
Large language models pretrained on large-scale corpora have achieved strong performance in instruction following and open-ended generation~\cite{radford2019language, brown2020language, wei2021finetuned}. However, maximum-likelihood training, while effective for next-token prediction, cannot fully capture desired behavioral properties~\cite{christiano2017deep}, including subjective quality judgments~\cite{stiennon2020learning, kim2024spread}, normative constraints~\cite{ngo2022alignment, wachi2024stepwise}, and task-specific preferences~\cite{zhong2024panacea, kobalczyk2025preference}. This mismatch has motivated preference alignment, which aims to guide model behavior toward outputs that better reflect such criteria. A prominent framework is reinforcement learning from human feedback (RLHF), which learns a reward model from preference comparisons and optimizes the policy via reinforcement learning~\cite{christiano2017deep, ouyang2022training}. Related paradigms include direct policy optimization~\cite{rafailov2023direct}, contrastive preference learning~\cite{hejna2023contrastive}, odds ratio preference optimization~\cite{hong2024orpo}, and other feedback-driven methods~\cite{shao2024deepseekmath}.
\vspace{-0.3cm}
\paragraph{RLHF and Fine-Grained Reward Allocation.}
RLHF is a widely studied framework for aligning language models with preference signals from human feedback. In a typical pipeline, a reward model is trained from preference comparisons~\cite{gao2023scaling, bahdanau2018learning}, and the language model is then optimized with policy-gradient methods~\cite{ouyang2022training, rafailov2023direct, yao2023retroformer}, most commonly proximal policy optimization (PPO)~\cite{schulman2017proximal}, to maximize the predicted reward. This framework is widely used as a standard approach for preference-based policy optimization in language generation~\cite{ouyang2022training, wu2023privately}.

However, conventional RLHF assigns a single scalar reward to each generated response, producing a coarse and sparse learning signal that complicates credit assignment and can destabilize optimization. Recent work has explored fine-grained reward mechanisms to address this limitation. \citeauthor{chan2024dense} leverage reward-model attention weights to densify sequence-level rewards at the token level~\cite{chan2024dense}, but attention weights may not reliably reflect causal token contributions. \citeauthor{yoon2024tlcr} propose TLCR~\cite{yoon2024tlcr}, which trains a discriminator to assign token-level rewards, requiring an additional auxiliary model. \citeauthor{zhong2025dpo} reformulate RLHF as a token-wise MDP and propose RTO~\cite{zhong2025dpo}, which derives token-level rewards from preference data but inherits limitations from DPO's preference modeling and introduces complexity in the multi-stage pipeline. \citeauthor{chai2024ma} propose MA-RLHF~\cite{chai2024ma}, which groups tokens into macro actions to alleviate long-horizon credit assignment, but predefined macro actions remain coarse and may limit semantic adaptability. \citeauthor{cao2025scar} apply Shapley values to distribute sequence-level rewards based on marginal token contributions~\cite{cao2025scar}; however, the resulting credits are tightly coupled to reward-model scalar evaluations and can be sensitive to reward noise. Beyond these representative approaches, a range of recent methods have investigated fine-grained reward allocation~\cite{li2024r3hf, rashid2024critical, mao2025information, xie2025capo}. Despite their different formulations, most existing methods primarily pursue denser token-level supervision, leaving underexplored the question of what reward granularity is semantically meaningful and optimization-stable for preference-based RLHF.

%% file: chapters/methdology.tex
\vspace{-0.25cm}
\section{Methodology}
\label{Methodology}
\vspace{-0.2cm}
\subsection{Problem Setup}
\vspace{-0.2cm}
Given a prompt $x$, a policy $\pi_\theta$ generates a response $y=(t_1,\ldots,t_T)$ according to $\pi_\theta(\cdot|x)$. A preference-based reward model $R_\phi(x,y)$ assigns a scalar sequence-level reward to the full response. Standard RLHF optimizes the policy with a KL-regularized objective,
\[
\max_\theta\;
\mathbb{E}_{x\sim\mathcal{D},\,y\sim\pi_\theta(\cdot|x)}
\left[
R_\phi(x,y)
-\beta\,\mathrm{KL}\!\left(
\pi_\theta(\cdot|x)\,\|\,\pi_{\rm ref}(\cdot|x)
\right)
\right].
\]
S2T-RLHF keeps the reward model fixed and decomposes this sequence-level signal into sentence- and token-level credit signals for policy optimization. Background on Nash bargaining and the Dirichlet distribution is provided in Appendix~\ref{app:preliminaries}.
\begin{figure*}[t]
  \centering
  \vspace{-0.2cm}
  \includegraphics[width=0.95\textwidth]{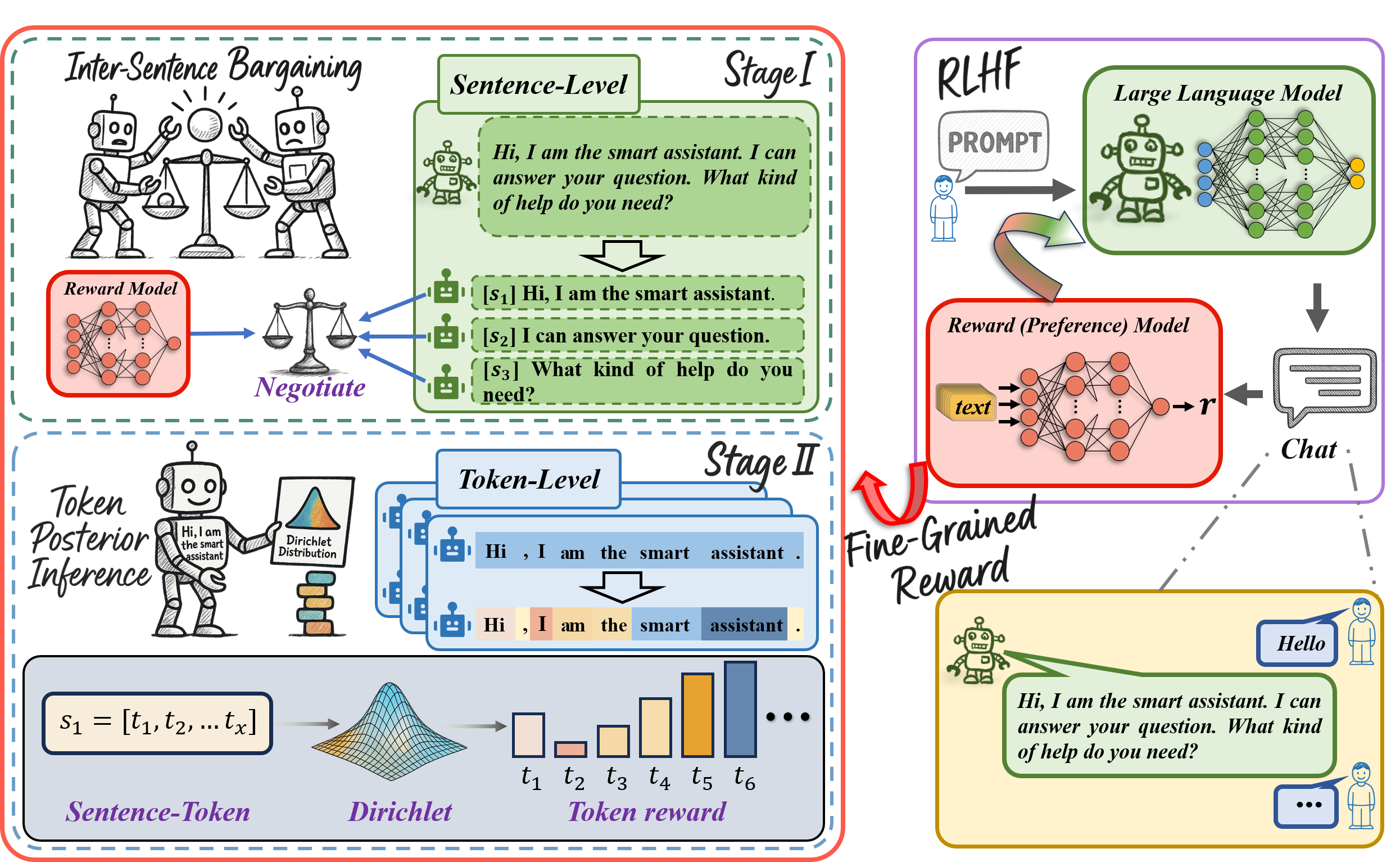}
  \caption{Overview of S2T-RLHF. A global sequence-level reward is first decomposed into sentence-level credits via inter-sentence bargaining (Stage I), and then refined into token-level rewards through Dirichlet posterior inference (Stage II), enabling stable and interpretable credit assignment in RLHF without modifying the reward model.}
  \label{pic:framework}
  \vspace{-0.5cm}
\end{figure*}

\subsection{Semantically-Grounded Reward Decomposition}

In standard RLHF, a single sequence-level scalar reward provides only coarse supervision over a long generation trajectory, leaving fine-grained credit assignment fundamentally underspecified. This mismatch often leads to unstable optimization dynamics, overly localized updates, and limited interpretability of learned behaviors. To address this challenge, we propose \textbf{S2T-RLHF}, a two-stage hierarchical reward decomposition framework that transforms a sequence-level preference signal into semantically grounded dense rewards without requiring additional annotations, reward-model retraining, or partial-sequence queries.

As illustrated in Fig.~\ref{pic:framework}, S2T-RLHF decomposes reward assignment across semantic granularities. In the first stage, the global sequence-level reward is allocated across sentences, producing a structured and interpretable intermediate representation that captures the relative semantic contributions of reasoning steps or conversational units while preserving total reward mass. In the second stage, conditioned on this sentence-level allocation, each sentence reward is redistributed across its constituent tokens, enabling fine-grained credit assignment aligned with local lexical and syntactic realizations. By explicitly separating semantic allocation from token-level refinement, S2T-RLHF provides stable, semantically coherent dense rewards suitable for RLHF optimization.

\subsection{Semantic Granularity and Design Rationale}
\vspace{-0.2cm}
Sentence-level and token-level representations capture different aspects of response quality. At the sentence level, complete semantic units and logical coherence shape the global structure and overall quality of a response. At the token level, variations mainly affect local precision and the expression of critical information. Thus, reward assignment at a single granularity cannot capture both global semantic quality and fine-grained lexical accuracy.

At the sentence level, reward decomposition should emphasize fairness across sentences. Notably, this goal runs counter to the standard reinforcement learning intuition of concentrating reward on the most salient components. Rather than amplifying a single high-reward sentence, sentence-level allocation aims to prevent semantically important sentences from being under-rewarded or suppressed. Relying solely on a sequence-level scalar reward can introduce length bias, as longer sentences exert disproportionate influence by containing more tokens. Moreover, sentences often play distinct but complementary roles within a logical structure, and uneven reward allocation can suppress essential components—an effect exacerbated by coupled instabilities in sequence-level RLHF, including advantage alignment, variance amplification, and length-dependent scaling (Appendix~\ref{app:discussion}). These considerations motivate a sentence-level allocation principle that balances contributions across sentences under a shared reward budget.

In contrast, token-level credit assignment should emphasize relative importance rather than fairness. Tokens differ substantially in information density: key nouns, verbs, entities, numerical values, and logical operators carry significantly higher semantic weight than function words or fillers. Errors in critical tokens (e.g., factual entities or negations) can drastically alter meaning, whereas minor lexical variations often have limited impact. Uniform token-level reward assignment, therefore, suffers from signal dilution, obscuring informative learning signals. Token-level rewards should instead reflect relative importance under uncertainty, enabling sparse yet informative credit assignment within each sentence. This asymmetry between sentence-level allocation and token-level allocation motivates the use of distinct mechanisms in the two stages of S2T-RLHF.

\vspace{-0.15cm}
\subsection{Sentence-Level Reward Allocation}
\vspace{-0.15cm}
RLHF assigns rewards at the sequence level, leaving the contribution of individual sentences underdetermined. We resolve this ambiguity by formulating sentence-level reward allocation as a cooperative bargaining problem, where sentences negotiate their shares of a fixed global reward based on their semantic influence. We use Nash bargaining not for its game-theoretic interpretation, but because it induces a scale-invariant, symmetric, and smooth allocation under a fixed reward budget. Other mechanisms with similar properties could be substituted.

\paragraph{Sentence embedding and semantic perturbation.}
Given a response $x=[s_1,\dots,s_K]$, each sentence $s_i$ is mapped to a contextualized semantic embedding $z_i=\mathrm{Enc}(s_i)\in\mathbb{R}^d$ using the LLM encoder. These embeddings capture sentence-level semantics and form the atomic units for credit assignment. To estimate how the sequence-level reward depends on individual sentences, we construct semantic perturbations by replacing $s_i$ with a set of alternative variants $\{s_i^{(k)}\}$ (Appendix~\ref{app:semantic_perturbation}). For each perturbation, we measure the induced changes in both reward and representation:
\begin{equation}
    \Delta R_{i,k} = R(x) - R(x^{(i\rightarrow k)}), \qquad
    \delta_{i,k} = z_i - z_i^{(k)}.
\end{equation}
These paired differences capture the local sensitivity of the global reward to semantic variations of sentence $i$ under the model's contextual encoding.

\paragraph{Latent semantic influence estimation.}
The perturbation signals above provide local observations of how sentence-level semantic changes affect the sequence-level reward. We summarize these observations by estimating a latent \emph{semantic influence vector} $v_i\in\mathbb{R}^d$ for each sentence via ridge-regularized regression:
\begin{equation}
    v_i = \arg\min_{v}\sum_{k} \big(\Delta R_{i,k} - \langle v,\delta_{i,k}\rangle\big)^2 + \lambda\|v\|^2.
\end{equation}
This vector captures a reward-sensitive semantic direction for sentence $i$, reflecting how the reward model responds to small semantic perturbations and to the LLM's autoregressive contextual structure. The semantic bargaining framework does not depend on any specific analytic properties of the reward model. To model inter-sentence dependencies in the reward-relevant semantic space, we represent each sentence by its influence vector and define
\begin{equation}
\label{eq:interaction_matrix}
    M = V^\top V, \qquad V=[v_1,\dots,v_K]\in\mathbb{R}^{d\times K},
\end{equation}
where $M_{ij}$ measures the degree of alignment between the influence vectors of sentences $i$ and $j$.

\paragraph{Semantic bargaining formulation.}
We introduce positive bargaining coefficients $\beta\in\mathbb{R}^K_{>0}$ to form a cooperative semantic direction from sentence-level influences, and define the latent utility of sentence $i$ as its alignment with this direction:
\begin{equation}
    h = \sum_{j=1}^K \beta_j v_j, \qquad
    \tilde u_i = \langle v_i, h\rangle = (M\beta)_i, \qquad
    u_i = \max(\tilde u_i,\varepsilon),
\end{equation}
where $\varepsilon>0$ ensures well-defined logarithms and reciprocal updates. To allocate reward proportionally to these latent utilities, we maximize the multiplicative bargaining objective
\begin{equation}
    \mathcal{B}(\beta)=\sum_{i=1}^K \log u_i.
\end{equation}

This objective follows the spirit of classical Nash bargaining and is used here as a principled allocation criterion rather than a global optimization guarantee. It yields a scale-invariant and symmetric cooperative allocation in the latent utility space. In practice, we compute the bargaining coefficients through a reciprocal fixed-point condition. In the idealized case without the $\epsilon$-safeguard, any positive fixed point satisfies
\begin{equation}
    M\beta = \beta^{\odot -1},
    \qquad \text{equivalently} \qquad
    u_i\beta_i = 1 \quad \forall i,
\end{equation}
where $u=M\beta$. This Nash-style balance condition does not require all latent utilities to be equal, and therefore allows non-uniform sentence-level allocation. Proposition~\ref{prop:kkt} gives the formal stationarity characterization.

\paragraph{Fixed-point solver and reward assignment.}
Instead of solving for $\tau$ explicitly, we approximate the bargaining equilibrium via a smooth damped fixed-point iteration:
\begin{equation}
\beta^{(t+1)} = (1-\eta)\,\beta^{(t)} + \eta\, \big(\max(M\beta^{(t)},\varepsilon)\big)^{\odot -1},
\label{eq:bargain_fp}
\end{equation}
where $\eta\in(0,1]$ and $\varepsilon>0$. The update is used as a numerical solver rather than a new optimization algorithmic contribution. The resulting fixed-point equation admits at least one solution in the positive orthant under mild regularity conditions (Corollary~\ref{cor:existence}), and the safeguarded iteration is well-defined and preserves positivity (Lemma~\ref{lem:positivity}); detailed diagnostics are provided in Appendix~\ref{app:positivity}. After convergence, sentence-level rewards are obtained by normalizing the latent utilities:
\begin{equation}
r_i = \frac{u_i}{\sum_j u_j}\,R(x), \qquad u = \max(M\beta,\varepsilon).
\end{equation}
These normalized values form a semantically grounded decomposition of the global reward, preserve the total reward mass $\sum_i r_i=R(x)$, and serve as the input for the token-level inference stage.

%% file: chapters/token.tex
\subsection{Token-Level Reward Allocation}

Although sentence-level rewards provide a useful intermediate signal, they do not specify how credit should be assigned within a sentence. Since human preferences may depend on a small subset of decisive tokens, uniform allocation can dilute informative learning signals. We therefore formulate token-level refinement as a simplex-constrained allocation problem. Given contextualized token representations, a lightweight Dirichlet Token Allocation Network (DTAN) predicts non-negative token weights that sum to one within each sentence. The resulting allocation is deterministic, normalized, differentiable, and serves as a bounded refinement of the sentence-level reward.

\paragraph{Dirichlet prior from contextual representations.}
Let $h_{s,t}$ denote the hidden representation of token $t$ in sentence $s$. DTAN maps $h_{s,t}$ to a positive Dirichlet concentration parameter:
\begin{equation}
    \alpha_{s,t} = \mathrm{softplus}(f_\phi(h_{s,t})),
\end{equation}
where $f_\phi$ is a lightweight multilayer perceptron. The concentration parameters encode a contextual prior over token contributions: larger values indicate stronger allocation weight, while smaller values reflect lower confidence. Grounding this prior in token representations allows DTAN to capture semantic and positional cues within each sentence.

\paragraph{Dirichlet-based token allocation.}
Given the concentration parameters, we model token allocation within sentence $s$ as $c_s \sim \mathrm{Dir}(\alpha_{s,1},\ldots,\alpha_{s,L_s})$, where $L_s$ is the sentence length. We use the Dirichlet expectation as the deterministic token contribution:
\[
c_{s,t}=\frac{\alpha_{s,t}}{\sum_{k=1}^{L_s}\alpha_{s,k}}.
\]
Thus $c_{s,t}\ge 0$ and $\sum_{t=1}^{L_s}c_{s,t}=1$, yielding a stable and differentiable allocation of sentence-level reward mass across tokens.

\paragraph{Reward decomposition into token-level signals.}
In practice, the decomposition is implemented as per-token modulation of the sequence-level advantage in PPO, with additional clipping for stability (Appendix~\ref{app:advantage}). Conceptually, if the sentence-level reward is $R^{(s)}$, the token-level signal assigned to token $t$ is
\begin{equation}
    r_{s,t} = R^{(s)} \cdot c_{s,t}.
\end{equation}
This decomposition preserves reward mass, $\sum_{t=1}^{L_s} r_{s,t}=R^{(s)}$, and yields a valid token-level assignment (Proposition~\ref{prop:token_validity}). The resulting signals are differentiable with respect to the DTAN parameters (Lemma~\ref{lem:token_differentiable}) and can be integrated into policy-gradient optimization (Corollary~\ref{cor:token_rl}).
\vspace{-0.2cm}
\paragraph{Joint optimization with RLHF.}
DTAN is not trained during reward-model fitting; it is optimized jointly with the policy during the RLHF phase. The decomposed token-level signals are used to construct fine-grained advantage estimates for policy optimization, while the underlying reward model remains fixed. Let $A_{s,t}$ denote the advantage associated with token $t$ in sentence $s$. For PPO-style optimization, the policy objective can be written as
\begin{equation}
\label{eq:loss_rlhf}
    \mathcal{L}_{\mathrm{RLHF}}(\theta) = - \mathbb{E}\big[ A_{s,t}\, \log \pi_\theta(y_{s,t} \mid x, y_{s,<t}) \big].
\end{equation}
Through repeated policy updates, DTAN learns allocation weights that better guide policy improvement without requiring token-level supervision. This design enables bounded token-level refinement while remaining compatible with standard RLHF training pipelines.

%% file: chapters/experiment.tex
\vspace{-0.25cm}
\section{Experiment}
\label{Experiment}
\vspace{-0.15cm}
\subsection{Experimental Setup}
\vspace{-0.15cm}

We use Gemma-2-9B (Team et al., 2024) as the policy model and RM-Gemma-2B (Dong et al., 2023) as the reward model. HH-RLHF (Bai et al., 2022) serves as the primary training and analysis dataset under a standard PPO-based RLHF pipeline. To assess robustness and cross-dataset generalization, we further evaluate the trained policies zero-shot on AdvBench, RealToxicityPrompts, ToxiGen, and SafeRLHF in Appendix~\ref{app:supplement_ex}. Implementation details are deferred to Appendix~\ref{app:exper_details}. All experiments are conducted on a single A100-SXM4-80GB GPU.

\begin{figure*}[!t]
  \centering
  \includegraphics[width=1\columnwidth]{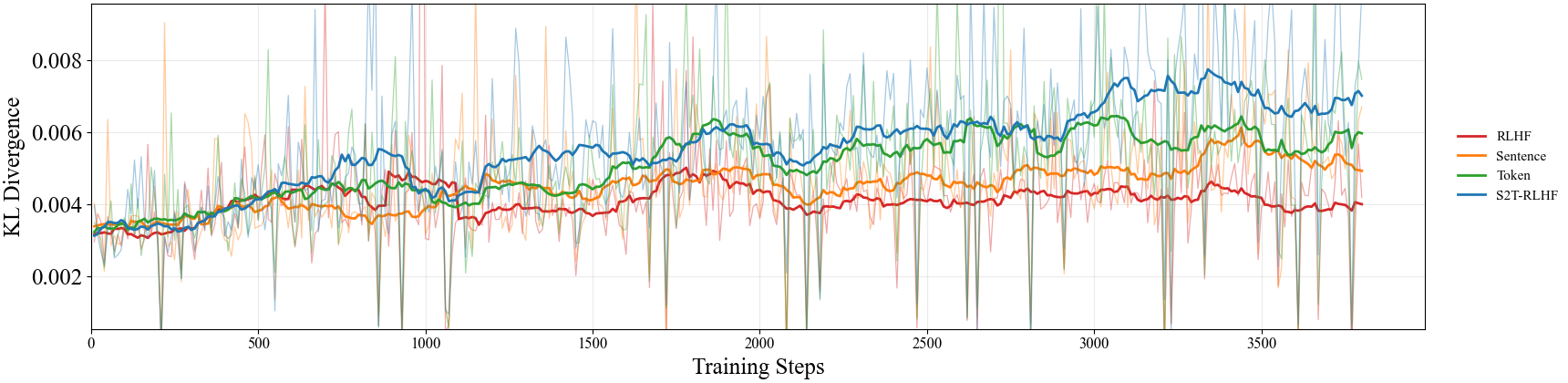}
  \vspace{-0.25cm}
  \caption{KL divergence to the reference policy during training.}
  \label{fig:kl}
  \vskip -0.1in
\end{figure*}

\begin{figure*}[!t]
  \centering
  \includegraphics[width=1\columnwidth]{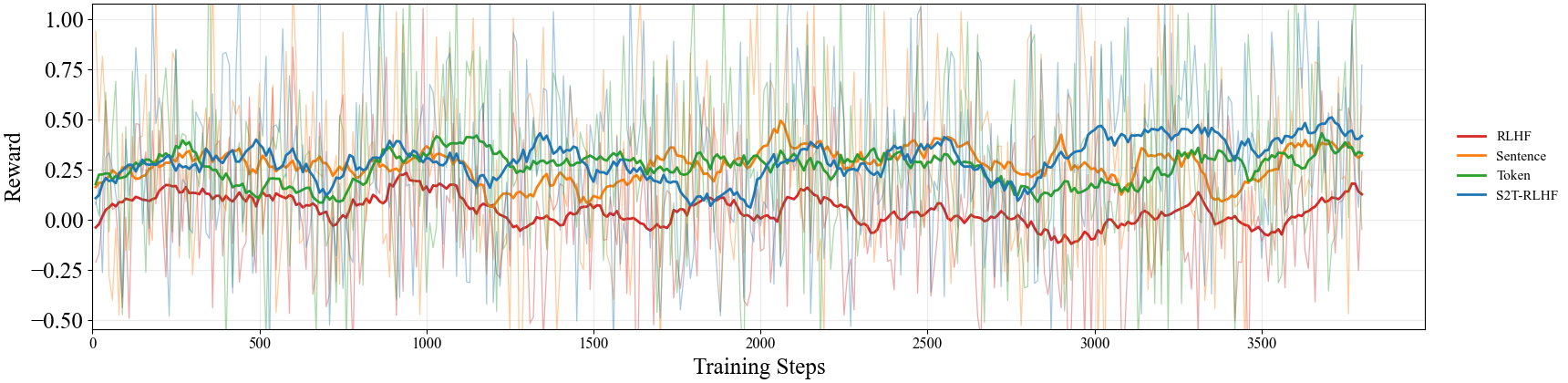}
  \vspace{-0.25cm}
  \caption{Reward trajectory during training.}
  \label{fig:reward}
  \vskip -0.1in
\end{figure*}

\begin{figure*}[!t]
  \centering
  \includegraphics[width=1\columnwidth]{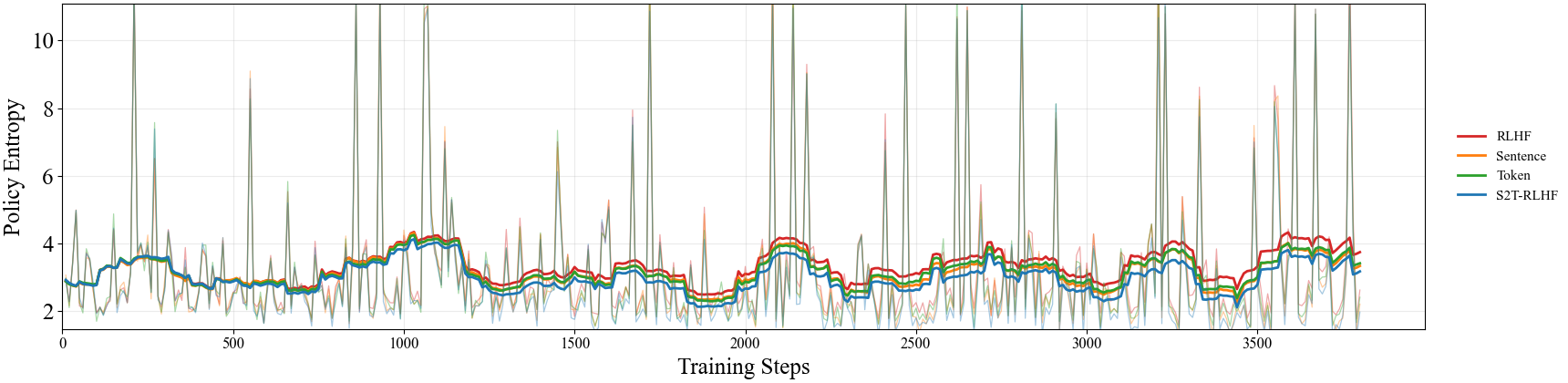}
  \vspace{-0.25cm}
  \caption{Policy entropy evolution during training.}
  \label{fig:entropy}
  \vspace{-0.65cm}
\end{figure*}
\vspace{-0.25cm}
\subsection{Main Results on Training Dynamics and Stability}
\vspace{-0.25cm}

We evaluate three structural variants within the S2T framework:
(i) \emph{sentence-only allocation}, which assigns reward at the sentence level and distributes it uniformly within each sentence;
(ii) \emph{token-only refinement}, which applies bounded token-level reweighting under uniform sentence-level allocation; and
(iii) the full \emph{S2T hierarchy}, which combines sentence-level allocation with bounded token-level refinement.
For reference, we also include standard sequence-level RLHF, where a single scalar reward is propagated to token-level updates under vanilla PPO.

We characterize training dynamics using three complementary signals: KL divergence from the reference policy, policy entropy, and the mean reward trajectory over training steps. KL divergence reflects the magnitude and accumulation of policy updates, entropy captures changes in exploration and potential mode collapse, and the mean reward trajectory highlights temporal fluctuations in the learning signal. We visualize representative cases in Figures~\ref{fig:kl}--\ref{fig:entropy}. Solid lines show moving-average smoothed trajectories, and faint lines indicate raw measurements.

Figure~\ref{fig:kl} shows that all methods exhibit comparable KL divergence in the early stage, indicating similar update scales when training is most fragile. In the mid-to-late stage, S2T progressively departs further from the reference policy than the other variants, yielding the ordering S2T $>$ Token $>$ Sentence $>$ RLHF in KL. Importantly, this larger KL does not coincide with instability. Figure~\ref{fig:reward} shows the same ordering in final mean reward, suggesting that the additional policy drift induced by S2T translates into consistent reward gains while avoiding collapse. Figure~\ref{fig:entropy} further indicates that entropy remains in a comparable range across methods, with no signs of degenerate collapse. S2T exhibits slightly lower late-stage entropy, consistent with a more confident policy while maintaining stable optimization. Finally, the sentence-level variant already provides a favorable trade-off in the early-to-mid stage, achieving higher reward at smaller KL than token-level refinement, supporting our main claim that semantically aligned credit assignment improves reward efficiency and stabilizes learning.

\vspace{-0.15cm}
\textbf{Phase-wise stability metrics.} To strengthen the trajectory-level evidence in Figures~\ref{fig:kl}--\ref{fig:entropy}, we report phase-wise quantitative stability metrics aggregated over three random seeds in Appendix~\ref{app:stability_metrics_default}. We compute summary statistics over an early window (0--1000 steps) and a late window (5000--6000 steps), enabling a controlled comparison of optimization stability across phases. Consistent with the trends in the trajectories, S2T-RLHF exhibits the lowest volatility of the optimization signals and avoids the larger policy drift observed for token-only refinement.
\vspace{-0.15cm}

\textbf{Early-stage stress tests.} In addition, since RLHF failures often concentrate in the early stage, we conduct stress tests under more aggressive learning rates and KL settings and analyze the resulting early-stage dynamics in Appendix~\ref{app:early_stage}. The stress tests highlight that token-only refinement is more prone to early-stage instability under aggressive settings, whereas sentence-level allocation remains comparatively stable. By combining sentence-level allocation with bounded within-sentence refinement, S2T attenuates the instability induced by fine-grained credit while retaining its late-stage optimization benefits. Overall, the sentence-level variant already improves early-stage stability, whereas token-level refinement mainly affects late-stage optimization behavior. By combining both, S2T yields more consistent training dynamics together with stronger final rewards than using either granularity in isolation.
\vspace{-0.15cm}

\textbf{Computational cost.}
We provide a detailed runtime analysis in Appendix~\ref{app:compute}. S2T-RLHF introduces additional computation mainly from perturbation-based sentence influence estimation, while the bargaining solver and DTAN refinement are lightweight.
\vspace{-0.25cm}
\subsection{Preference Alignment Performance}
\vspace{-0.15cm}
We evaluate preference alignment performance using pairwise comparisons, focusing on whether sentence-to-token credit assignment preserves alignment quality while improving training stability.
\begin{table*}[!t]
  \centering
  \small

  \begin{minipage}[t]{0.48\textwidth}
    \centering
    \caption{Pairwise preference evaluation under a GPT-4o-mini evaluator.}
    \label{tab:winrate_judge}
    \begin{sc}
      \begin{tabular}{l c @{}c@{} c c}
        \toprule
        Method & Win (\%) $\uparrow$ &  & Lose (\%) $\downarrow$ & Tie (\%) \\
        \midrule
        vs.\ RLHF & \textbf{53.0} & \scriptsize$>$ & 36.5 & 10.5 \\
        vs.\ ABC  & \textbf{51.5} & \scriptsize$>$ & 38.0 & 10.4 \\
        vs.\ SCAR & \textbf{48.1} & \scriptsize$>$ & 43.1 & 8.8  \\
        \bottomrule
      \end{tabular}
    \end{sc}
  \end{minipage}
  \hfill
  \begin{minipage}[t]{0.48\textwidth}
    \centering
    \caption{Pairwise preference evaluation under an RM-Gemma-7B evaluator.}
    \label{tab:winrate_rm}
    \begin{sc}
      \begin{tabular}{l c @{}c@{} c c}
        \toprule
        Method & Win (\%) $\uparrow$ &  & Lose (\%) $\downarrow$ & Tie (\%) \\
        \midrule
        vs.\ RLHF & \textbf{51.1} & \scriptsize$>$ & 39.5 & 9.4 \\
        vs.\ ABC  & \textbf{48.7} & \scriptsize$>$ & 43.8 & 7.5 \\
        vs.\ SCAR & \textbf{45.8} & \scriptsize$>$ & 44.1 & 10.1  \\
        \bottomrule
      \end{tabular}
    \end{sc}
  \end{minipage}

  \vspace{-0.2in}
\end{table*}

\vspace{-0.25cm}
\paragraph{Baselines.}
We compare S2T-RLHF with three representative approaches:
\textbf{RLHF}~\cite{ouyang2022training}, which applies sequence-level scalar rewards;
\textbf{ABC}~\cite{chan2024dense}, which performs dense token-level credit assignment via attention-based allocation;
and \textbf{SCAR}~\cite{cao2025scar}, which decomposes sequence-level rewards into token-level signals using Shapley-value-based marginal contributions.

\vspace{-0.25cm}
\paragraph{Evaluation Protocol.}
We perform pairwise preference evaluation using two complementary evaluators. First, we adopt an \emph{LLM-as-a-Judge}~\cite{zheng2023judging} protocol based on \textbf{GPT-4o-mini}~\cite{hurst2024gpt,achiam2023gpt} to capture large-scale and consistent preference trends. Second, we evaluate the same model pairs using reward-model-based preferences from \textbf{RM-Gemma-7B}~\cite{dong2023raft} to assess robustness with respect to the choice of evaluator. All methods are trained under identical optimization budgets and compared using the same prompts and evaluation procedure.
\vspace{-0.3cm}

\paragraph{Results and Analysis.}
Table~\ref{tab:winrate_judge} reports win--tie--lose statistics under an \emph{LLM-as-a-Judge} protocol. Across all baselines, S2T-RLHF attains a consistent win advantage (Win $>$ Lose), indicating that sentence-to-token credit assignment preserves strong preference alignment relative to RLHF, ABC, and SCAR. Table~\ref{tab:winrate_rm} reports the same comparison using a learned reward model (RM-Gemma-7B) as the evaluator. While absolute win rates shift across evaluators, the conclusion is unchanged: S2T-RLHF maintains a win advantage over each baseline (Win $>$ Lose), suggesting robustness to the choice of preference signal. Additional evaluations with Qwen-based and larger Gemma-based evaluators are provided in Appendix~\ref{app:model_rm_robustness}, further supporting the robustness of the preference-alignment results across evaluator families and scales.

\textbf{Cross-dataset preference and reward dynamics.} Additional pairwise preference results across datasets are reported in Appendix~\ref{app:cross}. Across most settings, S2T-RLHF maintains a consistent win advantage over the baselines, reinforcing that the proposed stability-oriented decomposition does not sacrifice preference alignment. Appendix~\ref{app:train_time_re} also reports training-time reward trajectories, where S2T-RLHF achieves higher rewards than baseline methods, highlighting more effective utilization of the reward signal.

\textbf{Failure case of fine-grained decomposition.} In addition to these alignment results, we observe a training collapse phenomenon for fine-grained token-level decomposition methods under more aggressive optimization. As illustrated in Appendix~\ref{app:case}, such overly fine-grained reward decomposition can amplify optimization noise and lead to unstable training behavior at larger learning rates.

Overall, preference evaluations verify that a stability-oriented reward decomposition achieves more stable training while maintaining strong alignment performance.

\vspace{-0.2cm}
\subsection{Ablation Study}
\vspace{-0.3cm}
Tables~\ref{tab:ablation_judge} and~\ref{tab:ablation_rm} report pairwise preference outcomes for ablations of the S2T hierarchy under an \emph{LLM-as-a-Judge} protocol and a reward-model-based evaluator, respectively. Across evaluators, S2T-RLHF is consistently preferred over all granularity variants. This indicates that combining sentence-level allocation with bounded within-sentence refinement yields the most effective overall credit-assignment structure among the tested designs. Mechanism-swap ablations in Appendix~\ref{app:mechanism_swap} further show that the gains come from both the two-stage hierarchy and the specific Nash/DTAN allocation mechanisms.
\vspace{-0.3cm}
\paragraph{Discussion.}
The results suggest that the benefit of reward decomposition in RLHF should not be understood solely as a consequence of finer credit assignment. Instead, what matters is whether the reward signal is decomposed at a granularity that matches the semantic structure of language and remains stable under policy optimization. This distinction helps explain why token-only refinement can improve late-stage optimization but is more vulnerable to early-stage instability, whereas sentence-level allocation provides a more robust intermediate abstraction. From this perspective, S2T-RLHF is not merely a denser reward-shaping method, but an instance of a broader granularity-aware reward design principle for preference-based RLHF. We provide further theoretical discussion in Appendices~\ref{app:discussion}--\ref{app:end_to_end}, which analyze the instability of sequence-level RLHF and connect it to the two-stage design of S2T-RLHF.

\begin{table*}[!t]
  \centering
  \small

  \begin{minipage}[t]{0.48\textwidth}
    \centering
    \caption{Ablation preference evaluation under an \emph{LLM-as-a-Judge} evaluator.}
    \label{tab:ablation_judge}
    \begin{sc}
      \begin{tabular}{l c @{}c@{} c c}
        \toprule
        Method & Win (\%) $\uparrow$ &  & Lose (\%) $\downarrow$ & Tie (\%) \\
        \midrule
        vs.\ RLHF & \textbf{53.0} & \scriptsize$>$ & 36.5 & 10.5 \\
        vs.\ Sent.  & \textbf{54.5} & \scriptsize$>$ & 35.4 & 10.1 \\
        vs.\ Token & \textbf{48.6} & \scriptsize$>$ & 41.6 & 9.8  \\
        \bottomrule
      \end{tabular}
    \end{sc}
  \end{minipage}
  \hfill
  \begin{minipage}[t]{0.48\textwidth}
    \centering
    \caption{Ablation preference evaluation under a reward-model evaluator.}
    \label{tab:ablation_rm}
    \begin{sc}
      \begin{tabular}{l c @{}c@{} c c}
        \toprule
        Method & Win (\%) $\uparrow$ &  & Lose (\%) $\downarrow$ & Tie (\%) \\
        \midrule
        vs.\ RLHF & \textbf{51.1} & \scriptsize$>$ & 39.5 & 9.4 \\
        vs.\ Sent.  & \textbf{49.5} & \scriptsize$>$ & 44.6 & 5.9 \\
        vs.\ Token & \textbf{52.2} & \scriptsize$>$ & 43.5 & 4.3  \\
        \bottomrule
      \end{tabular}
    \end{sc}
  \end{minipage}

  \vskip -0.16in
\end{table*}

%% file: chapters/conclusion.tex
\vspace{-0.3cm}
\section{Conclusion}
\label{Conclusion}
\vspace{-0.3cm}

This work revisits reward design in RLHF from the perspective of semantic granularity and training stability.
Through extensive empirical analysis, we show that RLHF instability is not an implementation artifact, but a structural consequence of driving token-level optimization with low-resolution, sequence-level rewards.
Our results challenge the common assumption that increasingly fine-grained credit assignment is inherently beneficial, and instead highlight the importance of granularity-aware reward design that prioritizes semantic structure and stable learning dynamics.
Within this perspective, we propose S2T-RLHF as a concrete instantiation, combining inter-sentence bargaining with lightweight token-level refinement to enable end-to-end credit assignment without modifying the reward model or requiring token-level supervision.

%% file: appendix/app_preliminarie.tex
\section{Preliminaries}
\label{app:preliminaries}

\paragraph{Reinforcement Learning from Human Feedback.}
Reinforcement learning from human feedback (RLHF) is a framework for aligning pretrained language models with human preferences. Let $\pi_\theta(y\mid x)$ denote a conditional language model that generates a response $y = (t_1,\dots,t_T)$ given a prompt $x$. A reward model $R_\phi(x,y) \in \mathbb{R}$ is trained on human preference data to assign a scalar quality score to each complete response. RLHF then optimizes the model parameters $\theta$ to maximize the expected reward under $R_\phi$,  typically with a KL-divergence regularization term that prevents $\pi_\theta$ from deviating too far from a reference model $\pi_{\mathrm{ref}}$~\cite{ouyang2022training}:

\begin{equation}
  \max_{\theta}\;\mathbb{E}_{y \sim \pi_\theta} \big[ R_\phi(x,y) \big] - \beta\,\mathrm{KL}\!\big(\pi_\theta \,\|\, \pi_{\mathrm{ref}}\big).
\end{equation}

where $\beta > 0$ controls the trade-off between reward maximization and divergence from the reference model. In standard implementations, the reward model evaluates responses holistically, yielding a single global sequence-level signal that does not distinguish the contributions of individual tokens.

\paragraph{Nash Bargaining.}
Nash bargaining~\cite{muthoo1999bargaining, navon2022multi, zeng2024fairness} is a classical solution concept in cooperative game theory that prescribes how to divide joint gains among multiple parties in a manner that is both fair and Pareto-efficient. Consider $n$ players with utilities $u_1,\dots,u_n$ and disagreement points $d_1,\dots,d_n$, where $d_i$ represents the utility player $i$ receives if no agreement is reached. Let $\mathcal{U} \subseteq \mathbb{R}^n$ denote the feasible set of utility vectors. The Nash bargaining solution is defined as:
\begin{equation}
  u^\star \in \arg\max_{u \in \mathcal{U},\, u_i \ge d_i} \sum_{i=1}^n \log \big(u_i - d_i\big).
  \label{eq:nash-bargaining}
\end{equation}
This solution is characterized by symmetry, scale invariance, Pareto efficiency, and independence of irrelevant alternatives. It provides a principled framework for allocating utility among interacting components.

\paragraph{Dirichlet Distribution.}
The Dirichlet distribution is a probability distribution defined over the probability simplex, consisting of vectors
$p = (p_1,\dots,p_K)$ satisfying $p_k \ge 0$ and $\sum_{k=1}^K p_k = 1$. 
It is parameterized by a concentration vector $\boldsymbol{\alpha} = (\alpha_1,\dots,\alpha_K) \in (0,\infty)^K$. We write $p \sim \mathrm{Dir}(\boldsymbol{\alpha})$ if $p$ has density
\begin{equation}
    \label{eq:dirichlet}
    \mathrm{Dir}(p ; \boldsymbol{\alpha})
    = \frac{\Gamma\!\left(\sum_{k=1}^K \alpha_k\right)}
         {\prod_{k=1}^K \Gamma(\alpha_k)}
    \prod_{k=1}^K p_k^{\alpha_k - 1}.
\end{equation}
The mean of the Dirichlet distribution is given by
$\mathbb{E}[p_k] = \alpha_k / \alpha_0$ for all $k$, where
$\alpha_0 = \sum_{k=1}^K \alpha_k$ denotes the total concentration parameter,
which controls how peaked or diffuse the distribution is over the simplex.
As a distribution over normalized proportions, the Dirichlet distribution is widely used to model uncertainty in categorical probabilities and to represent relative weights over multiple outcomes, especially as a conjugate prior for categorical or multinomial variables.

%% file: appendix/app_proof.tex
\newpage 
\section{Proofs}
\label{app:proofs}

\begin{lemma}[Stationarity in open domains]
\label{lem:open_stationary}
Let $\mathcal{D}\subset \mathbb{R}^n$ be open and $\phi:\mathcal{D}\to\mathbb{R}$ 
continuously differentiable. Then $x^\star\in\mathcal{D}$ satisfies 
$\nabla \phi(x^\star)=0$ if and only if $x^\star$ is a stationary point 
of the unconstrained problem $\min_{x\in\mathcal{D}} \phi(x)$.
\end{lemma}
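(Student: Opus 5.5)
This is essentially definitional, so the plan is to fix the notion of stationary point and check that it coincides with vanishing gradient on an open set. We adopt the standard first-order definition. A point $x^\star\in\mathcal{D}$ is a stationary point of $\min_{x\in\mathcal{D}}\phi(x)$ if no feasible direction yields first-order decrease, that is,
\[
\langle \nabla\phi(x^\star), d\rangle \ge 0 \quad \text{for every feasible direction } d \text{ at } x^\star .
\]
Here $d$ is feasible if $x^\star+td\in\mathcal{D}$ for all sufficiently small $t>0$. The only place openness enters is the claim that every $d\in\mathbb{R}^n$ is feasible. Since $\mathcal{D}$ is open, there is a ball $B(x^\star,r)\subset\mathcal{D}$. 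For any $d$, the point $x^\star+td$ lies in that ball whenever $0<t<r/\|d\|$, or for all $t$ if $d=0$. Hence the cone of feasible directions at $x^\star$ is all of $\mathbb{R}^n$.

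For the ``only if'' direction, assume $\nabla\phi(x^\star)=0$. Then $\langle\nabla\phi(x^\star),d\rangle=0\ge 0$ for every $d$, so $x^\star$ is stationary. Equivalently, the directional derivative $\lim_{t\downarrow0}\bigl(\phi(x^\star+td)-\phi(x^\star)\bigr)/t=\langle\nabla\phi(x^\star),d\rangle$ vanishes in every direction. This limit exists because $\phi$ is $C^1$; differentiability alone would suffice.

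For the ``if'' direction, assume $x^\star$ is stationary. Apply the defining inequality to both $d=\nabla\phi(x^\star)$ and $d=-\nabla\phi(x^\star)$; both are feasible by the openness step. This gives $\|\nabla\phi(x^\star)\|^2\ge0$ and $-\|\nabla\phi(x^\star)\|^2\ge0$, so $\nabla\phi(x^\star)=0$.

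The main subtlety is not computational but definitional. If the paper intends ``stationary'' to mean merely $\nabla\phi=0$, the lemma is a tautology and the proof is one line. So the proof should state the feasible-direction (first-order) definition explicitly and stress that openness is exactly what removes the boundary constraints. Those constraints would otherwise allow a nonzero gradient at a stationary point, as happens at boundary points of the positive orthant. This is what the later stationarity statement for the bargaining objective, Proposition~\ref{prop:kkt}, will use: on the open set $\mathbb{R}^K_{>0}$, a positive fixed point is stationary exactly when $\nabla\mathcal{B}=0$.
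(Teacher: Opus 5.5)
Your proof is correct and follows the paper's one-line argument: openness removes the boundary constraints, so the first-order condition $\langle\nabla\phi(x^\star),d\rangle\ge 0$ over all feasible $d$ reduces to $\nabla\phi(x^\star)=0$, and you have only spelled out the feasible-direction definition and the $\pm\nabla\phi(x^\star)$ test. One correction to your closing remark: Proposition~\ref{prop:kkt} applies the lemma to the potential $\phi(\beta)=\tfrac12\beta^\top M\beta-\sum_i\log\beta_i$, not to $\mathcal{B}$, and indeed for the idealized $\mathcal{B}(\beta)=\sum_i\log(M\beta)_i$ a positive fixed point gives $\nabla\mathcal{B}(\beta^\star)=Mu^{\odot-1}=M\beta^\star=u\neq 0$, so it is not a stationary point of $\mathcal{B}$.
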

\begin{proof}
Since $\mathcal{D}$ is open, there are no boundary constraints, and 
first-order stationarity reduces to $\nabla \phi(x^\star)=0$.
\end{proof}

\paragraph{Idealized analysis vs. practical safeguard.}
The results in this section analyze the idealized fixed-point equation $M\beta=\beta^{\odot-1}$ (i.e., without the $\varepsilon$-safeguard) to elucidate its Nash-style structure.
Our implementation uses the safeguarded utilities $u=\max(M\beta,\varepsilon)$ to ensure well-defined logarithms and reciprocal updates; see Appendix~\ref{app:positivity} for details.

\begin{proposition}[Stationarity of the fixed-point equilibrium]
\label{prop:kkt}
Any positive fixed point $\beta^\star \in \mathbb{R}^K_{++}$ satisfying  $M\beta^\star = (\beta^\star)^{\odot-1}$ is a stationary point of the potential function
\begin{equation}
\label{eq:phi_def}
    \phi(\beta) := \frac{1}{2}\beta^\top M\beta - \sum_{i=1}^K \log \beta_i.
\end{equation}
Moreover, with $u = M\beta^\star$, the fixed point satisfies the Nash-style equalization condition $u_i \beta^\star_i = 1$ for all $i$.
\end{proposition}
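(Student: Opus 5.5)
The plan is to compute the gradient of $\phi$ on the open positive orthant $\mathbb{R}^K_{++}$ and invoke Lemma~\ref{lem:open_stationary}. On $\mathbb{R}^K_{++}$ each $\log\beta_i$ is smooth, so $\phi$ is continuously differentiable there. Since $M=V^\top V$ is symmetric by \eqref{eq:interaction_matrix}, the quadratic term has gradient $\nabla\bigl(\tfrac12\beta^\top M\beta\bigr)=M\beta$. The barrier term contributes $-\beta^{\odot-1}$. Hence
\[
\nabla\phi(\beta)=M\beta-\beta^{\odot-1},\qquad \beta\in\mathbb{R}^K_{++}.
\]
Symmetry of $M$ is the only structural fact needed. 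Without it the quadratic gradient would be $\tfrac12(M+M^\top)\beta$, so I will state explicitly that this is where the Gram form of $M$ enters.

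Next, I substitute the fixed point. If $\beta^\star\in\mathbb{R}^K_{++}$ satisfies $M\beta^\star=(\beta^\star)^{\odot-1}$, then $\nabla\phi(\beta^\star)=0$. Because $\mathbb{R}^K_{++}$ is open and $\phi$ is $C^1$ there, Lemma~\ref{lem:open_stationary} identifies $\beta^\star$ as a stationary point of $\min_{\beta\in\mathbb{R}^K_{++}}\phi(\beta)$. The converse also holds, by the same computation, and I will note it in passing. As a remark, $M\succeq 0$ and $-\log$ is strictly convex, so $\phi$ is strictly convex and the stationary point is in fact the unique global minimizer on the orthant. This is not required by the statement.

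For the Nash-style condition, I set $u=M\beta^\star$ and read the fixed-point equation componentwise: $u_i=1/\beta^\star_i$. Multiplying by $\beta^\star_i>0$ gives $u_i\beta^\star_i=1$ for every $i$. I will add that this fixes only the products $u_i\beta_i^\star$ and not the values $u_i$ themselves, which matches the remark in the main text that the utilities need not be equal.

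The argument has no real obstacle; it is routine calculus. The only points needing care are to stay in the idealized setting without the $\varepsilon$-safeguard, so that $\max(\cdot,\varepsilon)$ introduces no nonsmoothness, and to confirm that $\beta^\star$ lies in the interior, which the positivity hypothesis guarantees.
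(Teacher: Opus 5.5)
Your proposal is correct and follows the paper's proof essentially step for step. You compute $\nabla\phi(\beta)=M\beta-\beta^{\odot-1}$ using the symmetry of $M$, observe that it vanishes at the fixed point, invoke Lemma~\ref{lem:open_stationary} on the open orthant, and read $u_i\beta^\star_i=1$ off componentwise; your extra remark that $\phi$ is strictly convex (its Hessian $M+\mathrm{diag}(\beta_i^{-2})$ is positive definite) is also correct, and gives uniqueness of the global minimizer, which the paper does not state here.
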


\begin{proof}
We first establish stationarity of $\phi$. Since $\phi$ is continuously differentiable on the open domain $\mathbb{R}^K_{++}$. The gradient is
\begin{equation}
\label{eq:grad_phi}
    \nabla \phi(\beta) = M\beta - \beta^{\odot-1},
\end{equation}
where we used the symmetry of $M$ for the quadratic term and $\nabla(-\sum_i \log \beta_i) = -\beta^{\odot-1}$ for the logarithmic barrier.

For any fixed point $\beta^\star$ satisfying $M\beta^\star = (\beta^\star)^{\odot-1}$, 
we have $\nabla \phi(\beta^\star) = 0$. Since $\mathbb{R}^K_{++}$ is open (no boundary constraints), this is equivalent to the first-order stationarity condition of $\phi$ (Lemma~\ref{lem:open_stationary}).

For the Nash-style characterization, let $u:= M\beta^\star$.  The fixed-point equation $M\beta^\star = (\beta^\star)^{\odot-1}$ directly implies $u = (\beta^\star)^{\odot-1}$, yielding 
$u_i \beta^\star_i = 1$ for all $i$. This equalization condition characterizes the marginal utility balance in the Nash bargaining framework: since $\nabla_u \sum_i \log u_i = u^{\odot-1}$, 
the fixed point ensures $u^{\odot-1} = \beta^\star$, meaning marginal utilities in the $u$-space are proportional to the bargaining coefficients.
\end{proof}

\begin{corollary}[Existence of positive fixed points]
\label{cor:existence}
If the interaction matrix $M = V^\top V$ is positive definite, then the 
fixed-point equation $M\beta = \beta^{\odot -1}$ admits at least one 
solution $\beta^\star \in \mathbb{R}^K_{++}$.
\end{corollary}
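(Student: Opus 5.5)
The plan is to obtain a positive fixed point as a minimizer of the potential $\phi(\beta)=\tfrac12\beta^\top M\beta-\sum_i\log\beta_i$ from Proposition~\ref{prop:kkt}, over the open orthant $\mathbb{R}^K_{++}$. Proposition~\ref{prop:kkt} gives $\nabla\phi(\beta)=M\beta-\beta^{\odot-1}$. So any interior minimizer is a stationary point by Lemma~\ref{lem:open_stationary}, and hence satisfies $M\beta^\star=(\beta^\star)^{\odot-1}$. It therefore suffices to show that $\phi$ attains its infimum on $\mathbb{R}^K_{++}$.

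\textbf{Step 1: coercivity toward infinity.} Let $\mu>0$ be the smallest eigenvalue of $M$; it is positive by the positive-definiteness assumption. Then
\[
\phi(\beta)\ge \tfrac{\mu}{2}\|\beta\|^2-\sum_i\log\beta_i\ge \tfrac{\mu}{2}\|\beta\|^2-K\log\|\beta\|_\infty .
\]
This tends to $+\infty$ as $\|\beta\|\to\infty$ within the orthant. Each term $\tfrac{\mu}{2}\beta_i^2/K$ also dominates the matching $-\log\beta_i$.

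\textbf{Step 2: barrier at the boundary.} Suppose some coordinate $\beta_i\to0^+$ while $\beta$ stays bounded. The quadratic term is then bounded, and the other log terms are bounded above. Hence $-\log\beta_i\to+\infty$ forces $\phi\to+\infty$.

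\textbf{Step 3: compactness of a sublevel set.} Combining Steps 1 and 2, the sublevel set $\{\beta\in\mathbb{R}^K_{++}:\phi(\beta)\le\phi(\mathbf{1})\}$ is bounded and stays away from the faces $\{\beta_i=0\}$. By continuity of $\phi$ it is closed in $\mathbb{R}^K$, so it is compact. Weierstrass then gives a minimizer $\beta^\star\in\mathbb{R}^K_{++}$.

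\textbf{Step 4: conclusion.} Since $\beta^\star$ is interior, $\nabla\phi(\beta^\star)=0$, which yields the claim. As a bonus, $\phi$ is strictly convex: $\nabla^2\phi=M+\mathrm{diag}(\beta^{\odot-2})\succ0$. The fixed point is therefore in fact unique, though only existence is required.

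\textbf{Main obstacle.} The only delicate step is Step 3. One must make a uniform argument that the sublevel set avoids the boundary, and this must also cover sequences where some coordinates grow while others vanish. To handle it, I would first bound $\|\beta\|$ on the sublevel set using Step 1. With $\|\beta\|\le C$, all the remaining $-\log\beta_j$ terms are bounded below by $-\log C$. This yields an explicit lower bound $\beta_i\ge\exp\!\big(-\phi(\mathbf{1})-(K-1)\log C\big)$ for every $i$.
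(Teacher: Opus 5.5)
Your proposal is correct and follows the paper's proof: you minimize $\phi(\beta)=\tfrac12\beta^\top M\beta-\sum_i\log\beta_i$ over $\mathbb{R}^K_{++}$, use positive definiteness for growth at infinity and the log barrier near the faces, and read off $M\beta^\star=(\beta^\star)^{\odot-1}$ from first-order stationarity at an interior minimizer. Your explicit sublevel-set bounds ($\|\beta\|\le C$ and $\beta_i\ge\exp(-\phi(\mathbf{1})-(K-1)\log C)$) make rigorous the coercivity step that the paper only asserts, and your strict-convexity remark correctly adds uniqueness on top of existence.
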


\begin{proof}
Consider the potential function 
$\phi(\beta) = \frac{1}{2}\beta^\top M\beta - \sum_{i=1}^K \log\beta_i$ 
on $\mathbb{R}^K_{++}$. 

Since $M \succ 0$, we have:
\begin{itemize}
    \item The quadratic term $\frac{1}{2}\beta^\top M\beta$ is strictly convex 
    and grows as $\|\beta\|^2$ when $\|\beta\| \to \infty$.
    \item The logarithmic barrier $-\sum_i \log\beta_i$ tends to $+\infty$ 
    as any $\beta_i \to 0^+$.
\end{itemize}

Therefore, $\phi$ is coercive on $\mathbb{R}^K_{++}$ and attains its minimum 
at some $\beta^\star \in \mathbb{R}^K_{++}$. At this minimizer, the first-order 
condition $\nabla\phi(\beta^\star) = 0$ yields $M\beta^\star = (\beta^\star)^{\odot -1}$.
\end{proof}

\begin{lemma}[Well-definedness and positivity under safeguarded iteration]
\label{lem:positivity}
Let $\varepsilon>0$ and define the safeguarded utility $u(\beta):=\max(M\beta,\varepsilon)$ elementwise.
Then the damped iteration
\begin{equation}
    \beta^{(t+1)} = (1-\eta)\beta^{(t)} + \eta\, u(\beta^{(t)})^{\odot-1},
    \quad \eta\in(0,1],
\end{equation}
is well-defined and preserves positivity: if $\beta^{(0)}\in\mathbb{R}^K_{>0}$, then $\beta^{(t)}\in\mathbb{R}^K_{>0}$ for all $t\ge 0$.
\end{lemma}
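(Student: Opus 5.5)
We argue by induction on $t$, and the only ingredient is the lower bound $\varepsilon>0$ built into the safeguarded utility. The base case $\beta^{(0)}\in\mathbb{R}^K_{>0}$ is the hypothesis. For the inductive step, assume $\beta^{(t)}\in\mathbb{R}^K_{>0}$. We check two things in turn: that the update is well defined at $\beta^{(t)}$, and that the result lies again in the open positive orthant.

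\emph{Well-definedness.} The map $\beta\mapsto M\beta$ is a finite-dimensional linear map, so $M\beta^{(t)}\in\mathbb{R}^K$ is a finite vector. Each safeguarded coordinate
\begin{equation*}
u_i(\beta^{(t)})=\max\bigl((M\beta^{(t)})_i,\varepsilon\bigr)
\end{equation*}
is therefore a finite real number with $u_i\ge\varepsilon>0$. Consequently the elementwise reciprocal $u(\beta^{(t)})^{\odot-1}$ exists, and each entry satisfies
\begin{equation*}
0<u_i^{-1}\le\varepsilon^{-1}.
\end{equation*}
No sign or structural assumption on $M$ is needed here; in particular we do not use positive definiteness from Corollary~\ref{cor:existence}. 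This is exactly why the safeguard is introduced: without it, $(M\beta)_i$ could be zero or negative and the reciprocal would be undefined or negative.

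\emph{Positivity.} Coordinatewise, the update reads
\begin{equation*}
\beta^{(t+1)}_i=(1-\eta)\beta^{(t)}_i+\eta\,u_i^{-1}.
\end{equation*}
Since $\eta\in(0,1]$, we have $1-\eta\ge 0$, and $\beta^{(t)}_i>0$ by the induction hypothesis, so the first term is nonnegative. The second term satisfies $\eta\,u_i^{-1}\ge \eta\,\varepsilon^{-1}\cdot\varepsilon\, u_i^{-1}>0$, i.e.\ it is strictly positive because $\eta>0$ and $u_i^{-1}>0$. A nonnegative term plus a strictly positive term is strictly positive, so $\beta^{(t+1)}_i>0$ for every $i$, which closes the induction.

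We also obtain a uniform lower bound $\beta^{(t+1)}_i\ge\eta\,u_i^{-1}$, which is positive but depends on $u_i$. We do not need this bound, because the lemma claims only positivity.

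The only delicate point is the endpoint $\eta=1$. There the first term vanishes, so positivity must come entirely from the reciprocal term, and this is precisely where $\varepsilon>0$ is indispensable.
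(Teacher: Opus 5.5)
Your proof is correct and follows the paper's argument: the safeguard gives $u_i\ge\varepsilon>0$, so the reciprocal term is strictly positive, and the update (a convex combination of $\beta^{(t)}$ and $u(\beta^{(t)})^{\odot-1}$) stays in the positive orthant by induction. Your explicit treatment of $\eta=1$ and your remark that no property of $M$ is needed are fine additions. Two small wording slips: the chain $\eta\,u_i^{-1}\ge \eta\,\varepsilon^{-1}\cdot\varepsilon\,u_i^{-1}$ is a trivial identity, and the bound $\eta\,u_i^{-1}$ is not ``uniform'' since it depends on $u_i$.
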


\begin{proof}
By construction, $u(\beta)\ge \varepsilon$ elementwise, hence $u(\beta)^{\odot-1}>0$ elementwise.
Since $\beta^{(t+1)}$ is a convex combination of two strictly positive vectors $\beta^{(t)}$ and $u(\beta^{(t)})^{\odot-1}$, positivity follows by induction.
\end{proof}

\begin{remark}[Practical implications]
Corollary~\ref{cor:existence} guarantees that when $M$ is positive definite (which holds when the influence vectors $\{v_i\}_{i=1}^K$ are linearly independent), the Nash-style bargaining equilibrium exists. Lemma~\ref{lem:positivity} shows that the safeguarded iteration is well-defined and preserves positivity, avoiding division-by-zero or negative weights during optimization.
\end{remark}

\begin{proposition}[Validity of token-level reward decomposition]
\label{prop:token_validity}
    Let $\alpha_{s,t} > 0$ for $t=1,\dots,L_s$, and define $c_{s,t} = \frac{\alpha_{s,t}}{\sum_{k=1}^{L_s} \alpha_{s,k}}$, while $r_{s,t} = R^{(s)} c_{s,t}$. Then $\{c_{s,t}\}_{t=1}^{L_s}$ lies on the probability simplex, and the induced token-level rewards satisfy $\sum_{t=1}^{L_s} r_{s,t} = R^{(s)}$.
\end{proposition}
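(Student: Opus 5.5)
The claim follows directly from the definitions, so the plan is a two-step verification: first the simplex membership of $\{c_{s,t}\}_{t=1}^{L_s}$, then mass preservation for $r_{s,t}$. Throughout, write $S_s := \sum_{k=1}^{L_s}\alpha_{s,k}$ for the normalizer.

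\textbf{Simplex membership.} Since each $\alpha_{s,t}>0$ and $L_s\ge 1$, the normalizer satisfies $S_s>0$. Hence every $c_{s,t}=\alpha_{s,t}/S_s$ is well-defined and strictly positive, so in particular nonnegative. Summing over $t$ and pulling out the common factor $1/S_s$ gives
\[
\sum_{t=1}^{L_s} c_{s,t} = \frac{1}{S_s}\sum_{t=1}^{L_s}\alpha_{s,t} = 1 .
\]
Nonnegativity together with unit sum places $\{c_{s,t}\}$ on the probability simplex. In fact it lies in the relative interior, matching the Dirichlet mean $\mathbb{E}[p_k]=\alpha_k/\alpha_0$ recalled in Appendix~\ref{app:preliminaries}.

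\textbf{Mass preservation.} By linearity of finite sums,
\[
\sum_{t=1}^{L_s} r_{s,t} = R^{(s)}\sum_{t=1}^{L_s} c_{s,t} = R^{(s)} ,
\]
using the unit-sum identity from the first step. This requires no assumption on the sign of $R^{(s)}$. If $R^{(s)}$ is negative, the individual $r_{s,t}$ are negative, but their sum is still exactly $R^{(s)}$, so the decomposition is valid in either case.

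\textbf{Where care is needed.} The only point needing attention is positivity of the denominator, which is where the hypothesis $\alpha_{s,t}>0$ is used. In the method this is guaranteed because $\alpha_{s,t}$ is defined as a softplus output, and softplus is strictly positive. I would mention this so the proposition applies to DTAN as implemented.
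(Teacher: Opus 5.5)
Your proof is correct and follows the paper's argument: positivity of the $\alpha_{s,t}$ gives nonnegative weights $c_{s,t}$ summing to one, so they lie on the simplex. Pulling $R^{(s)}$ out of the finite sum then gives $\sum_{t=1}^{L_s} r_{s,t} = R^{(s)}$, and your remarks on the positive normalizer and the sign-independence of $R^{(s)}$ are accurate.
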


\begin{proof}
Since $\alpha_{s,t}>0$, we have $c_{s,t}\ge 0$ and $\sum_{t=1}^{L_s} c_{s,t}
= \frac{\sum_{t=1}^{L_s}\alpha_{s,t}}{\sum_{k=1}^{L_s}\alpha_{s,k}} = 1$; hence $c_s$ lies on the simplex, and
$\sum_{t=1}^{L_s} r_{s,t} = R^{(s)}\sum_{t=1}^{L_s} c_{s,t} = R^{(s)}$.
\end{proof}

\begin{lemma}[Differentiability of Dirichlet-based allocation]
\label{lem:token_differentiable}
    Assume $\alpha_{s,t} = \mathrm{softplus}(f_\phi(h_{s,t}))$ for all $t$.
    Then the mapping $\phi \mapsto c_{s,t}$ is continuously differentiable.
    Moreover, letting $S_s = \sum_{k=1}^{L_s} \alpha_{s,k}$, the partial derivatives satisfy:
    \begin{equation}
        \frac{\partial c_{s,t}}{\partial \alpha_{s,k}}=
        \begin{cases}
            \frac{S_s - \alpha_{s,t}}{S_s^2}, & k=t, \\[4pt]
            -\frac{\alpha_{s,t}}{S_s^2}, & k \neq t.
        \end{cases}
    \end{equation}
\end{lemma}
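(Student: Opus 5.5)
The argument composes two smooth maps: the parameter-to-concentration map $\phi \mapsto \alpha_{s,\cdot}$ and the normalization $\alpha_{s,\cdot} \mapsto c_{s,\cdot}$.

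\textbf{Smoothness of the first map.} For each token $t$, the map $\phi \mapsto f_\phi(h_{s,t})$ is continuously differentiable in $\phi$. This is a standing regularity assumption on the lightweight MLP $f_\phi$: we take it to have $C^1$ activations. If ReLU-type activations are used, the claim should instead be read almost everywhere, and I would state this caveat explicitly. The softplus function $z\mapsto \log(1+e^z)$ is $C^\infty$ and strictly positive. Hence $\phi\mapsto \alpha_{s,t}$ is $C^1$ with $\alpha_{s,t}>0$.

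\textbf{Smoothness of the normalization.} Consider the map $g:\mathbb{R}^{L_s}_{++}\to\mathbb{R}^{L_s}$ given by $g_t(\alpha)=\alpha_t/S(\alpha)$, where $S(\alpha)=\sum_k\alpha_k$. On the open positive orthant we have $S>0$. So $g$ is a quotient of polynomials with nonvanishing denominator, and is therefore $C^\infty$ there. By the chain rule, the composition $\phi\mapsto c_{s,t}=g_t(\alpha_{s,\cdot}(\phi))$ is continuously differentiable.

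\textbf{Partial derivatives.} Apply the quotient rule to $c_{s,t}=\alpha_{s,t}/S_s$, using $\partial S_s/\partial\alpha_{s,k}=1$ for every $k$. This gives
\[
\frac{\partial c_{s,t}}{\partial\alpha_{s,k}}=\frac{\delta_{tk}S_s-\alpha_{s,t}}{S_s^2}.
\]
Splitting into the cases $k=t$ and $k\neq t$ yields the two displayed formulas.

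The only delicate point is the regularity assumption on $f_\phi$; everything else is routine calculus on an open domain where the denominator is bounded away from zero.
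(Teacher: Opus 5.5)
Your proposal is correct and follows the same route as the paper's proof. Smoothness of softplus (and of $f_\phi$) gives $C^1$ concentrations $\alpha_{s,t}>0$; the normalization $\alpha_{s,t}/S_s$ is $C^1$ because $S_s>0$; and the quotient rule with $\partial S_s/\partial\alpha_{s,k}=1$ gives the two cases. The one difference is your explicit regularity caveat on $f_\phi$, which the paper leaves implicit. It is satisfied here because DTAN uses GELU activations (Table~\ref{tab:dtan_params}), so the almost-everywhere qualification is not needed.
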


\begin{proof}
    Since $\alpha_{s,t}=\mathrm{softplus}(f_\phi(h_{s,t}))$ and $\mathrm{softplus}(\cdot)$ is smooth, each $\alpha_{s,t}$ is continuously differentiable with respect to $\phi$. Consequently, $S_s=\sum_{k=1}^{L_s}\alpha_{s,k}$ is continuously differentiable and strictly positive, which implies that $c_{s,t}=\alpha_{s,t}/S_s$ is continuously differentiable by composition.
    
    To compute the partial derivatives, consider $c_{s,t}=\alpha_{s,t}/S_s$ with $S_s=\sum_{j=1}^{L_s}\alpha_{s,j}$. Applying the quotient rule yields:
    \begin{equation}
        \frac{\partial c_{s,t}}{\partial \alpha_{s,k}}=\frac{\frac{\partial \alpha_{s,t}}{\partial \alpha_{s,k}}\, S_s -\alpha_{s,t}\, \frac{\partial S_s}{\partial \alpha_{s,k}}}{S_s^2}.
    \end{equation}
    If $k=t$, then $\frac{\partial \alpha_{s,t}}{\partial \alpha_{s,k}}=1$ and $\frac{\partial S_s}{\partial \alpha_{s,k}}=1$, giving $\frac{S_s-\alpha_{s,t}}{S_s^2}$. If $k\neq t$, then $\frac{\partial \alpha_{s,t}}{\partial \alpha_{s,k}}=0$ and $\frac{\partial S_s}{\partial \alpha_{s,k}}=1$, yielding $-\frac{\alpha_{s,t}}{S_s^2}$.
\end{proof}

\begin{corollary}[Compatibility with policy-gradient methods]
\label{cor:token_rl}
The token-level rewards $\{r_{s,t}\}$ defined in Proposition~\ref{prop:token_validity} constitute a valid reward signal for policy-gradient optimization.
\end{corollary}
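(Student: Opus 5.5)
The approach is to make precise what "valid reward signal for policy-gradient optimization" means, and then check each requirement using Proposition~\ref{prop:token_validity} and Lemma~\ref{lem:token_differentiable}. We take validity to mean three things.
\begin{enumerate}
\item Each $r_{s,t}$ is a finite, measurable, real-valued function of the trajectory prefix (or full trajectory) and the DTAN parameters.
\item The per-token rewards, summed over the trajectory, reproduce the sequence-level return, so the undiscounted return seen by the policy-gradient estimator is unchanged.
\item The rewards are bounded whenever $R(x)$ is bounded, so the score-function estimator $\mathbb{E}[\sum_t A_{s,t}\nabla_\theta\log\pi_\theta]$ has finite expectation.
\end{enumerate}

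\textbf{Conservation of return.} For the second requirement, chain the two mass-preservation facts. Proposition~\ref{prop:token_validity} gives $\sum_t r_{s,t}=R^{(s)}$ within each sentence. The sentence-level normalization $r_i=u_i R(x)/\sum_j u_j$ gives $\sum_s R^{(s)}=R(x)$, using $u\ge\varepsilon>0$ so the denominator is positive, together with Lemma~\ref{lem:positivity}. Summing over sentences therefore yields $\sum_{s,t} r_{s,t}=R(x)$. Consequently the undiscounted episodic return, and hence the objective $J(\theta)=\mathbb{E}[\sum r]$, coincides with the original sequence-level objective. This means the standard policy-gradient theorem applies verbatim with per-step rewards $r_{s,t}$. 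In particular, reward-to-go estimates and any state-dependent baseline (e.g.\ a PPO critic) remain unbiased.

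\textbf{Boundedness.} For the third requirement, note that $c_{s,t}\in[0,1]$ and $u_i/\sum_j u_j\in[0,1]$. Hence $|r_{s,t}|\le|R^{(s)}|\le|R(x)|$, so bounded reward-model outputs give bounded per-token rewards and a finite-variance estimator.

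\textbf{Differentiability in DTAN parameters.} Lemma~\ref{lem:token_differentiable} shows $r_{s,t}$ is $C^1$ in $\phi$. This justifies the joint optimization of DTAN with the policy, and it introduces no dependence on $\theta$ that would invalidate the score-function identity, since $r$ is treated as a fixed reward when differentiating with respect to $\theta$.

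\textbf{Main obstacle.} The delicate step is the conservation argument with discounting or KL-per-token terms. If the PPO implementation uses $\gamma<1$, redistributing mass across time changes the discounted return, so exact equivalence fails. The plan is to state the corollary for $\gamma=1$, which is standard in RLHF. For $\gamma<1$ we would only claim that $r_{s,t}$ is a well-defined bounded Markov reward, so that policy-gradient methods remain applicable to the reshaped objective. The per-token KL penalty is additive and unaffected by the decomposition, so it passes through unchanged.
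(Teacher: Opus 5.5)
Your conservation step (chaining Proposition~\ref{prop:token_validity} with the sentence normalization) is correct. Note that $u\ge\varepsilon$ already gives $\sum_j u_j>0$, so Lemma~\ref{lem:positivity} is not needed there. Your bound $|r_{s,t}|\le|R(x)|$ is also correct. Both go beyond the paper, whose proof only invokes Proposition~\ref{prop:token_validity} and says the $r_{s,t}$ can be used as step-wise rewards with returns and advantages computed ``in the usual way''.

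The step that fails is your inference from $\sum_{s,t}r_{s,t}=R(x)$ to ``the policy-gradient theorem applies verbatim with per-step rewards, so reward-to-go estimates and critic baselines remain unbiased''. Dropping earlier rewards from the return is legitimate only when each $r_k$ is fixed by the prefix before later actions are sampled. That is what gives $\mathbb{E}[\nabla_\theta\log\pi_\theta(a_t\mid s_t)\,r_k]=0$ for $k<t$.

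Here $r_{s,t}=\omega_s c_{s,t}R(x)$ is a function of the \emph{full} response, which is the ``or full trajectory'' case of your first requirement:
$R(x)$ scores the complete text; $\omega_s$ depends on every sentence through $M$ and the bargaining fixed point; and $c_{s,t}$ is normalized over all $L_s$ tokens of the sentence, including later ones.

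A two-token example already shows the bias. Take $r_1=w_1(a_1,a_2)R(a_1,a_2)$ and $r_2=R-r_1$. The reward-to-go estimator differs from the full-return one by $-\nabla_\theta\log\pi_\theta(a_2\mid a_1)\,r_1$. Its conditional mean given $a_1$ is $-\sum_{a_2}\nabla_\theta\pi_\theta(a_2\mid a_1)\,r_1(a_1,a_2)$, which is in general nonzero once $r_1$ depends on $a_2$.

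Conservation therefore only guarantees that $J(\theta)$ and the full-return estimator $\sum_t\nabla_\theta\log\pi_\theta(a_t\mid s_t)\sum_k r_k$ are unchanged. That estimator is literally the sequence-level one, in which the decomposition plays no role.

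To repair the argument, you have two options. You can restrict the conclusion to what the corollary (and the paper's proof) actually asserts: the $r_{s,t}$ are well-defined, bounded per-step scalars that can be fed into a standard return/advantage computation. Alternatively, say explicitly that per-step returns or GAE built from them give a reshaped, generally biased surrogate for $\nabla_\theta J$, not an unbiased estimator of it.

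Your $\gamma<1$ caveat is correct but is not the main obstacle. The dependence of the decomposed rewards on future tokens already breaks unbiasedness at $\gamma=1$.
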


\begin{proof}
By Proposition~\ref{prop:token_validity}, the decomposition satisfies $\sum_{t=1}^{L_s} r_{s,t}=R^{(s)}$ with $r_{s,t}=R^{(s)}c_{s,t}$. Thus $\{r_{s,t}\}$ can be used as step-wise rewards in a standard policy-gradient objective by treating each token generation as a decision step and computing returns/advantages from these rewards in the usual way.
\end{proof}

\begin{lemma}[Scale invariance of token allocation]
\label{lem:token_scale_invariant}
    Let $c_{s,t} = \alpha_{s,t} / \sum_k \alpha_{s,k}$. For any scalar $\lambda > 0$, define $\alpha'_{s,t} = \lambda \alpha_{s,t}$. Then the resulting allocation satisfies $c'_{s,t} = c_{s,t}$ for all $t$.
\end{lemma}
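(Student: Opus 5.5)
The plan is a direct computation from the definition of the allocation, in the same style as the proof of Proposition~\ref{prop:token_validity}. Fix a sentence $s$ and a scalar $\lambda>0$. Write the rescaled normalizer as
\[
S'_s=\sum_{k=1}^{L_s}\alpha'_{s,k}=\lambda\sum_{k=1}^{L_s}\alpha_{s,k}=\lambda S_s,
\]
which uses only linearity of the finite sum. Since $\alpha_{s,k}>0$ (softplus outputs are strictly positive) and $\lambda>0$, we have $S_s>0$ and $S'_s>0$. Hence both allocations $c_{s,t}$ and $c'_{s,t}$ are well defined.

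Next, substitute into the definition:
\[
c'_{s,t}=\frac{\alpha'_{s,t}}{S'_s}=\frac{\lambda\alpha_{s,t}}{\lambda S_s}.
\]
Cancelling the common factor $\lambda\neq 0$ gives $c'_{s,t}=\alpha_{s,t}/S_s=c_{s,t}$. This holds for every $t=1,\dots,L_s$, which is the claim.

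I would add a one-line remark that the token rewards $r_{s,t}=R^{(s)}c_{s,t}$ are then also invariant under this rescaling. The only point needing care is that the denominator be nonzero, and this is supplied by positivity of the $\alpha_{s,k}$. There is no real obstacle beyond this.
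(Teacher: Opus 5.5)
Your proposal is correct and follows the paper's own proof: factor $\lambda$ out of the normalizing sum and cancel it against the numerator. Your explicit check that the denominator is nonzero, using positivity of the $\alpha_{s,k}$, is a small point that the paper leaves implicit.
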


\begin{proof}
By definition,
\begin{equation}
    c'_{s,t} = \frac{\alpha'_{s,t}}{\sum_{k=1}^{L_s}\alpha'_{s,k}} = \frac{\lambda \alpha_{s,t}}{\sum_{k=1}^{L_s}\lambda \alpha_{s,k}} = \frac{\lambda \alpha_{s,t}}{\lambda \sum_{k=1}^{L_s}\alpha_{s,k}} = \frac{\alpha_{s,t}}{\sum_{k=1}^{L_s}\alpha_{s,k}} = c_{s,t},
\end{equation}
which holds for all $t$.
\end{proof}

\begin{lemma}[Monotonicity of token contributions]
\label{lem:token_monotonicity}
    Fix $\{\alpha_{s,k}\}_{k \neq t}$. Then $c_{s,t}$ is strictly increasing in $\alpha_{s,t}$ and strictly decreasing in $\alpha_{s,k}$ for any $k \neq t$.
\end{lemma}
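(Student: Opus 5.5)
The plan is to read monotonicity off the partial derivatives already computed in Lemma~\ref{lem:token_differentiable}, since $c_{s,t}=\alpha_{s,t}/S_s$ with $S_s=\sum_{k=1}^{L_s}\alpha_{s,k}$ is a smooth function of the concentration parameters on the open orthant $\alpha_{s,\cdot}\in(0,\infty)^{L_s}$. Fixing all other concentrations, $c_{s,t}$ becomes a function of one real variable on $(0,\infty)$. Strict monotonicity then follows from a strictly signed derivative, via the mean value theorem on an interval.

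For the first claim, write $S_{-t}:=\sum_{k\neq t}\alpha_{s,k}$, which is fixed. Then $c_{s,t}=\alpha_{s,t}/(\alpha_{s,t}+S_{-t})$. By Lemma~\ref{lem:token_differentiable},
\[
\frac{\partial c_{s,t}}{\partial \alpha_{s,t}}=\frac{S_s-\alpha_{s,t}}{S_s^2}=\frac{S_{-t}}{S_s^2}.
\]
This derivative is strictly positive whenever $S_{-t}>0$, which holds as soon as $L_s\ge 2$, because every $\alpha_{s,k}>0$ (softplus outputs are positive). So $c_{s,t}$ is strictly increasing in $\alpha_{s,t}$.

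For the second claim, fix $k\neq t$ and all other coordinates. The same lemma gives
\[
\frac{\partial c_{s,t}}{\partial \alpha_{s,k}}=-\frac{\alpha_{s,t}}{S_s^2},
\]
which is strictly negative since $\alpha_{s,t}>0$. Hence $c_{s,t}$ is strictly decreasing in $\alpha_{s,k}$. One can also see this without derivatives: $\alpha_{s,t}/(C+\alpha_{s,k})$ is a positive constant divided by a strictly increasing positive quantity.

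The only real obstacle is the degenerate case $L_s=1$. There $S_{-t}=0$ and $c_{s,t}\equiv 1$, so the first claim fails, and no index $k\neq t$ exists for the second. I would state explicitly that strict increase requires $L_s\ge 2$. For $L_s=1$ the first claim weakens to "constant", and the second claim holds vacuously.
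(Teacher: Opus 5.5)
Your proposal is correct and follows the paper's proof essentially verbatim: both read the signs off the quotient-rule derivatives $\partial c_{s,t}/\partial\alpha_{s,t}=\sum_{j\neq t}\alpha_{s,j}/S_s^2>0$ and $\partial c_{s,t}/\partial\alpha_{s,k}=-\alpha_{s,t}/S_s^2<0$. Your caveat that strict increase requires $L_s\ge 2$ is a genuine sharpening: the paper's step ``$\alpha_{s,j}>0$ implies $\sum_{j\neq t}\alpha_{s,j}>0$'' fails for a single-token sentence, where the sum is empty and $c_{s,t}\equiv 1$.
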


\begin{proof}
Let $S_s=\sum_{j=1}^{L_s}\alpha_{s,j}$ and recall $c_{s,t}=\alpha_{s,t}/S_s$.
Fixing $\{\alpha_{s,k}\}_{k\neq t}$, we have
\begin{equation}
    \frac{\partial c_{s,t}}{\partial \alpha_{s,t}} = \frac{S_s-\alpha_{s,t}}{S_s^2} = \frac{\sum_{j\neq t}\alpha_{s,j}}{S_s^2} >0,
\end{equation}
since $\alpha_{s,j}>0$ implies $\sum_{j\neq t}\alpha_{s,j}>0$. For any $k\neq t$,
\begin{equation}
    \frac{\partial c_{s,t}}{\partial \alpha_{s,k}} = -\frac{\alpha_{s,t}}{S_s^2} <0.
\end{equation}
Therefore $c_{s,t}$ is strictly increasing in $\alpha_{s,t}$ and strictly decreasing in each $\alpha_{s,k}$ for $k\neq t$.
\end{proof}

%% file: appendix/app_method.tex
\newpage
\section{Method Supplement}
\label{app:method_supp}

\subsection{Stage-I Bargaining Solver: Well-definedness and Diagnostics}
\label{app:positivity}
Although $M=V^\top V$ is positive semidefinite by construction, it is not guaranteed to be elementwise nonnegative; consequently, some entries of $M\beta$ may be non-positive for $\beta\in\mathbb{R}^K_{>0}$.
To ensure the Nash-style objective $\sum_i \log u_i$ and the reciprocal fixed-point updates are well-defined, we apply a standard positivity safeguard:
\begin{equation}
u=\max(M\beta,\varepsilon)\quad \text{(elementwise)},\qquad \varepsilon>0.
\end{equation}
In our implementation, we additionally monitor the fraction of non-positive entries in $M\beta$ during the iteration; the safeguard is used as a numerical guardrail and is rarely activated in practice.
Lemma~\ref{lem:positivity} (Appendix~\ref{app:proofs}) shows that the safeguarded iteration is well-defined and preserves $\beta^{(t)}\in\mathbb{R}^K_{>0}$ for all $t$.

\subsection{Advantage Construction under Reward Decomposition}
\label{app:advantage}

In the main text, we present a conceptual formulation of sentence-to-token reward
decomposition based on semantic allocation.
This appendix describes how this decomposition is instantiated in practice under
PPO-style RLHF training, where optimization is driven by advantage estimates rather
than explicit token-level returns.
We emphasize that the underlying PPO objective remains unchanged; the proposed
decomposition refines the construction of the advantage signal used for credit
assignment, together with additional engineering stabilizations for training
robustness.

\subsubsection{Sequence-Level Advantage}

Given a generated response with sequence-level reward $R^{(\mathrm{seq})}$, we first construct a standard PPO-style sequence-level advantage
\begin{equation}
A^{(\mathrm{seq})} = R^{(\mathrm{seq})} - b,
\end{equation}
where $b$ denotes a PPO baseline, such as a value-function estimate or a batch-wise baseline. This scalar advantage serves as the global optimization signal and defines the overall scale of policy updates.

\subsubsection{Token-Level Advantage under Reward Decomposition}

As described in the main text, sentence-level bargaining decomposes the sequence-level reward $R^{(\mathrm{seq})}$ into normalized sentence allocation masses $\{\omega_s\}_{s=1}^S$, where $\omega_s \ge 0$ and $\sum_{s=1}^S \omega_s = 1$. The corresponding sentence-level reward is $R^{(s)}=\omega_s R^{(\mathrm{seq})}$, which preserves the total reward mass: $\sum_{s=1}^S R^{(s)} = R^{(\mathrm{seq})}$. Within each sentence, token-level allocation further produces normalized contribution scores $c_{s,t}$, with $c_{s,t}\ge 0$ and $\sum_{t=1}^{L_s} c_{s,t}=1$. For practical advantage modulation, we define the token-level allocation weight as
\begin{equation}
	w_{s,t} = \omega_s \cdot c_{s,t},
\end{equation}
where $w_{s,t} \ge 0$ represents the relative allocation mass assigned to token $t$ in sentence $s$. The signed optimization direction remains controlled by the sequence-level advantage $A^{(\mathrm{seq})}$.

In practice, these quantities are flattened into a single non-negative per-token weight $w_t$ over all generated tokens. Rather than constructing separate token-level returns, we incorporate this sentence-to-token decomposition by reweighting the sequence-level advantage. A natural token-level advantage is obtained by proportional modulation:
\begin{equation}
	A_t^{\mathrm{raw}} = A^{(\mathrm{seq})} \cdot \frac{w_t}{\bar w},
\qquad
\bar w = \frac{1}{T}\sum_{t=1}^T w_t,
\end{equation}
where $T$ denotes the number of generated tokens. Since the weights $\{w_t\}_{t=1}^T$ are non-negative allocation masses, $\bar w$ is well-defined for non-empty generated responses. This construction preserves the expected advantage magnitude,
\begin{equation}
	\mathbb{E}_t\!\left[A_t^{\mathrm{raw}}\right] = A^{(\mathrm{seq})},
\end{equation}
ensuring consistency with the original PPO formulation while enabling fine-grained token-level credit assignment.

\subsubsection{Stabilized Token-Level Advantage with Clipping}

In practice, unconstrained token-level reweighting may introduce high variance due
to extreme allocation scores.
To improve optimization stability, we apply a bounded refinement by mixing the
token-level modulation with the original sequence-level signal:
\begin{equation}
A_t
=
A^{(\mathrm{seq})}
\cdot
\Big[
(1-\lambda) + \lambda \cdot \mathrm{clip}\!\Big(\frac{w_t}{\bar w}\Big)
\Big],
\end{equation}
where $\lambda \in [0,1]$ controls the strength of token-level modulation and
$\mathrm{clip}(\cdot)$ enforces bounded influence.

This formulation preserves the global advantage scale while preventing excessively
fine-grained token-level signals from overriding sentence-level alignment, resulting
in more stable PPO updates without modifying the underlying optimization objective.

\subsection{Semantic Perturbation Details}
\label{app:semantic_perturbation}

For sentence-level influence estimation, we generate local perturbations using a
simple deletion-based procedure. Given a response $x=[s_1,\ldots,s_K]$, we perturb
one sentence at a time while keeping all other sentences fixed. For each sentence
$s_i$, we generate up to $K_p=3$ variants by deleting one token from the sentence.
When spaCy is available, deletion candidates are restricted to content words
(nouns, verbs, adjectives, and adverbs); otherwise, we fall back to whitespace
tokenization and delete one token at a time. Variants that remove the entire
sentence are discarded.

Each variant $s_i^{(k)}$ is inserted back into the original response to obtain
$x^{(i\rightarrow k)}$. We then compute the reward difference
\begin{equation}
    \Delta R_{i,k}=R(x)-R(x^{(i\rightarrow k)}),
\end{equation}
and the corresponding embedding difference
\begin{equation}
    \delta_{i,k}=z_i-z_i^{(k)}.
\end{equation}
The influence vector $v_i$ is estimated by ridge regression:
\begin{equation}
    v_i =
    \arg\min_v
    \sum_k
    \left(\Delta R_{i,k}-\langle v,\delta_{i,k}\rangle\right)^2
    +\lambda\|v\|_2^2 ,
\end{equation}
with $\lambda=10^{-3}$. If no valid perturbation is available for a sentence, we
set its influence vector to zero. This procedure requires no additional model
training or external annotations.

%% file: appendix/app_analysis.tex
\newpage
\section{Discussion}
\label{app:discussion}

Our results suggest that instability in RLHF is not an implementation artifact, but a structural consequence of driving token-level optimization with low-resolution, sequence-level preference signals. While reward decomposition can substantially improve stability, its benefit does not stem from more precise token-level allocation. Instead, it functions as a form of structured regularization that redistributes noisy preference signals in a way that aligns better with the semantic organization of language.

This perspective echoes recent findings in deep reinforcement learning, such as Hindsight-DICE~\cite{velu2023hindsight}, which argue that credit assignment instability often arises from structural properties of the learning objective rather than from estimation noise alone. From this viewpoint, increasingly fine-grained credit assignment is not inherently beneficial. Reward models provide noisy, indirect estimates of human preference rather than ground-truth supervision, and pushing allocation to very fine granularity can amplify spurious correlations and local artifacts. Our empirical results show that such over-refined advantages may degrade both performance and training stability, challenging the common assumption that finer credit assignment necessarily leads to better RLHF.

This observation motivates a granularity-aware view of reward design, where the goal is not maximal allocation precision but stable and semantically grounded learning dynamics.
S2T-RLHF instantiates this principle by allocating reward at the sentence level to preserve high-level semantic coherence, while allowing only limited token-level refinement within sentences.
Importantly, S2T should be viewed as one concrete realization of this design philosophy rather than its final form.

Finally, reward decomposition mitigates but does not eliminate RLHF instability.
It does not correct reward model misspecification, nor does it guarantee robustness under arbitrary preference shifts or adversarial prompts.
Understanding the fundamental stability limits of scalar preference supervision remains an important open question.

\subsection{Sources of Instability in RLHF}
\label{analy:instability}

We analyze the sources of instability in RLHF from a theoretical perspective, focusing on the interaction between sequence-level rewards and token-level policy optimization. Consider an autoregressive language model trained with a PPO-style RLHF objective, which can be written as $\mathcal{L}_{\mathrm{RLHF}}(\theta) = - \mathbb{E}\big[ A_t \log \pi_\theta(y_t \mid x, y_{<t}) \big]$, which matches the formulation in Eq.~\eqref{eq:loss_rlhf}.

In standard RLHF, supervision is provided via a sequence-level reward.
Equivalently, this can be viewed as a terminal reward assigned only at the final generation step:
\begin{equation}
    r_t =
    \begin{cases}
        0, & t < T, \\
        R(y), & t = T,
    \end{cases}
\end{equation}
where $R(y)$ denotes the scalar reward assigned to the complete generated sequence.

In practice, RLHF typically uses a discount factor $\gamma \approx 1$ (or $\gamma = 1$), in which case the token-level advantage can be approximated as
\begin{equation}
    A_t \approx \gamma^{T-t} R(y) - V(s_t),
\end{equation}
where $s_t = (x, y_{<t})$ and $V(s_t)$ denotes the value baseline. This formulation reveals a fundamental structural issue of sequence-level RLHF. Since the same scalar reward $R(y)$ influences the advantage of all tokens along the trajectory, the resulting advantages are highly correlated across time steps and often share the same sign, especially when the value function is imperfect. As a result, token-level policy updates are driven by a global signal that provides little discrimination among individual decisions, leading to high-variance and unstable optimization dynamics.
\paragraph{Directional Alignment of Token-Level Updates.}
\begin{remark}[Advantage Alignment under Terminal Rewards]
\label{rem:adv_alignment}
Consider sequence-level RLHF with a terminal reward $R(y)$ and a discount factor $\gamma \approx 1$.
When the value function is imperfectly estimated, it is reasonable to assume that $|V(s_t)| \ll |R(y)|$ for a large portion of the trajectory. Under this assumption, the token-level advantage can be approximated as
\begin{equation}
    A_t \approx \gamma^{T-t} R(y),
\end{equation}
which implies that the advantages of most tokens along the trajectory share the same sign. As a consequence, a single policy update induces a coherent update direction across all tokens in the sequence:
\begin{equation}
    \Delta \theta \propto \sum_{t=1}^{T} A_t \nabla_\theta \log \pi_\theta(a_t \mid s_t).
\end{equation}
This observation reveals a structural mismatch between token-level optimization and sequence-level supervision. The policy is updated using a global scalar signal that provides limited discrimination among individual decisions. As a result, the optimization dynamics can be dominated by a small subset of tokens or patterns that exert a disproportionate influence on the reward model, thereby increasing the risk of over-optimization and reward hacking.
\end{remark}

\paragraph{Variance Amplification over Long Horizons.}
\begin{remark}[Variance Amplification under Long-Horizon Terminal Rewards]
\label{rem:variance_amplification}
Consider sequence-level RLHF with a terminal reward $R(y)$ and discount factor $\gamma \approx 1$.
Let the token-level advantage be defined as
\begin{equation}
    A_t = \gamma^{T-t} R(y) - V(s_t),
\end{equation}
where $V(s_t)$ denotes the value baseline.
Treating $R(y)$ as a random variable induced by the reward model and $V(s_t)$ as an imperfect baseline estimate, the variance of the advantage admits the decomposition
\begin{equation}
    \mathrm{Var}(A_t) = \mathrm{Var}(\gamma^{T-t} R(y) - V(s_t)) \ge \gamma^{2(T-t)} \mathrm{Var}(R(y)) - 2 \gamma^{T-t} \mathrm{Cov}(R(y), V(s_t)).
\end{equation}

In early stages of training, the covariance term $\mathrm{Cov}(R(y), V(s_t))$ is often limited, so the variance of $A_t$ is dominated by the scaled reward variance.
Under $\gamma \approx 1$, this implies that the advantages associated with earlier tokens exhibit substantially larger variance.
When combined with the intrinsic noise of token-level policy gradients, such variance amplification makes PPO-style updates more sensitive to clipping and KL constraints, thereby increasing the difficulty of stable optimization.
\end{remark}

\paragraph{Length-Dependent Gradient Scaling.}
\begin{remark}[Length-Dependent Gradient Scaling under Terminal Rewards]
\label{rem:length_bias}
Consider sequence-level RLHF with a terminal reward $R(y)$ and discount factor $\gamma \approx 1$.
Assume that the token-level advantages $A_t$ along a trajectory share the same sign, as discussed in Remark~\ref{rem:adv_alignment}. The policy gradient can then be written as:
\begin{equation}
    \nabla_\theta J(\theta) \propto \sum_{t=1}^{T} A_t \nabla_\theta \log \pi_\theta(a_t \mid s_t),
\end{equation}
where $a_t$ denotes the generated token and $s_t$ the corresponding autoregressive state.

Let $g_t = \nabla_\theta \log \pi_\theta(a_t \mid s_t)$.
Under mild regularity assumptions, such as non-vanishing diagonal contributions and non-negative correlations across time steps, the expected squared norm of the gradient satisfies
\begin{equation}
    \mathbb{E}\!\left[\left\| \sum_{t=1}^{T} A_t g_t \right\|^2\right] \quad \text{grows with } T,
\end{equation}
at least linearly under these conditions. This implies that longer sequences exert a systematically larger influence on the optimization dynamics when rewards are assigned at the sequence level.

This length-dependent scaling does not necessarily encourage longer outputs per se. However, it introduces an implicit coupling between sequence length and gradient magnitude, making the optimization process sensitive to variations in sequence length and further contributing to instability in RLHF training.
\end{remark}

Remarks~\ref{rem:adv_alignment}, \ref{rem:variance_amplification}, and \ref{rem:length_bias} together characterize three complementary sources of instability in sequence-level RLHF, arising from directional alignment, variance amplification, and length-dependent gradient scaling, respectively.

\subsection{Stabilizing Effects of Reward Decomposition}
\label{analy:effect}

We now analyze how decomposing a sequence-level reward into localized sentence- and token-level signals alleviates the instability sources identified in Section~\ref{analy:instability}. Our analysis focuses on structural properties of the resulting reward signals, rather than on specific optimization heuristics. Importantly, the proposed decomposition preserves the total reward mass, ensuring that stabilization arises from improved credit assignment rather than implicit reward rescaling.

\paragraph{Breaking Advantage Alignment.}
\begin{remark}[Localized Rewards Reduce Advantage Alignment]
\label{rem:decomp_alignment}
Consider a reward decomposition $\{r_t\}_{t=1}^{T}$ satisfying the conservation property $\sum_{t=1}^{T} r_t = R(y)$. Under the decomposed reward, the token-level advantage can be written as:
\begin{equation}
    A_t^{\mathrm{dec}} = \sum_{k=t}^{T} \gamma^{k-t} r_k - V(s_t).
\end{equation}
When the reward components $r_k$ are supported on localized tokens or sentence segments, the resulting advantages vary across time steps and no longer share a common global scaling factor.

As a consequence, token-level policy updates are driven by heterogeneous advantage signals rather than a single sequence-level scalar. This breaks the coherent alignment of update directions observed under terminal rewards and enables finer-grained credit assignment across the trajectory.
\end{remark}

\paragraph{Reducing Effective Horizon and Variance.}
\begin{remark}[Variance Reduction via Localized Reward Support]
\label{rem:decomp_variance}
Under reward decomposition, the token-level advantage depends only on reward components within the future support of $\{r_k\}_{k \ge t}$.
If the reward support is localized to a window of bounded length, the effective horizon contributing to $A_t^{\mathrm{dec}}$ is substantially shorter than the full sequence length.

As a result, the variance of the decomposed advantage is governed by the variability of local reward components rather than by the variance of a global terminal reward.
This reduction in effective horizon mitigates variance amplification for early tokens and improves the signal-to-noise ratio of token-level policy gradients.
\end{remark}

\paragraph{Decoupling Gradient Scale from Sequence Length.}
\begin{remark}[Length-Decoupled Gradient Scaling under Reward Decomposition]
\label{rem:decomp_length}
Consider the policy gradient induced by a decomposed reward signal,
\begin{equation}
\nabla_\theta J(\theta)
\propto \sum_{t=1}^{T} A_t^{\mathrm{dec}} \nabla_\theta \log \pi_\theta(a_t \mid s_t).
\end{equation}
When reward components are localized, the magnitude of $A_t^{\mathrm{dec}}$ depends primarily on local reward mass rather than on the total sequence length. Consequently, the overall gradient scale becomes less sensitive to variations in sequence length. This decoupling alleviates the implicit coupling between length and optimization pressure induced by terminal rewards, leading to more stable updates across responses of different lengths.
\end{remark}

Together, Remarks~\ref{rem:decomp_alignment}, \ref{rem:decomp_variance}, and \ref{rem:decomp_length} show that reward decomposition alleviates instability in RLHF by improving credit locality, reducing variance amplification, and decoupling gradient scale from sequence length, while preserving the total reward mass.

\subsection{Limitations}
\label{app:limitations}

Although S2T-RLHF improves the stability of preference-based RLHF, it has several limitations. First, the method still relies on a learned sequence-level reward model. Reward decomposition can redistribute the scalar reward into more localized learning signals, but it cannot correct reward-model misspecification, systematic bias, or preference errors in the original reward signal. If the reward model assigns high scores to undesirable responses, S2T-RLHF may still propagate this incorrect preference through the decomposed rewards.

Second, the proposed sentence-to-token hierarchy is most suitable for open-ended generation tasks where responses can be naturally segmented into semantically meaningful sentence-level units. For tasks whose correctness depends on tightly coupled symbolic reasoning, such as mathematical proof generation or code execution, sentence-level decomposition may not fully capture step-level dependencies or executable correctness. In such settings, process-level supervision or verifier-based feedback may be more appropriate.

Third, S2T-RLHF introduces additional computational overhead due to perturbation-based sentence influence estimation. Our runtime analysis in Appendix~\ref{app:compute} shows that this overhead is mainly concentrated in Stage I, while the bargaining solver and DTAN refinement are lightweight. Future work may reduce this cost through more efficient perturbation selection, batched reward-model evaluation, caching, or amortized influence estimation.

Finally, while our experiments cover multiple datasets and stress-test settings, they are still limited to a specific policy model, reward-model family, and PPO-based RLHF pipeline. Extending the analysis to larger models, alternative reward models, other policy optimization algorithms, and more diverse task domains remains an important direction for future work.

%% file: appendix/app_end.tex
\newpage

\section{End-to-End Theoretical Justification of S2T-RLHF}
\label{app:end_to_end}

The analysis in Appendix~\ref{app:discussion} focused on the problem side of RLHF instability. In particular, we showed that when a sequence-level terminal reward is propagated to token-level optimization, the resulting learning dynamics are prone to three structural failure modes: global advantage alignment, long-horizon variance amplification, and length-dependent gradient scaling. We now connect this instability analysis to the design of S2T-RLHF itself. Specifically, we show that the two-stage structure of S2T-RLHF is not an arbitrary composition of sentence-level and token-level modules: Stage I induces sentence-level reward localization, while Stage II introduces bounded within-sentence refinement. Together, these two properties provide an end-to-end justification for why the S2T design mitigates the instability mechanisms identified in Appendix~\ref{app:discussion}.

\begin{lemma}[Sentence-level localization and conservation]
\label{lem:stage1_localization}
Let $\omega_i=\frac{u_i}{\sum_{j=1}^K u_j}$ and $r_i=\omega_i R(x)$ denote
the Stage-I sentence allocation weights and sentence-level rewards,
respectively. Then
\begin{equation}
\omega_i \ge 0 \quad \forall i,
\qquad
\sum_{i=1}^K \omega_i = 1,
\qquad
\sum_{i=1}^K r_i = R(x).
\end{equation}
Thus, Stage I redistributes the sequence-level reward over sentence-level semantic units while preserving the total reward mass.
\end{lemma}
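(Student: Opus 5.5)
The plan is to use the safeguarded definition of the utilities from the Stage-I solver, $u=\max(M\beta,\varepsilon)$ taken elementwise with $\varepsilon>0$. This gives $u_i\ge\varepsilon>0$ for every $i$. Consequently the denominator $\sum_{j=1}^K u_j\ge K\varepsilon>0$, so each $\omega_i$ is well defined. This is the only step that needs care: the normalization presupposes that the utilities are positive. Because $M=V^\top V$ is only positive semidefinite and not entrywise nonnegative (Appendix~\ref{app:positivity}), the raw quantities $(M\beta)_i$ could be nonpositive. The $\varepsilon$-safeguard is exactly what removes this obstacle. For the idealized, unsafeguarded case, one can instead note that any positive fixed point satisfies $u_i=1/\beta^\star_i>0$ by Proposition~\ref{prop:kkt}, so positivity holds there as well.

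Given positivity, the three claims follow directly.
\begin{itemize}
\item Nonnegativity: each $\omega_i$ is a ratio of a positive number to a positive number, so $\omega_i>0$, and in particular $\omega_i\ge 0$.
\item Normalization: summing the weights gives $\sum_i\omega_i=\sum_i u_i/\sum_j u_j=1$.
\item Conservation: by linearity, $\sum_i r_i=R(x)\sum_i\omega_i=R(x)$.
\end{itemize}
This last step mirrors the argument of Proposition~\ref{prop:token_validity} at the token level. I would conclude by observing that no sign assumption on $R(x)$ is needed, since conservation is purely linear in $R(x)$.
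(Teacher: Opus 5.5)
Your proposal is correct and takes essentially the same approach as the paper. The paper also uses the safeguard $u_i=\max((M\beta)_i,\varepsilon)>0$ to make the normalization well defined, and then reads off nonnegativity, summation to one, and conservation $\sum_i r_i = R(x)$ by linearity; your side remark on the unsafeguarded fixed point ($u_i=1/\beta^\star_i>0$) is a correct extra that the paper's proof does not need.
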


\begin{proof}
By construction, $u_i=\max((M\beta)_i,\epsilon)>0$ for all $i$, so the normalized coefficients
\begin{equation}
\omega_i=\frac{u_i}{\sum_{j=1}^K u_j}
\end{equation}
are well defined, nonnegative, and satisfy
\begin{equation}
\sum_{i=1}^K \omega_i
=
\frac{\sum_{i=1}^K u_i}{\sum_{j=1}^K u_j}
=1.
\end{equation}
Since $r_i=\omega_i R(x)$, it follows immediately that
\begin{equation}
\sum_{i=1}^K r_i
=
\sum_{i=1}^K \omega_i R(x)
=
\Big(\sum_{i=1}^K \omega_i\Big)R(x)
=
R(x).
\end{equation}
Thus, Stage I preserves the total sequence-level reward while inducing a sentence-level allocation.
\end{proof}

\begin{lemma}[Bounded within-sentence refinement]
\label{lem:stage2_bounded_refinement}
For sentence $s$, Stage II defines token-level weights
$c_{s,t}=\alpha_{s,t}/\sum_{k=1}^{L_s}\alpha_{s,k}$ with $\alpha_{s,t}>0$,
and token-level rewards $r_{s,t}=R^{(s)}c_{s,t}$. Hence
$\{c_{s,t}\}_{t=1}^{L_s}$ lies on the probability simplex and preserves
the sentence-level reward mass, i.e.,
\begin{equation}
c_{s,t}\ge 0,\qquad
\sum_{t=1}^{L_s} c_{s,t}=1,\qquad
\sum_{t=1}^{L_s} r_{s,t}=R^{(s)}.
\end{equation}
Moreover, under the practical PPO instantiation
$A_t=A^{(\mathrm{seq})}[(1-\lambda)+\lambda\,\mathrm{clip}(w_t/\bar w)]$,
if $\mathrm{clip}(w_t/\bar w)\in[c_{\min},c_{\max}]$, then
\begin{equation}
|A_t|
\le
|A^{(\mathrm{seq})}|
\max\!\left\{
\big|(1-\lambda)+\lambda c_{\min}\big|,
\big|(1-\lambda)+\lambda c_{\max}\big|
\right\}.
\end{equation}
Therefore, Stage II yields a simplex-constrained and uniformly bounded
within-sentence refinement.
\end{lemma}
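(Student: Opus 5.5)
The argument has two independent parts: the simplex and conservation claims, and the advantage bound.

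For the first part, I would invoke Proposition~\ref{prop:token_validity} directly. Its hypotheses are $\alpha_{s,t}>0$ for all $t$; these hold because $\alpha_{s,t}=\mathrm{softplus}(f_\phi(h_{s,t}))$ and softplus is strictly positive. The proposition then gives $c_{s,t}\ge 0$, $\sum_{t=1}^{L_s} c_{s,t}=1$, and $\sum_{t=1}^{L_s} r_{s,t}=R^{(s)}$. One could also reprove this in a single line by factoring $R^{(s)}$ out of the sum. I would add that $S_s=\sum_k\alpha_{s,k}>0$, so the normalization is well defined, in the same way Lemma~\ref{lem:stage1_localization} handled the sentence-level denominators.

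For the advantage bound, I would write $A_t=A^{(\mathrm{seq})}\,g(\kappa_t)$, where $\kappa_t:=\mathrm{clip}(w_t/\bar w)\in[c_{\min},c_{\max}]$ and $g(\kappa)=(1-\lambda)+\lambda\kappa$. This uses the fact that $\bar w>0$: all $w_t\ge 0$ and their total mass is $\sum_s\omega_s\sum_t c_{s,t}=1$, so the ratio is well defined. Then $|A_t|=|A^{(\mathrm{seq})}|\,|g(\kappa_t)|$. The key observation is that $g$ is affine in $\kappa$, so $|g|$ is convex on $[c_{\min},c_{\max}]$. A convex function on a compact interval attains its maximum at an endpoint, which gives
\[
|g(\kappa_t)|\le\max\{|g(c_{\min})|,|g(c_{\max})|\}.
\]
Explicitly, for $\kappa=(1-\theta)c_{\min}+\theta c_{\max}$ with $\theta\in[0,1]$ we have $g(\kappa)=(1-\theta)g(c_{\min})+\theta g(c_{\max})$, and the triangle inequality finishes the step. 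Multiplying by $|A^{(\mathrm{seq})}|$ yields the stated bound, and it is uniform in $t$ and in the DTAN parameters.

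There is no real obstacle here. The only point needing care is the endpoint argument when $g$ changes sign on the interval, for example if $\lambda$ is large and $c_{\min}<1-1/\lambda$. The convexity argument above covers this case, so no sign assumption on $c_{\min}$ is needed. Finally, I would remark that when $c_{\min}\ge 0$ and $\lambda\in[0,1]$, both endpoint values are nonnegative, so the bound reduces to $|A^{(\mathrm{seq})}|\,\big((1-\lambda)+\lambda c_{\max}\big)$.
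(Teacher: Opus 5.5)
Your proposal is correct and follows essentially the same route as the paper: simplex membership and mass conservation from $\alpha_{s,t}>0$ by factoring out $R^{(s)}$, then writing $A_t=A^{(\mathrm{seq})}m_t$ with $m_t$ affine in the clipped ratio and bounding $|m_t|$ by the larger of its endpoint absolute values. Your convexity and triangle-inequality argument is a slightly more explicit version of the paper's interval-membership step (it does not even need $\lambda\ge 0$). Your check that $\bar w=1/T>0$ also fills in a well-definedness detail the paper leaves implicit.
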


\begin{proof}
The simplex property follows directly from the definition
$c_{s,t}=\alpha_{s,t}/\sum_{k=1}^{L_s}\alpha_{s,k}$ and the positivity
assumption $\alpha_{s,t}>0$. In particular,
\begin{equation}
c_{s,t}\ge 0,
\qquad
\sum_{t=1}^{L_s} c_{s,t}
=
\frac{\sum_{t=1}^{L_s}\alpha_{s,t}}{\sum_{k=1}^{L_s}\alpha_{s,k}}
=1.
\end{equation}
The conservation property is then immediate from $r_{s,t}=R^{(s)}c_{s,t}$:
\begin{equation}
\sum_{t=1}^{L_s} r_{s,t}
=
R^{(s)}\sum_{t=1}^{L_s} c_{s,t}
=
R^{(s)}.
\end{equation}

For the practical PPO instantiation, write
\begin{equation}
m_t=(1-\lambda)+\lambda\,\mathrm{clip}(w_t/\bar w),
\qquad
A_t=A^{(\mathrm{seq})}m_t.
\end{equation}
Under the clipping assumption
$\mathrm{clip}(w_t/\bar w)\in[c_{\min},c_{\max}]$, we have
\begin{equation}
m_t \in \big[(1-\lambda)+\lambda c_{\min},\,
(1-\lambda)+\lambda c_{\max}\big].
\end{equation}
Therefore,
\begin{equation}
|A_t|
=
|A^{(\mathrm{seq})}|\cdot |m_t|
\le
|A^{(\mathrm{seq})}|
\max\!\left\{
\big|(1-\lambda)+\lambda c_{\min}\big|,
\big|(1-\lambda)+\lambda c_{\max}\big|
\right\}.
\end{equation}
This establishes the claimed boundedness.
\end{proof}

\begin{proposition}[Mitigation of global advantage alignment]
\label{prop:global_alignment}
Consider standard sequence-level RLHF with terminal reward propagation, where the token-level advantage is approximated by
\begin{equation}
A_t^{\mathrm{std}} \approx \gamma^{T-t} R(y) - V(s_t),
\end{equation}
with $\gamma \approx 1$. When the value baseline is imperfect, the shared sequence-level scalar $R(y)$ dominates $A_t^{\mathrm{std}}$ over a substantial portion of the trajectory, inducing sequence-wide alignment in the token-level updates.

Under S2T-RLHF, the sequence-level reward is first decomposed into sentence-level rewards $\{R^{(s)}\}_{s=1}^S$, and each sentence-level reward is then refined by simplex-constrained, uniformly bounded within-sentence modulation. Consequently, the resulting update signal is no longer driven by a single sequence-wide scalar, and the residual alignment that would remain under sentence-only uniform allocation is further attenuated. In this sense, S2T-RLHF weakens both cross-sentence alignment and residual within-sentence alignment relative to standard sequence-level RLHF.
\end{proposition}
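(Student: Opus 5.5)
The proposition is qualitative, so the plan is to make ``alignment'' precise and then show that each of the two stages strictly weakens it. We would measure alignment by the structure of the token-advantage vector $(A_1,\dots,A_T)$. Under standard RLHF, Remark~\ref{rem:adv_alignment} gives $A_t^{\mathrm{std}}\approx\gamma^{T-t}R(y)$. With $\gamma\approx 1$, this vector is (up to the baseline) a scalar multiple of a fixed, nearly constant profile. It therefore has rank-one structure and a common sign, so every term $A_t\nabla_\theta\log\pi_\theta(a_t\mid s_t)$ is scaled by the same global scalar. The first step is to record this as the reference case.

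The second step handles Stage I. Let $\omega_s$ be the sentence allocation weights of Lemma~\ref{lem:stage1_localization}, with $R^{(s)}=\omega_s R(y)$ and $\sum_s R^{(s)}=R(y)$. Under sentence-only uniform allocation, the token signal on sentence $s$ is proportional to $\omega_s/L_s$. For the decomposed advantage of Remark~\ref{rem:decomp_alignment}, we have $A_t^{\mathrm{dec}}=\sum_{k\ge t}\gamma^{k-t}r_k-V(s_t)$. Its cross-sentence profile is therefore determined by the tail sums of $\omega_s$, which vary with $s$ whenever the utilities $u_i$ are non-uniform. The Nash balance condition $u_i\beta_i=1$ of Proposition~\ref{prop:kkt} explicitly permits such non-uniformity. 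So the advantage is no longer a single scalar times a fixed profile: its direction depends on $\omega$. At the same time, conservation guarantees that the total mass, and hence the global scale, is unchanged. This yields the claimed reduction of cross-sentence alignment. The residual alignment that remains is within-sentence: all tokens of sentence $s$ still share the weight $\omega_s/L_s$.

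The third step handles Stage II. Within sentence $s$, the weights become $w_{s,t}=\omega_s c_{s,t}$, where $c_s$ lies on the simplex (Proposition~\ref{prop:token_validity}). Whenever the $\alpha_{s,t}$ are non-constant, $c_s$ is non-uniform, and this is generic by the monotonicity in Lemma~\ref{lem:token_monotonicity}. The within-sentence advantage is then heterogeneous, which attenuates the residual alignment. Lemma~\ref{lem:stage2_bounded_refinement} shows that the practical modulation multiplier $m_t$ lies in a bounded interval. This ensures the attenuation does not reintroduce amplified, spike-driven alignment, and that for $\lambda<1$ and $c_{\min}\ge0$ the sign of $A^{(\mathrm{seq})}$ is preserved. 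The refinement is therefore bounded and does not reverse the direction of the update.

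The main obstacle is that ``alignment is weakened'' has no canonical quantitative form. There is also a tension in the practical instantiation: $A_t=A^{(\mathrm{seq})}m_t$ still shares a common sign across tokens, so only magnitude heterogeneity, not sign diversity, is achieved. To handle this, I would formalize alignment as the cosine similarity between the normalized advantage-weight profile and the uniform vector, $\langle m,\mathbf 1\rangle/(\|m\|\sqrt T)$. By Cauchy--Schwarz this equals $1$ exactly in the standard and uniform cases, and it is strictly below $1$ whenever $m$ is non-constant. The Stage I and Stage II statements then become strict inequalities under non-degeneracy assumptions: non-uniform $\omega$, and non-constant $c_s$ within some sentence. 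The remaining claims would be stated as interpretations of this inequality.
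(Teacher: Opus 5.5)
Your route differs from the paper's. The paper's proof is purely qualitative: it cites Lemma~\ref{lem:stage1_localization} to say that different sentences ``inherit different coarse reward masses'' and Lemma~\ref{lem:stage2_bounded_refinement} to say that within-sentence modulation is ``neither uniform nor unconstrained.'' It uses no alignment metric and leaves the needed non-degeneracy implicit. Measuring alignment by $\cos(m,\mathbf{1})$ for the practical multiplier $m_t=(1-\lambda)+\lambda\,\mathrm{clip}(w_t/\bar w)$ is a sharper approach. Your observation that $A_t=A^{(\mathrm{seq})}m_t$ never changes sign, so only magnitude heterogeneity is gained, is correct, and the paper's proof glosses over it. As written, however, the details do not yet deliver either claim.

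\textbf{Stage I: wrong non-degeneracy condition.} Under sentence-only allocation, $w_t=\omega_s/L_s$ and $\bar w=1/T$. So without clipping $\sum_t m_t=T$ and
\[
\cos(m,\mathbf{1})=\Bigl(1-\lambda^2+\lambda^2\,T\textstyle\sum_t w_t^2\Bigr)^{-1/2},
\]
which equals $1$ iff $\omega_s/L_s$ is constant, i.e.\ $\omega_s=L_s/T$. Non-uniform $\omega$ is therefore neither necessary nor sufficient. Length-proportional allocation is perfectly aligned, while uniform $\omega$ with unequal sentence lengths is not. Likewise, ``$\cos=1$ in the uniform case'' holds only sentence by sentence, not globally.

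\textbf{Stage I: the return-to-go detour.} The argument via $A_t^{\mathrm{dec}}$ analyzes a different object. Every $\omega_s\ge\varepsilon/\sum_j u_j>0$, so its tail sums strictly decrease across sentences whatever $u$ is. Even token-uniform dense rewards produce a non-constant ramp there, and tokens within a sentence do not share an advantage. That formulation can therefore neither isolate Stage~I nor exhibit the residual within-sentence alignment; stay with $m$ throughout.

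\textbf{Stage II: the comparative claim.} ``Further attenuated'' compares sentence-only allocation with S2T, and a comparison against $1$ does not address that. Without clipping the comparison does hold. Applying Cauchy--Schwarz twice,
\[
\textstyle\sum_t w_t^2=\sum_s\omega_s^2\sum_t c_{s,t}^2\ \ge\ \sum_s\omega_s^2/L_s\ \ge\ 1/T,
\]
so $1\ge\cos_{\mathrm{sent}}\ge\cos_{\mathrm{S2T}}$ for $\lambda>0$. The first inequality is strict iff $\omega_s\neq L_s/T$ for some $s$; the second is strict iff some $c_s$ is non-uniform.

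\textbf{Stage II: clipping breaks it.} With clipping, this global monotonicity fails. Take two two-token sentences, $\lambda=1$, clip range $[0.8,1.2]$, and pre-clip ratios $w_t/\bar w=(1.6,1.6,0.4,0.4)$ refined to $(2.2,1.0,0.1,0.7)$. The cosine then rises from about $0.981$ to about $0.985$. The second sentence's non-uniform $c_s$ is also flattened to $(0.8,0.8)$. In the clipped instantiation, state the Stage-II claim per sentence, and impose the non-degeneracy on the clipped multipliers rather than on $c_s$.
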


\begin{proof}
Under standard sequence-level RLHF, the approximation
\begin{equation}
A_t^{\mathrm{std}} \approx \gamma^{T-t} R(y) - V(s_t)
\end{equation}
implies that, when $\gamma \approx 1$ and the value baseline is insufficiently accurate, a substantial fraction of the token-level advantages is dominated by the same global scalar $R(y)$. The resulting policy update therefore, takes the form
\begin{equation}
\Delta \theta \propto \sum_{t=1}^T A_t^{\mathrm{std}}
\nabla_\theta \log \pi_\theta(a_t\mid s_t),
\end{equation}
so that many tokens along the trajectory are driven by advantages with a similar sign and scale. This is the sequence-wide alignment effect
identified in Appendix~\ref{app:discussion}.

Under S2T-RLHF, Lemma~\ref{lem:stage1_localization} implies that the global reward $R(y)$ is replaced
by a conserved sentence-level decomposition
\begin{equation}
\sum_{s=1}^S R^{(s)} = R(y),
\end{equation}
So the optimization signal is no longer propagated only through a single sequence-wide scalar. Instead, different sentence groups inherit different coarse reward masses, which breaks the update homogeneity across sentences.

Lemma~\ref{lem:stage2_bounded_refinement} further shows that, within each sentence, Stage II applies a simplex-constrained and uniformly bounded refinement. Accordingly,
token-level modulation within a sentence is neither uniform nor unconstrained: informative local variation can be preserved without allowing arbitrary amplification. This directly mitigates the residual within-sentence alignment that would persist under sentence-only uniform allocation.

Taken together, the two-stage S2T structure weakens the sequence-wide alignment induced by standard terminal-reward propagation and further
reduces the residual alignment that remains at the sentence level.
\end{proof}

\begin{proposition}[Mitigation of variance amplification]
\label{prop:variance_amplification}
Consider standard sequence-level RLHF with token-level advantage
\begin{equation}
A_t^{\mathrm{std}} = \gamma^{T-t} R(y) - V(s_t),
\end{equation}
where $R(y)$ is a sequence-level terminal reward. When the covariance term $\mathrm{Cov}(R(y),V(s_t))$ remains limited, the variance of
$A_t^{\mathrm{std}}$ is dominated by the contribution of the global terminal reward, producing the long-horizon variance amplification
identified in Appendix~\ref{app:discussion}.

Under S2T-RLHF, the sequence-level reward is first redistributed over sentence-level semantic units and is then refined through bounded
within-sentence modulation. Consequently, exposure to a single full-sequence terminal reward is replaced by localized reward support together with uniformly bounded local refinement, which attenuates the variance amplification mechanism of standard sequence-level RLHF.
\end{proposition}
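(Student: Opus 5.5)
The argument follows the template of Proposition~\ref{prop:global_alignment}. First we make the standard-RLHF variance mechanism explicit, then we replace each ingredient with its S2T counterpart using Lemmas~\ref{lem:stage1_localization} and~\ref{lem:stage2_bounded_refinement}.

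\emph{Step 1: the baseline.} Starting from Remark~\ref{rem:variance_amplification}, write
\[
\mathrm{Var}(A_t^{\mathrm{std}})=\gamma^{2(T-t)}\mathrm{Var}(R(y))+\mathrm{Var}(V(s_t))-2\gamma^{T-t}\mathrm{Cov}(R(y),V(s_t)).
\]
When the covariance is limited, this is bounded below by roughly $\gamma^{2(T-t)}\mathrm{Var}(R(y))$. With $\gamma\approx 1$, every token, including the earliest ones, is exposed to the full variance of the global terminal reward. That exposure is the amplification we need to weaken.

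\emph{Step 2: the decomposed advantage.} Under S2T, write
\[
A_t^{\mathrm{dec}}=\sum_{k\ge t}\gamma^{k-t}r_k-V(s_t).
\]
By Lemma~\ref{lem:stage1_localization}, $r_k=\omega_{s(k)}c_{s(k),k}R(y)$, and the weights $w_k=\omega_{s(k)}c_{s(k),k}$ are nonnegative and sum to one. For a token $t$ lying in sentence $s$, the future reward mass is therefore $\sum_{k\ge t}w_k\gamma^{k-t}R(y)$. The coefficient in front of $R(y)$ is at most $\sum_{k\ge t}w_k\le 1$, and it is strictly smaller whenever earlier sentences carry positive mass.

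Conditioning on the allocation weights, the reward-driven variance term becomes $\big(\sum_{k\ge t}\gamma^{k-t}w_k\big)^2\mathrm{Var}(R(y))$, which is at most $\gamma^{2(T-t)}\mathrm{Var}(R(y))$ once $\gamma\approx1$. This formalises Remark~\ref{rem:decomp_variance}: exposure to the global reward is replaced by exposure to the local reward mass from the current and later sentences.

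\emph{Step 3: the practical PPO instantiation.} Here we invoke the bound in Lemma~\ref{lem:stage2_bounded_refinement}. Write $A_t=A^{(\mathrm{seq})}m_t$ with $m_t\in[(1-\lambda)+\lambda c_{\min},(1-\lambda)+\lambda c_{\max}]$. If $m_t$ is treated as a deterministic bounded modulation (conditionally on DTAN parameters), then
\[
\mathrm{Var}(A_t)\le C_\lambda^2\,\mathrm{Var}(A^{(\mathrm{seq})}), \qquad C_\lambda=\max\{|(1-\lambda)+\lambda c_{\min}|,\,|(1-\lambda)+\lambda c_{\max}|\}.
\]
This ensures that token-level refinement cannot reintroduce unbounded variance, so the within-sentence stage is uniformly controlled.

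\emph{Main obstacle.} The statement is qualitative: ``attenuates'' is not a precise inequality. The hardest part is choosing a rigorous interpretation without overclaiming. The difficulty is that $w_k$ depends on $y$, and hence correlates with $R(y)$.

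My first attempt would be to condition on the allocation, as in Step 2, and prove the conditional comparison rigorously. I would then state the unconditional claim under an explicit assumption, either weak dependence between the allocation weights and the reward, or the same limited-covariance assumption used in Remark~\ref{rem:variance_amplification}. The proof would conclude as in Proposition~\ref{prop:global_alignment}: the two lemmas together replace full-horizon terminal exposure with localized support and bounded modulation.
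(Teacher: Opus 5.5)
Your skeleton is the paper's proof. It starts from the baseline bound of Remark~\ref{rem:variance_amplification}, gets localization from Lemma~\ref{lem:stage1_localization} and bounded modulation from Lemma~\ref{lem:stage2_bounded_refinement}, and concludes as in Proposition~\ref{prop:global_alignment}. The paper never makes a real variance comparison: it only asserts localized support and uses the pointwise bound $|A_t|\le C_{\mathrm{clip}}|A^{(\mathrm{seq})}|$. At that qualitative level your argument matches it. Two of the quantitative steps you add, however, do not hold as written.

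In Step~2, $\big(\sum_{k\ge t}\gamma^{k-t}w_k\big)^2\le\gamma^{2(T-t)}$ is true only at $\gamma=1$. For $\gamma<1$ it reverses, because mass delivered early is discounted less than the terminal reward. Already at $t=1$ you get $\sum_k\gamma^{k-1}w_k\ge\gamma^{T-1}\sum_k w_k=\gamma^{T-1}$, strictly unless all mass sits on the last token. For long horizons the gap is large even with $\gamma\approx1$ (e.g.\ $0.99^{200}\approx0.13$). State Step~2 at $\gamma=1$, where the coefficient is exactly the future allocation mass $\mu_t=\sum_{k\ge t}w_k\in[0,1]$.

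Second, your variance bounds with random multipliers need different conditioning. Fixing the DTAN parameters does not fix $m_t$, since $m_t$ depends on the sampled $y$ through $\omega_s$, $\alpha_{s,t}$ and $\bar w$. What holds is the bound conditional on the allocation itself, $\mathrm{Var}(A_t\mid m_t)\le C_\lambda^2\,\mathrm{Var}(A^{(\mathrm{seq})}\mid m_t)$. Likewise, the conditional comparison in Step~2 needs $\mathrm{Var}(R(y)\mid w)$ on both sides.

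Your plan for the unconditional claim would not work. Even if $\mu_t$ is independent of $R(y)$, taking $R(y)\equiv r\neq0$ gives $\mathrm{Var}(\mu_tR(y))=r^2\,\mathrm{Var}(\mu_t)>0=\mathrm{Var}(R(y))$. The random allocation injects variance of its own, and neither weak dependence nor limited $\mathrm{Cov}(R,V)$ controls it.

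What holds with no assumption, and dissolves your ``main obstacle'', is the pointwise route. For every realization, $|\mu_tR(y)|\le|R(y)|$ at $\gamma=1$ and $|A_t|\le C_\lambda|A^{(\mathrm{seq})}|$. Hence $\mathbb{E}[(\mu_tR(y))^2]\le\mathbb{E}[R(y)^2]$ and $\mathrm{Var}(A_t)\le C_\lambda^2\,\mathbb{E}[(A^{(\mathrm{seq})})^2]$. This is just the paper's pointwise bound, squared and averaged.

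Finally, Steps~2 and~3 concern different advantages. Since the clipped ratio is nonnegative, the PPO advantage $A^{(\mathrm{seq})}m_t$ has $m_t\ge1-\lambda$ for every token, so it has no localized support. Step~3 is therefore a no-re-amplification guarantee, not a continuation of Step~2's localization. The paper's proof makes the same conflation.
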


\begin{proof}
For standard sequence-level RLHF, Appendix~\ref{app:discussion} gives
\begin{equation}
\mathrm{Var}(A_t^{\mathrm{std}})
=
\mathrm{Var}(\gamma^{T-t}R(y)-V(s_t))
\ge
\gamma^{2(T-t)}\mathrm{Var}(R(y))
-2\gamma^{T-t}\mathrm{Cov}(R(y),V(s_t)).
\end{equation}
When the value baseline is inaccurate and the covariance term remains small, the dominant variance contribution comes from the shared terminal reward $R(y)$. In this regime, especially for early tokens, the advantage inherits variance exposure over the full effective horizon.

Under S2T-RLHF, Lemma~\ref{lem:stage1_localization} replaces the single sequence-level scalar by a conserved sentence-level decomposition,
\begin{equation}
R(y)=\sum_{s=1}^S R^{(s)},
\end{equation}
So the reward signal is no longer propagated only through a full-sequence terminal quantity. Instead, it is localized to sentence-conditioned token groups, thereby reducing the effective support over which variance is propagated.

Lemma~\ref{lem:stage2_bounded_refinement} further implies that the subsequent token-level refinement is simplex-constrained and uniformly bounded. In particular, the practical PPO modulation satisfies
\begin{equation}
|A_t|
\le
C_{\mathrm{clip}}\,|A^{(\mathrm{seq})}|,
\end{equation}
where $C_{\mathrm{clip}}<\infty$ depends only on $\lambda$ and the clipping range. Thus, the localized reward signal cannot be arbitrarily re-amplified at the token level.

Taken together, the two-stage S2T structure replaces full-horizon variance exposure under terminal-reward propagation by localized reward support with bounded local modulation, thereby mitigating the variance amplification mechanism identified in Appendix~\ref{app:discussion}.
\end{proof}

\begin{proposition}[Mitigation of length-dependent gradient scaling]
\label{prop:length_scaling}
Consider the policy gradient under standard sequence-level RLHF,
\begin{equation}
\nabla_\theta J(\theta)
\propto
\sum_{t=1}^T A_t^{\mathrm{std}}
\nabla_\theta \log \pi_\theta(a_t\mid s_t),
\end{equation}
where $A_t^{\mathrm{std}}$ is induced by sequence-level terminal reward propagation. When the token-level advantages are dominated by the same global reward signal, the cumulative gradient magnitude inherits an explicit dependence on the number of tokens receiving that signal,
yielding the length-dependent gradient scaling identified in Appendix~\ref{app:discussion}.

Under S2T-RLHF, the reward signal is first localized to sentence-level semantic units and is then redistributed within each sentence under
simplex and boundedness constraints. Consequently, the update magnitude is no longer governed solely by a single sequence-wide scalar propagated over all $T$ tokens, and the resulting dependence on total sequence length is attenuated relative to standard sequence-level RLHF.
\end{proposition}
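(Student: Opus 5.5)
The argument mirrors the proofs of Propositions~\ref{prop:global_alignment} and~\ref{prop:variance_amplification}: first make the length dependence under terminal rewards explicit, then show how Lemmas~\ref{lem:stage1_localization} and~\ref{lem:stage2_bounded_refinement} remove the mechanism producing it.

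\textbf{Standard case.} Write $g_t=\nabla_\theta\log\pi_\theta(a_t\mid s_t)$. Invoke Remark~\ref{rem:adv_alignment}: with $\gamma\approx 1$ and an imperfect baseline, $A_t^{\mathrm{std}}\approx R(y)$ for most $t$. Expanding the expected squared norm gives
\begin{equation}
\mathbb{E}\Big\|\sum_{t=1}^T A_t^{\mathrm{std}} g_t\Big\|^2 \approx R(y)^2\Big(\sum_t \mathbb{E}\|g_t\|^2+\sum_{t\neq t'}\mathbb{E}\langle g_t,g_{t'}\rangle\Big).
\end{equation}
Under the non-vanishing diagonal and non-negative correlation assumptions of Remark~\ref{rem:length_bias}, this is at least $R(y)^2\,T\,\min_t\mathbb{E}\|g_t\|^2$. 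This reproduces the length-dependent scaling: every one of the $T$ tokens carries the full scalar $R(y)$.

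\textbf{S2T case.} The key step is to bound the per-token signal by the allocated mass rather than by $R(y)$.

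\emph{Conceptual decomposition.} By Lemma~\ref{lem:stage1_localization} and Lemma~\ref{lem:stage2_bounded_refinement}, each token reward is $r_{s,t}=\omega_s c_{s,t}R(y)$. Conservation gives
\begin{equation}
\sum_{s,t}|r_{s,t}|=|R(y)|,
\end{equation}
independently of $T$. With localized (bounded-window) advantages as in Remark~\ref{rem:decomp_length}, we have $|A_t^{\mathrm{dec}}|\lesssim$ the local reward mass plus the baseline error. The sum $\sum_t|A_t^{\mathrm{dec}}|\,\|g_t\|$ is then controlled by $|R(y)|\,W\sup_t\|g_t\|$, where $W$ is the window length; this bound does not grow like $T$.

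\emph{Practical PPO instantiation.} Here the advantage is still proportional to $A^{(\mathrm{seq})}$, so the bound is weaker. Lemma~\ref{lem:stage2_bounded_refinement} gives $|A_t|\le C_{\mathrm{clip}}|A^{(\mathrm{seq})}|$, and the modulation $w_t/\bar w$ averages to one. The claim can therefore only be that length is not the \emph{sole} driver of the magnitude: the magnitude is redistributed across sentence groups, with weights varying in sign relative to the mean, which reduces the coherent cross-term sum.

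\textbf{Main obstacle.} Making ``attenuated'' rigorous in the practical instantiation is the hard part. Because $\bar w$-normalization preserves the mean advantage, the diagonal term still scales with $T$. I would first try to show that the off-diagonal cross terms are reduced. The modulation factors $m_t=(1-\lambda)+\lambda\,\mathrm{clip}(w_t/\bar w)$ have mean approximately one but nonzero within-sequence dispersion, so the aligned cross-term sum $\sum_{t\neq t'}m_tm_{t'}\langle g_t,g_{t'}\rangle$ is weighted heterogeneously. Concluding from this that the coherent accumulation is reduced relative to uniform $m_t\equiv 1$ requires an assumption that the weights are not positively correlated with the gradient alignments.

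Failing that, I would state the result qualitatively, matching the style of the preceding propositions. The conclusion would then be that, by Lemma~\ref{lem:stage1_localization}, the update is no longer governed solely by one scalar propagated over all $T$ tokens, and that, by Lemma~\ref{lem:stage2_bounded_refinement}, no token's contribution can exceed $C_{\mathrm{clip}}|A^{(\mathrm{seq})}|$, so re-amplification at the token level is excluded.
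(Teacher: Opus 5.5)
Your qualitative fallback is exactly the paper's proof. The paper argues only that Lemma~\ref{lem:stage1_localization} replaces the single sequence-wide scalar by conserved sentence-local reward masses, and that Lemma~\ref{lem:stage2_bounded_refinement} rules out token-level re-amplification. It attempts no quantitative bound. Your second-moment expansion for the standard case just makes Remark~\ref{rem:length_bias} explicit and is fine. What does not hold as written is the step you call the key step, and the cross-term plan for the practical instantiation. Neither should be presented as part of the proof.

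\emph{Conceptual case.} Summing your per-token estimate gives $\sum_t|A_t^{\mathrm{dec}}|\le W|R(y)|+\sum_t|e_t|$, where $e_t$ is the baseline error, and the second term is $O(T)$. More seriously, the bounded window is an extra assumption, not a consequence of Lemmas~\ref{lem:stage1_localization}--\ref{lem:stage2_bounded_refinement}. Softplus makes every $c_{s,t}>0$, so the reward support is the whole response. With $\gamma\approx 1$ returns-to-go, even uniform allocation $r_k=R(y)/T$ gives $\sum_{t=1}^T\big|\sum_{k\ge t}r_k\big|=|R(y)|(T+1)/2$, which is still linear in $T$.

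\emph{Practical case.} Reducing cross terms cannot deliver attenuation, even for fixed weights $m_t$ (your no-correlation assumption). Take an equicorrelated model with $\mathbb{E}\|g_t\|^2=\sigma^2$ and $\mathbb{E}\langle g_t,g_{t'}\rangle=\rho\in[0,\sigma^2]$. Then $\mathbb{E}\|\sum_t m_t g_t\|^2=\rho(\sum_t m_t)^2+(\sigma^2-\rho)\sum_t m_t^2$. Using $\sum_t m_t=T$ (your mean-one observation, exact before clipping) and $\sum_t m_t^2\ge T$ (Cauchy--Schwarz), this is at least the uniform value $\rho T^2+(\sigma^2-\rho)T$. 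Dispersion trims only $O(T)$ from the cross terms, leaves the coherent $\rho T^2$ term intact, and enlarges the diagonal by at least as much. So your instinct to retreat to the qualitative statement is correct: that is the only level at which the proposition is supported, and it is the level at which the paper proves it.
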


\begin{proof}
Under standard sequence-level RLHF, when a substantial fraction of the token-level advantages is dominated by the same sequence-level reward
$R(y)$, these advantages tend to share a similar sign along the trajectory. The policy gradient therefore takes the form
\[
\nabla_\theta J(\theta)
\propto
\sum_{t=1}^T A_t^{\mathrm{std}}
\nabla_\theta \log \pi_\theta(a_t\mid s_t),
\]
so that the cumulative update magnitude grows with the number of tokens that inherit the common global signal. This induces the length-dependent coupling between optimization pressure and total sequence length analyzed
in Appendix~\ref{app:discussion}.

Under S2T-RLHF, Lemma~\ref{lem:stage1_localization} implies that the global reward is replaced by a conserved sentence-level decomposition. The update is therefore no longer driven exclusively by a single scalar propagated uniformly across the entire trajectory, but by a structured collection of sentence-local reward masses. This weakens the direct dependence of the update magnitude on the total number of generated tokens.

Lemma~\ref{lem:stage2_bounded_refinement} further shows that the subsequent token-level redistribution within each sentence is simplex-constrained and uniformly bounded.
Accordingly, localized refinement cannot assign arbitrarily large token weights that would recreate pathological scaling at the token level after sentence-level localization.

Taken together, the two-stage S2T structure replaces sequence-wide scalar propagation by localized and bounded reward redistribution, thereby attenuating the length-dependent gradient scaling mechanism identified in Appendix~\ref{app:discussion}.
\end{proof}

\begin{corollary}[End-to-end justification of the two-stage design] The two-stage structure of S2T-RLHF admits an end-to-end theoretical
justification relative to the instability mechanisms identified in Appendix~\ref{app:discussion}. In particular, Stage I induces sentence-level reward
localization, while Stage II provides bounded within-sentence refinement. Their combination jointly attenuates global advantage alignment, long-horizon variance amplification, and length-dependent gradient scaling.
\end{corollary}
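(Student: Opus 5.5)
The corollary is a qualitative synthesis, so the plan is to assemble it directly from the results just established rather than prove anything new. I will split the claim into three parts, one per instability mechanism from Appendix~\ref{app:discussion}: global advantage alignment (Remark~\ref{rem:adv_alignment}), long-horizon variance amplification (Remark~\ref{rem:variance_amplification}), and length-dependent gradient scaling (Remark~\ref{rem:length_bias}). For each part I will invoke the matching mitigation result: Proposition~\ref{prop:global_alignment}, Proposition~\ref{prop:variance_amplification}, and Proposition~\ref{prop:length_scaling}.

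First, I will fix the two structural properties the corollary attributes to the stages. Stage I yields sentence-level localization with conservation, $\omega_i\ge 0$, $\sum_i\omega_i=1$ and $\sum_i r_i=R(x)$, which is Lemma~\ref{lem:stage1_localization}. This rests on $u_i=\max((M\beta)_i,\varepsilon)>0$, and that is well defined along the solver by Lemma~\ref{lem:positivity}. Stage II yields simplex-constrained, mass-preserving refinement with the uniform bound
\[
|A_t|\le C_{\mathrm{clip}}\,|A^{(\mathrm{seq})}|, \qquad C_{\mathrm{clip}}=\max\{|(1-\lambda)+\lambda c_{\min}|,\,|(1-\lambda)+\lambda c_{\max}|\},
\]
which is Lemma~\ref{lem:stage2_bounded_refinement}. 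Combining the two conservation identities gives end-to-end conservation:
\[
\sum_s\sum_t r_{s,t}=\sum_s R^{(s)}=R(x).
\]
This shows that any stabilization comes from redistribution and not from rescaling.

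Next, I will observe that each of the three mitigation propositions uses exactly these two lemmas as its only hypotheses about S2T. Lemma~\ref{lem:stage1_localization} supplies heterogeneous sentence masses, and Lemma~\ref{lem:stage2_bounded_refinement} ensures that token-level re-amplification is capped by $C_{\mathrm{clip}}$. The conjunction of the three conclusions is then precisely the statement that Stage I and Stage II together attenuate all three mechanisms.

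To justify the word \emph{jointly}, I will also point to the role of each stage separately. Stage I alone leaves residual within-sentence alignment, which Proposition~\ref{prop:global_alignment} explicitly says Stage II attenuates. Stage II alone, without the bound, could recreate large token weights, and the clipping bound rules this out in Propositions~\ref{prop:variance_amplification} and~\ref{prop:length_scaling}.

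The main obstacle is that \emph{attenuates} is only a comparative, qualitative notion. The underlying propositions are heuristic, relying on assumptions such as an imperfect value baseline and small covariance. The corollary therefore cannot be made sharper than its inputs. I would state explicitly that it inherits the assumptions and the comparative sense of those propositions.

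If a quantitative anchor is desired, the bound $C_{\mathrm{clip}}$ is the place to get it. For example, with $A_t=A^{(\mathrm{seq})}m_t$ and $|m_t|\le C_{\mathrm{clip}}$, we have $\mathrm{Var}(A_t)\le C_{\mathrm{clip}}^2\,\mathbb{E}[(A^{(\mathrm{seq})})^2]$ uniformly in $t$ and in sentence length. I would add this as a remark rather than as a required step.
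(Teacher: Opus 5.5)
Your proposal follows the paper's proof: the corollary is the conjunction of Propositions~\ref{prop:global_alignment}, \ref{prop:variance_amplification}, and~\ref{prop:length_scaling}, with Stage~I supplying conserved sentence-level localization (Lemma~\ref{lem:stage1_localization}) and Stage~II supplying simplex-constrained refinement bounded by $C_{\mathrm{clip}}$ (Lemma~\ref{lem:stage2_bounded_refinement}), which is the same complementary-roles argument the paper gives. Your optional variance remark is correct, but it only shows that the token level cannot re-amplify the signal, not that it attenuates it: since $C_{\mathrm{clip}}\ge 1$ whenever $c_{\max}\ge 1$, the bound $C_{\mathrm{clip}}^2\,\mathbb{E}[(A^{(\mathrm{seq})})^2]$ does not by itself show attenuation relative to standard RLHF.
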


\begin{proof}
Proposition~\ref{prop:global_alignment} shows that the two-stage structure weakens global advantage alignment by replacing sequence-wide reward propagation with sentence-localized and bounded token-level modulation. Proposition~\ref{prop:variance_amplification} shows that the same structure attenuates variance amplification by combining localized reward support with uniformly bounded local refinement. Proposition~\ref{prop:length_scaling} further shows that this combination weakens the dependence of gradient magnitude on total sequence length.

Taken together, these results identify complementary roles for the two stages. Stage I localizes the reward signal at the sentence level, thereby removing the sequence-wide mismatch induced by terminal-reward propagation. Stage II refines this localized signal within each sentence under explicit simplex and boundedness constraints, preventing the local redistribution from degenerating into unconstrained token-level amplification. The stated corollary follows.
\end{proof}

%% file: appendix/app_details.tex
\newpage
\section{Implementation Details}
\label{app:impl_details}
\subsection{Experiments Details}
\label{app:exper_details}
\subsubsection{Training Hyperparameters}

\begin{table}[!h]
  \caption{Training hyperparameters.}
  \label{tab:train_params}
  \begin{center}
    \begin{small}
      \begin{sc}
        \begin{tabular}{lc}
          \toprule
          \textbf{Parameter} & \textbf{Value} \\
          \midrule
            Prompt batch size         & 8 \\
            KL coefficient            & 0.15 \\
            Top-$p$                   & 0.9 \\
            Learning rate         & $3\times10^{-6}$ \\
            PPO clip range            & 0.1 \\
            PPO epochs                & 2 \\
            Batch size                & 16 \\
            Mini-batch size           & 2 \\
            Entropy coefficient       & 0.005 \\
            Value loss coefficient    & 0.2 \\
            GAE $\lambda$             & 0.95 \\
            \bottomrule
        \end{tabular}
      \end{sc}
    \end{small}
  \end{center}
  \vskip -0.1in
\end{table}

\subsubsection{Inter-Sentence Bargaining Solver}

\begin{table}[!h]
  \caption{Hyperparameters of the inter-sentence bargaining solver.}
  \label{tab:bargaining_config}
  \begin{center}
    \begin{small}
      \begin{sc}
        \begin{tabular}{lc}
          \toprule
          \textbf{Parameter} & \textbf{Value} \\
          \midrule
            Maximum iterations        & 100 \\
            Convergence tolerance     & $10^{-6}$ \\
            Damping factor            & 0.5 \\
            Numerical stability $\epsilon$ & $10^{-8}$ \\
          \bottomrule
        \end{tabular}
      \end{sc}
    \end{small}
  \end{center}
  \vskip -0.1in
\end{table}

\subsubsection{DTAN Architecture}

\begin{table}[!h]
  \caption{DTAN hyperparameters.}
  \label{tab:dtan_params}
  \begin{center}
    \begin{small}
      \begin{sc}
        \begin{tabular}{lc}
          \toprule
          \textbf{Parameter} & \textbf{Value} \\
          \midrule
            Input dimension  & 3584 \\
            Hidden dimension & 1024 \\
            Activation       & \texttt{GELU} \\
          \bottomrule
        \end{tabular}
      \end{sc}
    \end{small}
  \end{center}
  \vskip -0.1in
\end{table}

\subsubsection{LoRA Configuration}

\begin{table}[!h]
  \caption{LoRA hyperparameters used for policy fine-tuning.}
  \label{tab:lora_params}
  \begin{center}
    \begin{small}
      \begin{sc}
        \begin{tabular}{lc}
          \toprule
          \textbf{Parameter} & \textbf{Value} \\
          \midrule
            LoRA rank ($r$)        & 8 \\
            LoRA scaling ($\alpha$) & 16 \\
            LoRA dropout           & 0.05 \\
          \bottomrule
        \end{tabular}
      \end{sc}
    \end{small}
  \end{center}
  \vskip -0.1in
\end{table}

%% file: appendix/app_metrics_stable.tex
\newpage
\section{Quantitative Stability Metrics Across Seeds}
\label{app:stability_metrics_default}

To substantiate qualitative descriptors such as ``smoother'' and ``less oscillatory,''
we quantify training stability under the default configuration used in the main experiments
and aggregate results over three random seeds.

\paragraph{Logged trajectories.}
We record training dynamics at $T$ logging checkpoints using the same logging frequency and evaluation protocol for all methods and seeds.
At each checkpoint, we evaluate the current policy on a fixed set of prompts and log:
(i) KL divergence to the reference policy, (ii) the PPO policy objective,
(iii) dispersion of reward-model scores within the evaluation batch (reward variance), and
(iv) policy entropy.
All quantities are computed identically to the main training logs for every method and seed.
In particular, entropy is computed consistently by token-averaging over generated responses and then averaging over prompts.

\paragraph{Metrics.}
From the logged trajectories, we report four scalar stability summaries:
(i) \textbf{KL-Mean}, the time-average KL, measuring overall policy drift;
(ii) \textbf{Obj-$\Delta$}, the mean absolute step-to-step change of the PPO objective, quantifying objective oscillations;
(iii) \textbf{RVar-Mean}, the time-average reward variance within the evaluation batch, capturing dispersion of the preference signal;
and (iv) \textbf{Ent-Var}, the temporal variance of policy entropy, measuring volatility in exploration behavior.
Entropy is included mainly as a diagnostic to rule out degenerate collapse; its mean level need not be monotonic with stability,
whereas \textbf{Ent-Var} directly reflects fluctuations.
Lower values indicate more stable and better-controlled optimization dynamics.
%
%

\paragraph{Results.}
Tables~\ref{tab:stability_window_0_1000} and~\ref{tab:stability_window_5000_6000} summarize stability statistics over three seeds in two training phases.
Across both windows, sentence-structured credit assignment substantially reduces optimization oscillations: \textbf{Obj-$\Delta$} drops by an order of magnitude compared to vanilla RLHF, and S2T-RLHF achieves the smallest \textbf{Obj-$\Delta$} among all variants.
This supports our thesis that aligning reward decomposition with sentence-level semantics stabilizes PPO optimization, and that bounded within-sentence refinement further improves control.

For policy drift, token-only refinement exhibits the largest \textbf{KL-Mean} in the late window, consistent with amplified drift under overly fine-grained credit assignment.
S2T-RLHF reduces this drift relative to token-only refinement while maintaining the most stable objective dynamics.
Entropy volatility remains comparable across methods, with S2T-RLHF showing reduced \textbf{Ent-Var} in the early adaptation phase.

Finally, \textbf{RVar-Mean} increases for decomposition-based variants in the late phase.
Importantly, reward-score dispersion reflects evaluator variability rather than update volatility; with bounded refinement, S2T-RLHF can still yield smoother and better-controlled optimization even when reward scores are more dispersed.

\begin{table}[ht]
\centering
\small
\caption{Stability metrics in the early training window (0--1000 steps). Lower is better ($\downarrow$).}
\label{tab:stability_window_0_1000}
\begin{tabular}{lcccc}
\toprule
Method & RVar-Mean $\downarrow$ & KL-Mean $\downarrow$ & Obj-$\Delta$ $\downarrow$ & Ent-Var $\downarrow$ \\
\midrule
RLHF       & $0.1071 \pm 0.3397$ & $0.00405 \pm 0.00248$ & $0.01134 \pm 0.03368$ & $3.1156 \pm 1.7894$ \\
Sentence   & $0.2704 \pm 0.3914$ & $0.00372 \pm 0.00126$ & $0.00132 \pm 0.02201$ & $3.0894 \pm 1.8529$ \\
Token      & $0.2314 \pm 0.3832$ & $0.00455 \pm 0.00177$ & $0.00622 \pm 0.02341$ & $3.0754 \pm 1.8073$ \\
S2T-RLHF   & $0.2477 \pm 0.3657$ & $0.00418 \pm 0.00141$ & $0.00101 \pm 0.01299$ & $2.8496 \pm 1.7495$ \\
\bottomrule
\end{tabular}
\end{table}

\begin{table}[ht]
\centering
\small
\caption{Stability metrics in the late training window (5000--6000 steps). Lower is better ($\downarrow$).}
\label{tab:stability_window_5000_6000}
\begin{tabular}{lcccc}
\toprule
Method & RVar-Mean $\downarrow$ & KL-Mean $\downarrow$ & Obj-$\Delta$ $\downarrow$ & Ent-Var $\downarrow$ \\
\midrule
RLHF       & $0.2413 \pm 0.3319$ & $0.00523 \pm 0.00139$ & $0.01106 \pm 0.02562$ & $2.9479 \pm 1.7983$ \\
Sentence   & $0.4045 \pm 0.4640$ & $0.00619 \pm 0.00190$ & $0.00870 \pm 0.02125$ & $2.7376 \pm 1.7785$ \\
Token      & $0.4266 \pm 0.4147$ & $0.00902 \pm 0.00264$ & $0.00952 \pm 0.01699$ & $2.7834 \pm 1.7134$ \\
S2T-RLHF   & $0.4549 \pm 0.4239$ & $0.00837 \pm 0.00289$ & $0.00719 \pm 0.01361$ & $2.8489 \pm 1.9072$ \\
\bottomrule
\end{tabular}
\end{table}

\clearpage

%% file: appendix/app_stable.tex
\newpage
\section{Early-Stage Training Dynamics}
\label{app:early_stage}

This section presents a focused analysis of early-stage training dynamics in RLHF.
Empirically, we observe that training instability predominantly arises during the initial phase of optimization, where noise in credit assignment can be rapidly amplified before policy distributions stabilize. Accordingly, we provide a zoomed-in analysis of the first 500 optimization steps for different credit assignment strategies, complementing the full training trajectories reported in the main paper.

Beyond temporal localization, we further investigate whether the observed stability gains of S2T-RLHF depend on the learnable capacity of DTAN. In the standard S2T-RLHF setup, DTAN is initialized randomly and optimized jointly with the policy during RLHF training; we denote this variant as \textbf{S2T-noPT}. To assess the role of pretraining, we also consider a variant in which DTAN is first pretrained using the same data and reward model, and then held fixed during subsequent RLHF optimization; we denote this variant as \textbf{S2T-PT}.

Across settings, pretraining DTAN does not improve training stability and, in some cases, slightly increases early-stage reward variability. These results indicate that the stability benefits of S2T-RLHF arise from the structural regularization imposed by bounded, non-adaptive token-level refinement, rather than from increased representational capacity.

To improve interpretability of highly noisy early-stage trajectories, we visualize training curves using a smoothed estimate of the underlying dynamics. Specifically, for each metric we apply a centered sliding-window average over optimization steps, which acts as a low-pass filter to suppress high-frequency stochastic fluctuations induced by minibatch sampling and reward noise, while preserving long-term training trends. We additionally overlay the raw unsmoothed trajectories with reduced opacity to ensure faithful representation of the original data. This visualization strategy follows standard practice in reinforcement learning for analyzing optimization stability and convergence behavior, enabling clearer comparison of structural differences across credit assignment mechanisms.

\subsection{Early-stage dynamics under learning rate $5\times10^{-6}$}
\label{app:lr5e-6}

Figure~\ref{fig:lr5e-6} summarizes the first 500 PPO updates under a conservative learning rate ($5\times10^{-6}$), where stochasticity is moderate and behavioral differences primarily reflect the structure of credit assignment. We compare standard RLHF, sentence-only and token-only decompositions, and S2T-RLHF with/without DTAN pretraining.

\paragraph{Entropy.}
All methods keep policy entropy within a narrow band, indicating no early mode collapse in this regime. However, token-level decomposition exhibits visibly larger entropy fluctuations, consistent with higher sensitivity to local reward noise. Sentence-level decomposition and both S2T variants yield smoother entropy trajectories, suggesting more stable early exploration.

\paragraph{KL to the reference policy.}
Token-level decomposition shows a clearer upward drift in KL even at this small step size, implying more aggressive deviation from the reference distribution. In contrast, standard RLHF and sentence-level decomposition maintain tighter KL control, and S2T-RLHF matches or improves upon this behavior, indicating that structured sentence-to-token allocation can regulate early policy drift.

\paragraph{Objective, reward variance, and critic learning.}
Token-level decomposition attains higher short-term objective values, but with substantial variability, consistent with noisy (high-variance) advantage estimates. S2T-RLHF exhibits a smoother and more monotonic objective evolution and lower reward variance, indicating reduced amplification of stochastic learning signals. Value-function loss decreases across all methods, but S2T-RLHF shows the most consistent early convergence. DTAN pretraining does not improve stability and can slightly increase variability.

\paragraph{Takeaway.}
Under $5\times10^{-6}$, S2T-RLHF achieves stable early optimization while avoiding the larger variance and stronger drift induced by fine-grained token-level credit assignment. The limited benefit of DTAN pretraining supports the view that stability gains mainly arise from structural regularization rather than added capacity.

\subsection{Early-stage dynamics under learning rate $8\times10^{-6}$}
\label{app:lr8e-6}

Figure~\ref{fig:lr8e-6} reports the first 500 updates under a moderately larger learning rate ($8\times10^{-6}$), which acts as a stronger stress test where optimization noise is more easily amplified.

\paragraph{Entropy.}
Entropy remains bounded across methods but becomes visibly more volatile than at $5\times10^{-6}$. Token-level decomposition shows the largest fluctuations, while S2T-RLHF maintains comparatively smooth entropy evolution, indicating more robust exploration behavior as step sizes increase.

\paragraph{KL to the reference policy.}
Differences are most pronounced in KL. Token-level decomposition exhibits rapid and sustained KL growth, reflecting accelerated policy drift under higher learning rates. Standard RLHF and sentence-level decomposition retain tighter control, and S2T-RLHF achieves consistently lower KL growth throughout early training. The pretrained DTAN variant tends to drift slightly more than the non-pretrained one.

\paragraph{Objective, reward variance, and critic learning.}
Token-level decomposition can reach higher objective values later in this window, but these gains coincide with large oscillations and higher reward variance. In contrast, S2T-RLHF improves the objective more smoothly and maintains more consistent reward statistics. Value-function loss decreases for all methods, but S2T-RLHF shows a more monotonic and stable convergence pattern.

\paragraph{Takeaway.}
At $8\times10^{-6}$, instability from token-level credit assignment becomes more apparent, primarily through variance amplification and excessive drift. S2T-RLHF remains stable and better controlled, again suggesting that structured allocation, not additional capacity, drives the robustness gains.

\subsection{Early-stage dynamics under learning rate $1\times10^{-5}$}
\label{app:lr1e-5}

Figure~\ref{fig:lr1e-5} reports early-stage dynamics under an aggressive learning rate ($1\times10^{-5}$), near the boundary where RLHF training often becomes unstable.

\paragraph{Entropy.}
Entropy stays bounded on average but exhibits markedly larger fluctuations than in lower learning-rate settings. Token-level decomposition remains the most volatile, indicating unstable exploration under aggressive updates. Sentence-level decomposition partially moderates this effect, while S2T-RLHF shows smoother entropy evolution.

\paragraph{KL to the reference policy.}
Token-level decomposition displays the fastest and most persistent KL growth, consistent with excessive policy drift. S2T-RLHF substantially mitigates this drift relative to token-only methods, while standard RLHF and sentence-level decomposition typically retain the tightest KL control. As in other regimes, DTAN pretraining does not reduce drift and often slightly worsens it.

\paragraph{Objective, reward variance, and critic learning.}
Token-level decomposition can produce larger objective values, but with frequent oscillations and elevated reward variance, consistent with high-variance advantages under large step sizes. S2T-RLHF yields a more gradual and stable objective improvement, alongside lower reward variance and smoother critic convergence.

\paragraph{Takeaway.}
Under $1\times10^{-5}$, fine-grained token-level credit assignment leads to pronounced instability, which is most clearly reflected in increased policy drift and amplified reward variability, whereas S2T-RLHF remains comparatively robust, supporting the need for structured and bounded credit assignment in aggressive optimization regimes.

\subsection{Early-stage dynamics under reduced KL constraint}
\label{app:kl01}

Figure~\ref{fig:kl01} repeats the early-stage analysis under a stricter KL constraint ($0.1$), which directly limits deviation from the reference policy and complements the learning-rate-based stress tests.

As expected, the tighter KL constraint bounds policy drift for all methods, compressing differences in KL trajectories relative to looser settings. Nevertheless, stability differences remain.

\paragraph{Entropy and objective.}
Entropy variance is reduced overall, but token-level decomposition continues to exhibit larger fluctuations than sentence-level and S2T-RLHF variants, indicating persistent sensitivity to stochastic reward signals even under stronger regularization. With tighter KL control, objective gains are also more conservative: token-level decomposition still shows slightly higher short-term objectives, but with noticeable variability, whereas S2T-RLHF remains smoother and more consistent.

\paragraph{Reward variance and critic learning.}
Reward variance decreases across methods under stronger KL regularization, yet token-level decomposition still exhibits relatively higher fluctuations. S2T-RLHF maintains lower and more stable reward variance and shows smoother value-loss convergence, suggesting more reliable critic learning under bounded token-level refinement.

\paragraph{Takeaway.}
A stricter KL constraint stabilizes RLHF for all methods but does not eliminate structural differences across credit assignment strategies. S2T-RLHF remains more stable than token-level decomposition even when policy updates are tightly constrained, implying that the gains cannot be explained solely by KL regularization and instead arise from structured reward allocation.

\begin{figure}[!ht]
  \vskip 0.2in
  \centering

  \begin{subfigure}{0.49\columnwidth}
    \centering
    \includegraphics[width=\linewidth]{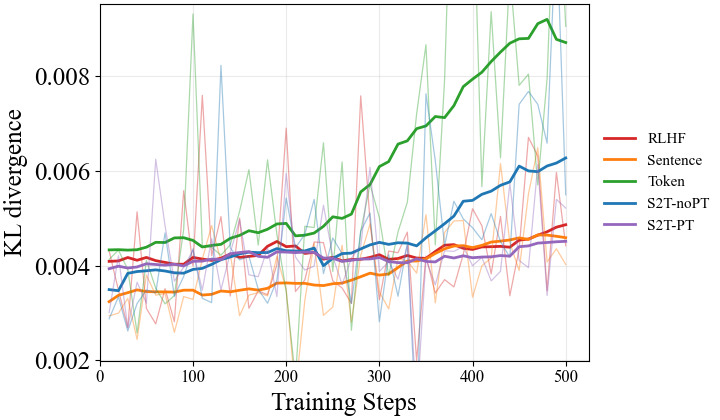}
    \caption{KL Divergence}
    \label{fig:5_kl}
  \end{subfigure}
  \hfill
  \begin{subfigure}{0.49\columnwidth}
    \centering
    \includegraphics[width=\linewidth]{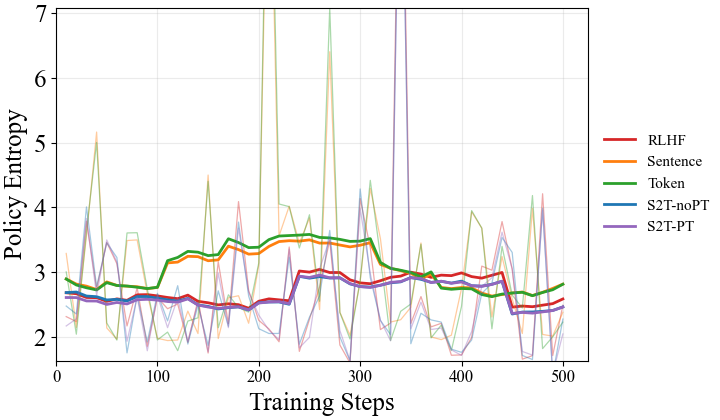}
    \caption{Policy Entropy}
    \label{fig:5_entropy}
  \end{subfigure}

  \vskip 0.15in

  \begin{subfigure}{0.49\columnwidth}
    \centering
    \includegraphics[width=\linewidth]{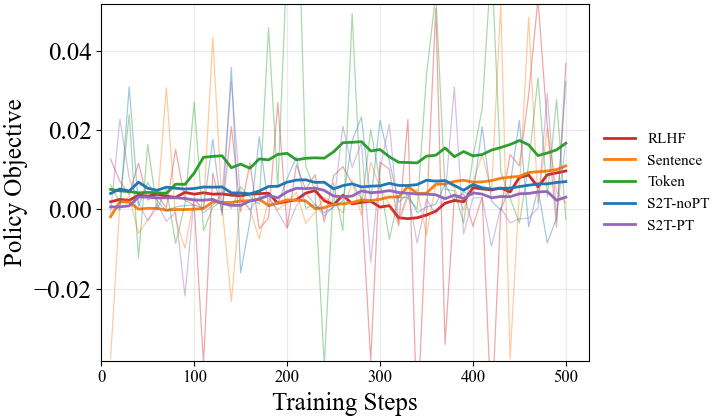}
    \caption{Policy Objective}
    \label{fig:5_policy}
  \end{subfigure}
  \hfill
  \begin{subfigure}{0.49\columnwidth}
    \centering
    \includegraphics[width=\linewidth]{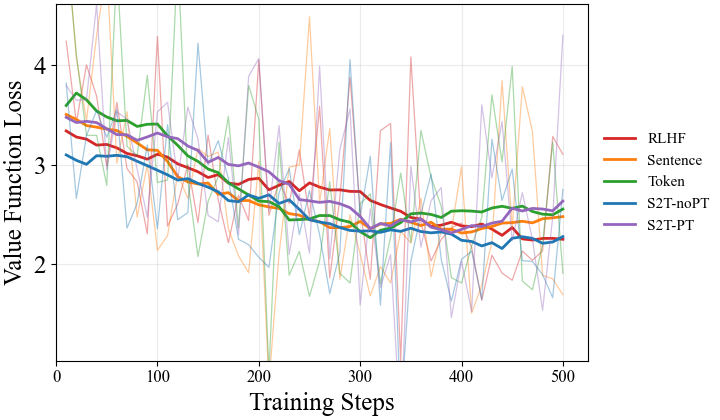}
    \caption{Value Function Loss}
    \label{fig:5_value}
  \end{subfigure}

  \vskip 0.15in

  \begin{subfigure}{0.49\columnwidth}
    \centering
    \includegraphics[width=\linewidth]{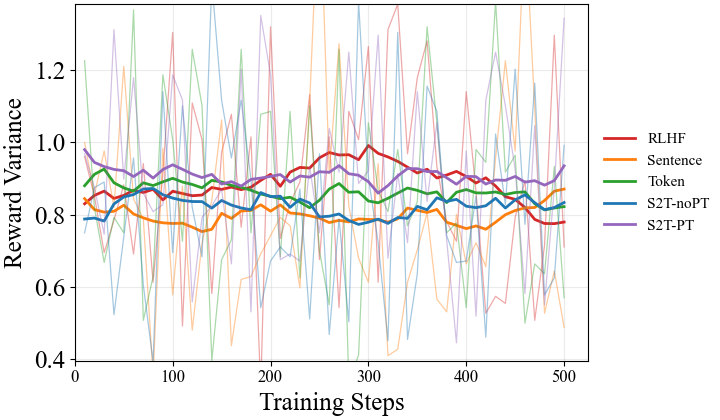}
    \caption{Reward Variance}
    \label{fig:5_reward}
  \end{subfigure}

  \caption{Early-stage training dynamics under learning rate $5\times10^{-6}$.}
  \label{fig:lr5e-6}
\end{figure}


\begin{figure}[!ht]
  \vskip 0.2in
  \centering

  \begin{subfigure}{0.49\columnwidth}
    \centering
    \includegraphics[width=\linewidth]{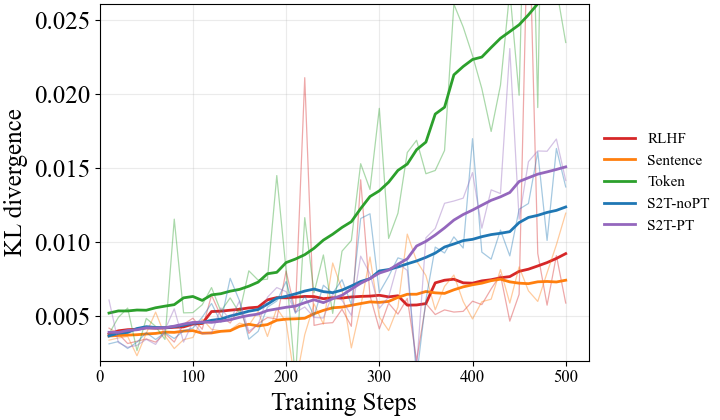}
    \caption{KL Divergence}
    \label{fig:8_kl}
  \end{subfigure}
  \hfill
  \begin{subfigure}{0.49\columnwidth}
    \centering
    \includegraphics[width=\linewidth]{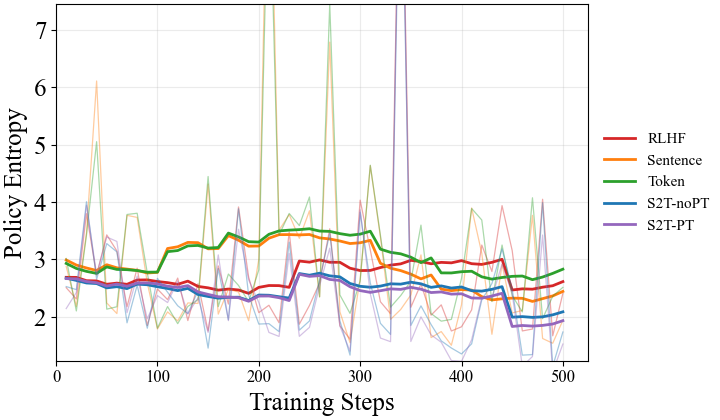}
    \caption{Policy Entropy}
    \label{fig:8_entropy}
  \end{subfigure}

  \vskip 0.15in

  \begin{subfigure}{0.49\columnwidth}
    \centering
    \includegraphics[width=\linewidth]{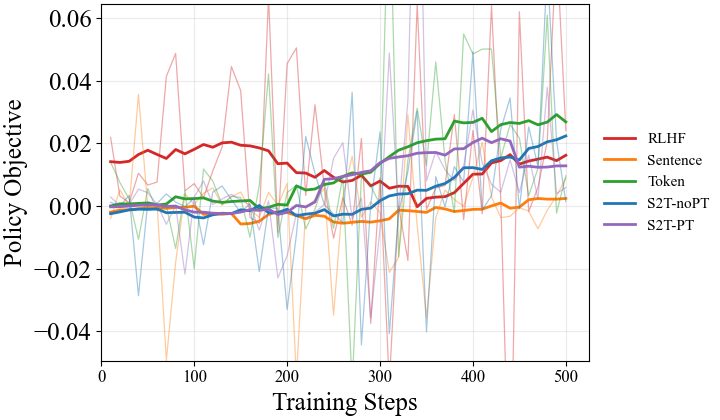}
    \caption{Policy Objective}
    \label{fig:8_policy}
  \end{subfigure}
  \hfill
  \begin{subfigure}{0.49\columnwidth}
    \centering
    \includegraphics[width=\linewidth]{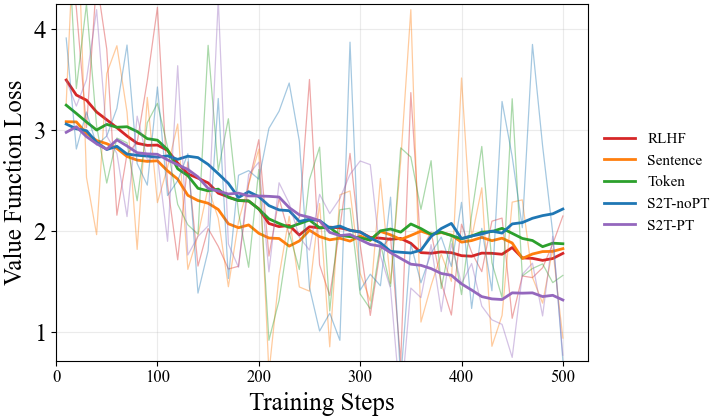}
    \caption{Value Function Loss}
    \label{fig:8_value}
  \end{subfigure}

  \vskip 0.15in

  \begin{subfigure}{0.49\columnwidth}
    \centering
    \includegraphics[width=\linewidth]{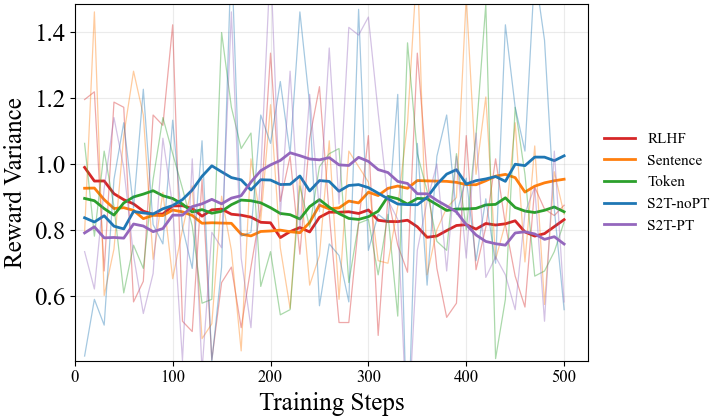}
    \caption{Reward Variance}
    \label{fig:8_reward}
  \end{subfigure}

  \caption{Early-stage training dynamics under learning rate $8\times10^{-6}$.}
  \label{fig:lr8e-6}
\end{figure}


\begin{figure}[!ht]
  \vskip 0.2in
  \centering

  \begin{subfigure}{0.49\columnwidth}
    \centering
    \includegraphics[width=\linewidth]{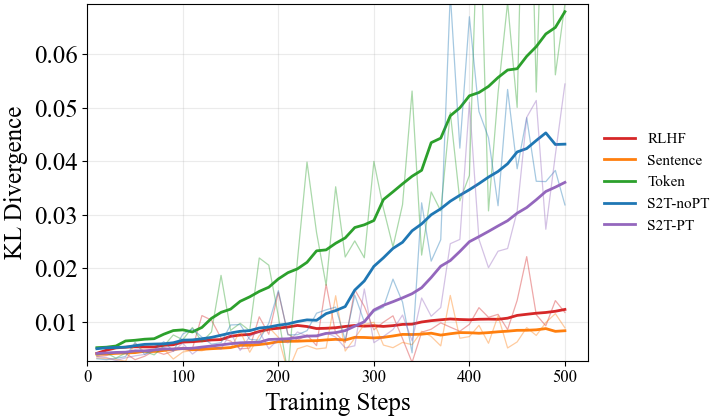}
    \caption{KL Divergence}
    \label{fig:1_kl}
  \end{subfigure}
  \hfill
  \begin{subfigure}{0.49\columnwidth}
    \centering
    \includegraphics[width=\linewidth]{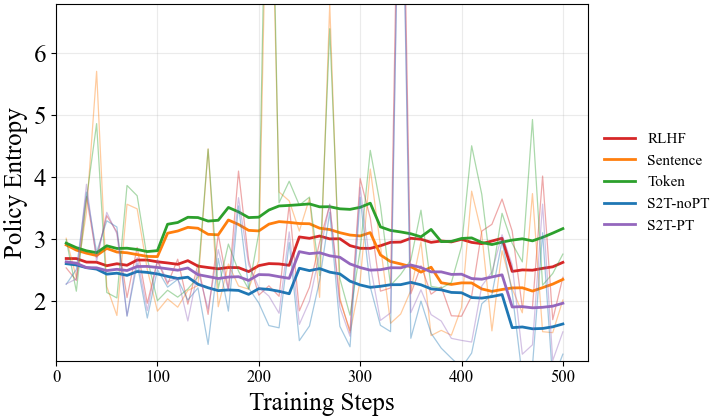}
    \caption{Policy Entropy}
    \label{fig:1_entropy}
  \end{subfigure}

  \vskip 0.15in

  \begin{subfigure}{0.49\columnwidth}
    \centering
    \includegraphics[width=\linewidth]{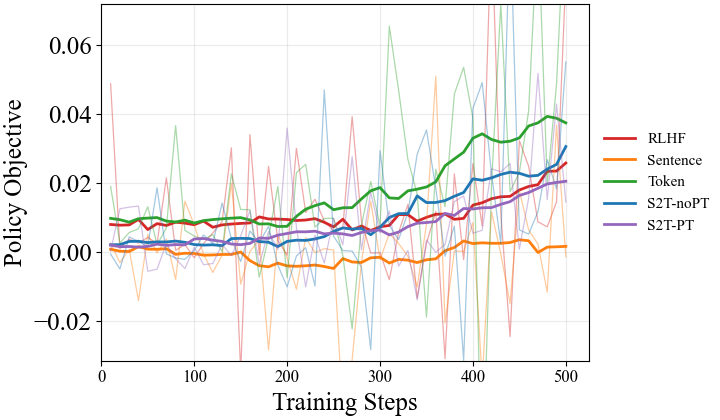}
    \caption{Policy Objective}
    \label{fig:1_policy}
  \end{subfigure}
  \hfill
  \begin{subfigure}{0.49\columnwidth}
    \centering
    \includegraphics[width=\linewidth]{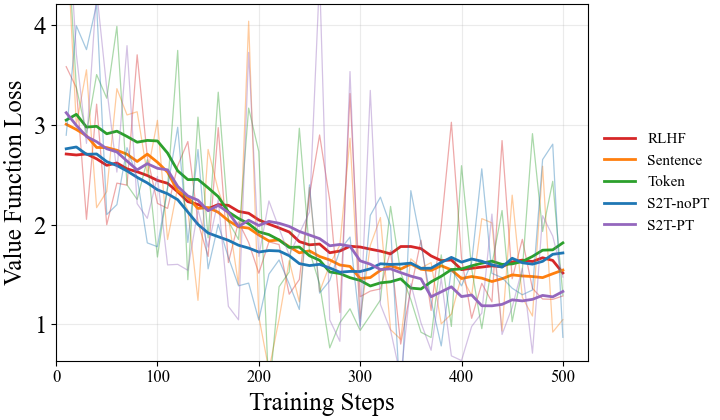}
    \caption{Value Function Loss}
    \label{fig:1_value}
  \end{subfigure}

  \vskip 0.15in

  \begin{subfigure}{0.49\columnwidth}
    \centering
    \includegraphics[width=\linewidth]{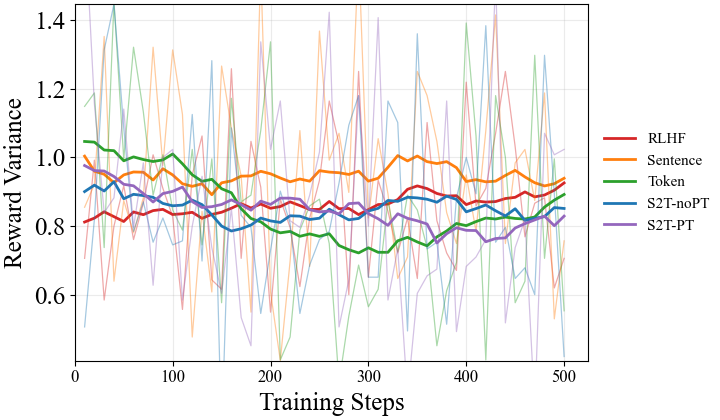}
    \caption{Reward Variance}
    \label{fig:1_reward}
  \end{subfigure}

  \caption{Early-stage training dynamics under learning rate $1\times10^{-5}$.}
  \label{fig:lr1e-5}
\end{figure}


\begin{figure}[!ht]
  \vskip 0.2in
  \centering

  \begin{subfigure}{0.49\columnwidth}
    \centering
    \includegraphics[width=\linewidth]{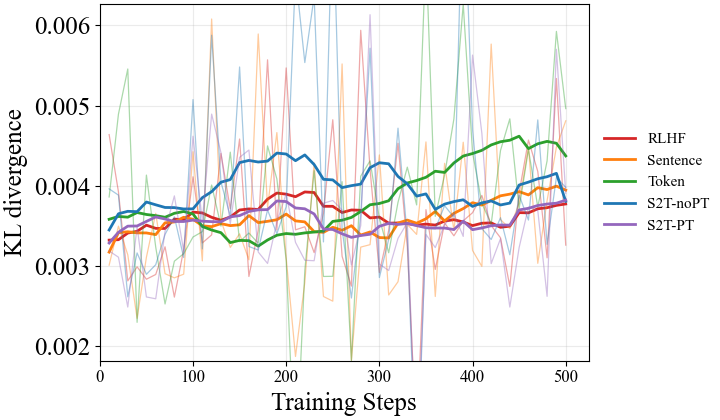}
    \caption{KL Divergence}
    \label{fig:kl01_kl}
  \end{subfigure}
  \hfill
  \begin{subfigure}{0.49\columnwidth}
    \centering
    \includegraphics[width=\linewidth]{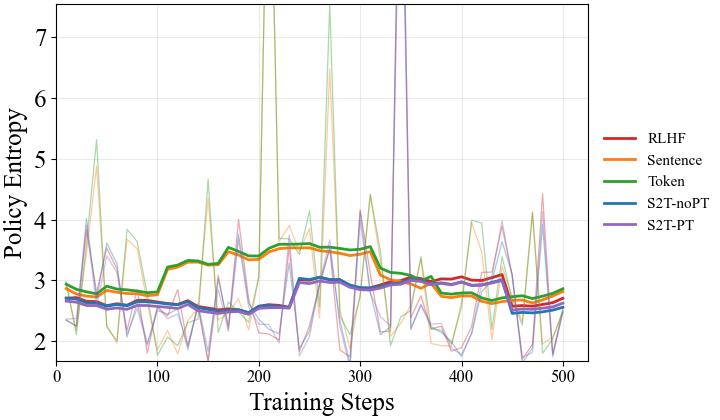}
    \caption{Policy Entropy}
    \label{fig:kl01_entropy}
  \end{subfigure}

  \vskip 0.15in

  \begin{subfigure}{0.49\columnwidth}
    \centering
    \includegraphics[width=\linewidth]{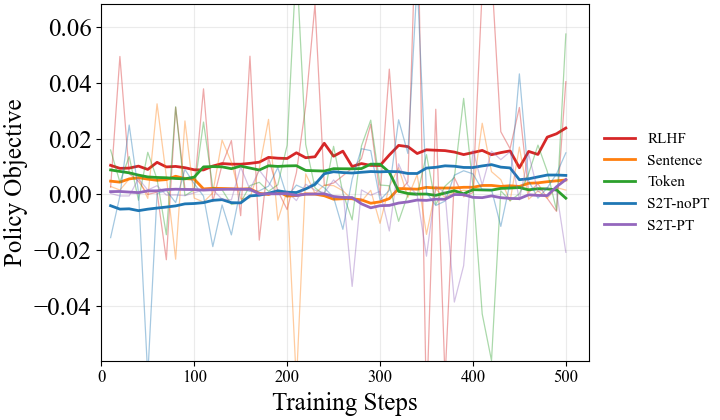}
    \caption{Policy Objective}
    \label{fig:kl01_policy}
  \end{subfigure}
  \hfill
  \begin{subfigure}{0.49\columnwidth}
    \centering
    \includegraphics[width=\linewidth]{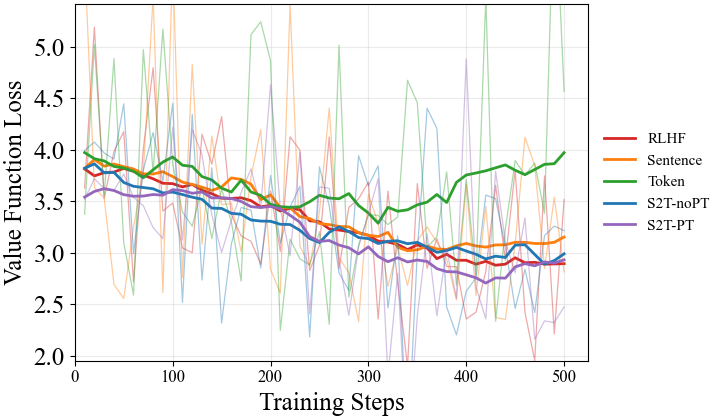}
    \caption{Value Function Loss}
    \label{fig:kl01_value}
  \end{subfigure}

  \vskip 0.15in

  \begin{subfigure}{0.49\columnwidth}
    \centering
    \includegraphics[width=\linewidth]{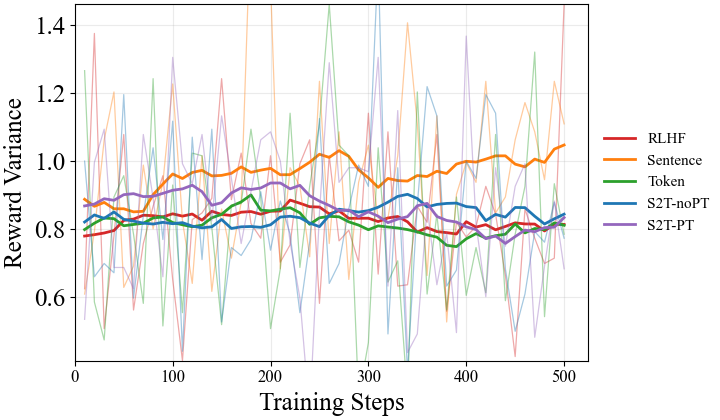}
    \caption{Reward Variance}
    \label{fig:kl01_reward}
  \end{subfigure}

  \caption{Early-stage training dynamics under KL constraint $0.1$.}
  \label{fig:kl01}
\end{figure}

\clearpage

%% file: appendix/app_ex_tab.tex
%
%

\newpage
\section{Supplementary Experimental Results}
\label{app:supplement_ex}

\subsection{Cross-Dataset Preference Evaluation}
\label{app:cross}

\paragraph{Datasets.}
We evaluate the robustness and generalization of S2T-RLHF across a diverse collection of alignment and safety benchmarks. As an in-distribution reference, we use \textbf{HH-RLHF}~\cite{bai2022training}, which consists of human preference comparisons targeting helpfulness and harmlessness. To assess out-of-distribution generalization under adversarial settings, we include \textbf{AdvBench}~\cite{zou2023universal}, a benchmark designed to probe jailbreak-style and instruction-following vulnerabilities. We further consider toxicity-oriented datasets, including \textbf{RealToxicityPrompts}~\cite{gehman2020realtoxicityprompts} and \textbf{ToxiGen}~\cite{hartvigsen2022toxigen}, which focus on harmful, biased, and offensive language generation. In addition, we evaluate on \textbf{SafeRLHF}, a curated safety preference dataset covering refusal behavior, harmlessness, and policy compliance. Together, these datasets span a wide range of prompt distributions, risk profiles, and alignment objectives, enabling a comprehensive evaluation of cross-dataset generalization beyond the training domain. Unless otherwise specified, models are trained on HH-RLHF and evaluated zero-shot on the remaining datasets.

\paragraph{Reward Models.}
We decouple the reward models used for training and evaluation to assess robustness across reward-model choices. During RLHF optimization, all methods are trained using \textbf{RM-Gemma-2B}~\cite{dong2023raft}, which provides the scalar preference signal for policy updates under a standard PPO-based RLHF pipeline. For evaluation, we adopt a stronger and independently trained reward model, \textbf{RM-Gemma-7B}~\cite{dong2023raft}, to measure preference alignment and generalization. RM-Gemma-7B follows a standard instruction-tuned reward modeling pipeline and is widely used in recent RLHF studies. Unless otherwise specified, all reported preference alignment results are evaluated using RM-Gemma-7B.

\begin{table}[!h]
\centering
\small
\caption{Pairwise preference win rates against RLHF under RM-Gemma-7B across datasets.}
\label{tab:winrate-rm7b}
\begin{tabular}{lcccccc}
\toprule
Dataset 
& \multicolumn{2}{c}{vs. RLHF} 
& \multicolumn{2}{c}{vs. ABC} 
& \multicolumn{2}{c}{vs. SCAR} \\
\cmidrule(lr){2-3}\cmidrule(lr){4-5}\cmidrule(lr){6-7}
& Win$\uparrow$ & Lose 
& Win$\uparrow$ & Lose 
& Win$\uparrow$ & Lose \\
\midrule

HH-RLHF   & \textbf{51.1} & 39.5 & \textbf{48.7} & 43.8 & \textbf{45.8} & 44.1 \\
Toxicity  & \textbf{48.4} & 45.4 & 46.9 & \textbf{48.9} & \textbf{51.6} & 44.4 \\
AdvBench  & \textbf{52.5} & 43.4 & \textbf{52.5} & 41.4 & 44.4 & \textbf{49.6} \\
ToxiGen   & \textbf{46.6} & 43.4 & 42.4 & \textbf{46.4} & \textbf{47.7} & 47.5 \\
SafeRLHF  & \textbf{53.5} & 45.3 & \textbf{48.7} & 48.1 & \textbf{50.1} & 45.9 \\

\bottomrule
\end{tabular}
\vskip -0.1in
\end{table}

Table~\ref{tab:winrate-rm7b} reports pairwise preference outcomes evaluated under RM-Gemma-7B across multiple datasets. Overall, S2T-RLHF attains a win advantage over RLHF on all listed datasets, suggesting that introducing stability-oriented decomposition does not harm preference alignment under the same evaluator and can yield modest improvements. Against stronger baselines, namely ABC and SCAR, the results are more dataset-dependent. S2T-RLHF wins on several datasets, for example Toxicity and SafeRLHF against SCAR, while trailing in a small number of pairings, for example Toxicity against ABC and AdvBench against SCAR. Taken together, these cross-dataset outcomes indicate that S2T-RLHF is generally effective for preference alignment, and the remaining failures are concentrated in a few difficult dataset and baseline combinations rather than reflecting a systematic degradation.

\subsection{Model and Reward-Model Robustness}
\label{app:model_rm_robustness}

To assess whether the preference-alignment conclusions depend on a specific policy backbone or training reward model, we conduct additional experiments with two model--reward-model configurations. In the Gemma-based setting, we use Gemma-2-2B as the policy model and RM-Gemma-7B as the training reward model. In the Qwen-based setting, we use Qwen3-4B as the policy model and Skywork-Reward-V2-Qwen3-4B as the training reward model. All other training and evaluation settings, including prompts, compared model pairs, decoding configuration, and win--tie--lose computation, are kept the same as in the main preference evaluation. Preference alignment is evaluated using the same GPT-4o-mini LLM-as-a-Judge protocol as in main experiments.

Table~\ref{tab:model_rm_robustness} reports the pairwise preference results. Across both model--reward-model configurations, S2T-RLHF maintains a clear win advantage over standard RLHF and ABC, and also achieves higher win rates than loss rates against SCAR. These results suggest that the preference-alignment conclusion is not tied to a single policy backbone or training reward model.

\begin{table}[h]
\centering
\caption{Model and reward-model robustness results evaluated using GPT-4o-mini as the LLM-as-a-Judge.}
\label{tab:model_rm_robustness}
\begin{tabular}{llccc}
\toprule
Training setup & Method & Win (\%) $\uparrow$ & Lose (\%) $\downarrow$ & Tie (\%) \\
\midrule
Gemma-2-2B / RM-Gemma-7B & vs. RLHF & 58.1 & 28.4 & 13.5 \\
Gemma-2-2B / RM-Gemma-7B & vs. ABC  & 50.9 & 31.8 & 17.3 \\
Gemma-2-2B / RM-Gemma-7B & vs. SCAR & 45.6 & 35.3 & 19.1 \\
\midrule
Qwen3-4B / Skywork-Reward-V2-Qwen3-4B & vs. RLHF & 56.8 & 29.8 & 13.4 \\
Qwen3-4B / Skywork-Reward-V2-Qwen3-4B & vs. ABC  & 50.9 & 31.5 & 17.6 \\
Qwen3-4B / Skywork-Reward-V2-Qwen3-4B & vs. SCAR & 46.3 & 35.6 & 18.1 \\
\bottomrule
\end{tabular}
\end{table}

Table~\ref{tab:model_rm_robustness} reports the pairwise preference results. Across both model--reward-model configurations, S2T-RLHF maintains a clear win advantage over standard RLHF and ABC, and also achieves higher win rates than loss rates against SCAR. In the Gemma-based setting, S2T-RLHF improves over RLHF by a large margin, with 58.1\% wins versus 28.4\% losses, and remains favorable against ABC and SCAR. The Qwen-based setting shows a similar pattern, with 56.8\% wins against RLHF and consistent win advantages over both ABC and SCAR. The margins against SCAR are smaller than those against RLHF and ABC, which is expected since SCAR is a strong fine-grained credit-assignment baseline. Overall, these results suggest that the preference-alignment conclusion is not tied to a single policy backbone or training reward model.

\subsection{Mechanism-Swap Ablations}
\label{app:mechanism_swap}

The stage-level ablations in the main text examine whether sentence-level allocation, token-level refinement, and their combination are useful. However, they do not isolate whether the gains come only from the hierarchical sentence-to-token structure or also from the specific allocation mechanisms used in each stage. To address this question, we conduct mechanism-swap ablations while keeping the overall two-stage structure fixed.

For Stage I, we replace the Nash-based sentence allocator with two simpler alternatives: uniform sentence allocation and length-proportional allocation. Uniform allocation assigns the same reward share to each sentence, while length-proportional allocation distributes reward according to sentence length. For Stage II, we replace DTAN with uniform within-sentence allocation and a linear+softmax token allocator. These variants preserve the same sentence-to-token decomposition pipeline, but change the mechanism used to allocate reward mass within each stage.

Table~\ref{tab:stage1_mechanism_swap} reports the Stage-I mechanism-swap results. The Nash allocator achieves the best overall trade-off, with the lowest KL-Mean and Obj-$\Delta$, as well as the highest win rate against RLHF. Compared with uniform and length-proportional allocation, Nash allocation provides smoother optimization dynamics while preserving stronger preference alignment.

\begin{table}[h]
\centering
\caption{Stage-I mechanism-swap ablations. Lower KL-Mean and Obj-$\Delta$ indicate more stable optimization. Preference results are evaluated against RLHF.}
\label{tab:stage1_mechanism_swap}
\begin{tabular}{lccccc}
\toprule
Stage-I allocator & KL-Mean $\downarrow$ & Obj-$\Delta$ $\downarrow$ & Win (\%) $\uparrow$ & Tie (\%) & Lose (\%) $\downarrow$ \\
\midrule
Uniform      & 0.00455 & 0.00622 & 51.4 & 9.8 & 38.8 \\
Length-prop. & 0.00443 & 0.00229 & 51.1 & 7.6 & 41.3 \\
Nash (ours)  & 0.00418 & 0.00101 & 53.3 & 8.4 & 38.3 \\
\bottomrule
\end{tabular}
\end{table}

Table~\ref{tab:stage2_mechanism_swap} reports the Stage-II mechanism-swap results. DTAN achieves the strongest overall configuration, with the lowest KL-Mean and Obj-$\Delta$, and the highest win rate against RLHF. Uniform within-sentence allocation is stable but less effective in preference alignment, while the linear+softmax allocator improves win rate but yields larger objective oscillations than DTAN. This suggests that normalized token-level refinement is helpful, but the specific DTAN parameterization provides a better stability--alignment trade-off.

\begin{table}[h]
\centering
\caption{Stage-II mechanism-swap ablations. Lower KL-Mean and Obj-$\Delta$ indicate more stable optimization. Preference results are evaluated against RLHF.}
\label{tab:stage2_mechanism_swap}
\begin{tabular}{lccccc}
\toprule
Stage-II allocator & KL-Mean $\downarrow$ & Obj-$\Delta$ $\downarrow$ & Win (\%) $\uparrow$ & Tie (\%) & Lose (\%) $\downarrow$ \\
\midrule
Uniform          & 0.00572 & 0.00132 & 48.5 & 12.4 & 39.1 \\
Linear+softmax   & 0.00452 & 0.00192 & 52.9 & 8.6  & 38.5 \\
DTAN (ours)      & 0.00418 & 0.00101 & 53.3 & 8.4  & 38.3 \\
\bottomrule
\end{tabular}
\end{table}

Overall, these mechanism-swap ablations show that the gains of S2T-RLHF are not explained by hierarchical decomposition alone. The Nash-based Stage-I allocator improves sentence-level reward localization, while DTAN provides bounded and adaptive within-sentence refinement. Their combination yields the best stability--alignment trade-off among the tested configurations.

\subsection{Training-time Reward Monitoring.}
\label{app:train_time_re}
Figure~\ref{fig:train_reward} visualizes the reward trajectories recorded during training, as measured by the optimization reward model (RM-Gemma-2B). This diagnostic is included to illustrate how different credit assignment strategies interact with the training reward signal over time, rather than to serve as a performance metric. We observe that token-level methods (e.g., ABC and SCAR) tend to exhibit larger short-term reward fluctuations, suggesting more aggressive and localized policy updates. In contrast, S2T-RLHF maintains comparatively smoother reward trajectories throughout training, indicating more structured and controlled utilization of the reward signal. Importantly, these differences in training-time reward dynamics do not necessarily translate into large gaps in final preference alignment scores (Table~\ref{tab:winrate-rm7b}), highlighting the limited interpretability of raw reward values during RLHF optimization and motivating the need for stability-aware evaluation.

\begin{figure}[th]
  \centering
  \includegraphics[width=0.65\columnwidth]{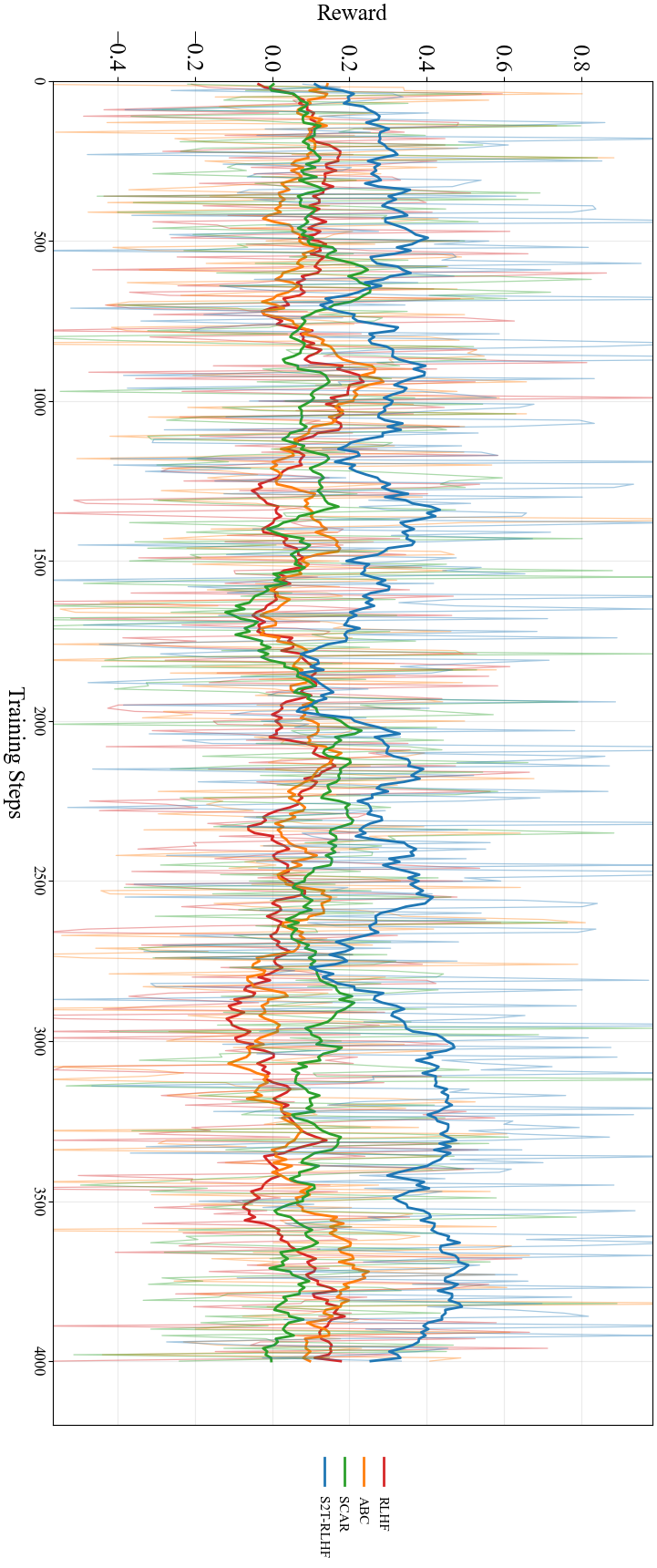}
  \vspace{-0.25cm}
  \caption{Training-time reward trajectories under the optimization reward model (RM-Gemma-2B).}
  \label{fig:train_reward}
\end{figure}

\clearpage

%% file: appendix/app_case.tex
\newpage
\section{A Case Study on Noise Amplification under Fine-Grained Reward Decomposition.}
\label{app:case}

Figure~\ref{fig:case} presents an illustrative training run under a more aggressive learning rate of $1\times10^{-5}$. While all methods exhibit increasing KL divergence, indicating faster policy drift from the reference model, their reward dynamics differ markedly. For ABC and SCAR, the KL increase is accompanied by pronounced oscillations and a sustained degradation of the training reward that crosses below zero and remains negative for a substantial portion of training. This pattern, characterized by continued policy drift together with worsening reward, is consistent with noise amplification under highly localized credit signals rather than improved preference optimization. Moreover, ABC exhibits a clear regime change in the later stage, around $\sim$450 steps, where the KL curve becomes highly volatile with large spikes, indicating unstable and noise-driven updates.

In contrast, both variants of S2T-RLHF maintain smoother and more monotonic KL growth under the same learning rate, without entering the negative-reward regime observed for ABC and SCAR. Although the policy still departs from the reference, the reward signal remains comparatively stable throughout training, suggesting that sentence-level allocation with bounded token-level refinement mitigates noise amplification during aggressive optimization. Overall, this case study supports the view that overly fine-grained reward decomposition can destabilize RLHF training by concentrating noisy preference signals, especially under larger learning rates.

\begin{figure}[ht]
  \vskip 0.2in
  \centering

  \begin{subfigure}{0.49\columnwidth}
    \centering
    \includegraphics[width=\linewidth]{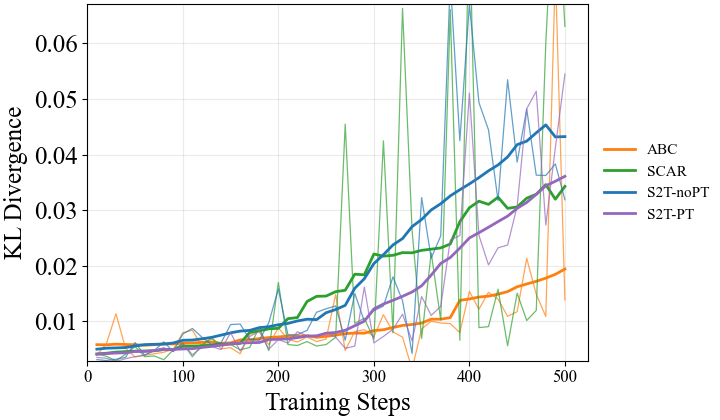}
    \caption{KL Divergence}
    \label{fig:case_1_kl}
  \end{subfigure}
  \hfill
  \begin{subfigure}{0.49\columnwidth}
    \centering
    \includegraphics[width=\linewidth]{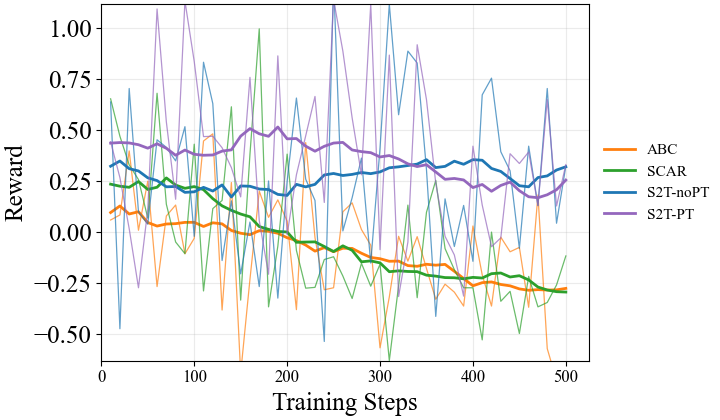}
    \caption{Reward Mean}
    \label{fig:case_1_r}
  \end{subfigure}

  \caption{Illustrative training dynamics under fine-grained reward decomposition with a larger learning rate ($1\times10^{-5}$).}
  \label{fig:case}
\end{figure}

%% file: appendix/app_compute.tex
\newpage
\section{Computational Resources and Runtime Analysis}
\label{app:compute}

\paragraph{Hardware and training budget.}
All experiments were conducted on a single A100-SXM4-80GB GPU. Unless otherwise specified, all methods were trained under the same PPO optimization budget, prompt batch size, mini-batch size, KL coefficient, LoRA configuration, and evaluation protocol. We use Gemma-2-9B as the policy model and RM-Gemma-2B as the reward model.

\paragraph{Per-step runtime comparison.}
Table~\ref{tab:runtime_comparison} reports the average per-step runtime of different credit assignment methods. S2T-RLHF incurs additional computation compared with standard RLHF and ABC, but remains more efficient than SCAR. Specifically, S2T-RLHF requires 21.18 seconds per PPO step, corresponding to an 81.5\% overhead over RLHF, whereas SCAR requires 26.98 seconds per step, corresponding to a 131.2\% overhead. This result shows that S2T-RLHF introduces nontrivial but moderate computational overhead relative to Shapley-based token decomposition.

\begin{table}[h]
\centering
\caption{Per-step runtime comparison. All methods are evaluated under the same hardware and PPO training budget.}
\label{tab:runtime_comparison}
\begin{tabular}{lcc}
\toprule
Method & Time / step (s) $\downarrow$ & Overhead vs. RLHF (\%) $\downarrow$ \\
\midrule
RLHF     & 11.67 & 0.0 \\
ABC      & 13.24 & 13.5 \\
SCAR     & 26.98 & 131.2 \\
S2T-RLHF & 21.18 & 81.5 \\
\bottomrule
\end{tabular}
\end{table}

\paragraph{Component-level runtime breakdown.}
To identify the source of the additional cost, Table~\ref{tab:s2t_runtime_breakdown} decomposes the runtime of S2T-RLHF into the RLHF backbone, perturbation-based sentence influence estimation, sentence-level bargaining, and DTAN refinement. The breakdown shows that the overhead is dominated by sentence influence estimation, which requires 8.81 seconds per step. In contrast, sentence-level bargaining takes only 0.12 seconds per step, and DTAN refinement takes 0.61 seconds per step. Thus, the main computational bottleneck of S2T-RLHF is reward-model evaluation under sentence perturbations, rather than the bargaining solver or the token-level refinement module.

\begin{table}[h]
\centering
\caption{Component-level runtime breakdown of S2T-RLHF. Minor discrepancies are due to rounding.}
\label{tab:s2t_runtime_breakdown}
\begin{tabular}{lccc}
\toprule
Component & Time / step (s) $\downarrow$ & Added over RLHF (s) & Share of total step time (\%) \\
\midrule
Standard RLHF backbone       & 11.67 & --   & 55.1 \\
\midrule
Sentence influence  & 8.81  & 8.81 & 41.6 \\
Sentence bargaining & 0.12  & 0.12 & 0.6 \\
DTAN refinement     & 0.61  & 0.61 & 2.9 \\
\midrule
S2T-RLHF total      & 21.18 & 9.54 & 100.0 \\
\bottomrule
\end{tabular}
\end{table}

\paragraph{Discussion.}
These results indicate that the computational cost of S2T-RLHF is primarily associated with Stage I, especially perturbation-based sentence influence estimation. This cost is expected because Stage I requires additional reward-model evaluations to estimate how sentence-level semantic perturbations affect the sequence-level reward. By contrast, the Nash-style bargaining solver and DTAN-based token refinement are lightweight. Future implementations may further reduce the overhead by caching reward-model evaluations, reducing the number of perturbations, batching sentence perturbations more aggressively, or replacing perturbation-based influence estimation with a learned amortized estimator.